\newtheorem{theorem}{Theorem}[section]
\newtheorem{lemma}[theorem]{Lemma}
\newtheorem{corollary}[theorem]{Corollary}
\theoremstyle{definition}
\newtheorem{definition}[theorem]{Definition}
\theoremstyle{remark}
\newtheorem{remark}[theorem]{Remark}
\DeclareMathOperator{\argmin}{argmin}
\DeclareMathOperator{\sign}{sign}
\newcommand{\R}{\mathbb{R}}
\newcommand{\RSnote}[1]{\textcolor{red}{[{\em {\bf **RS Note:} #1}]}}
\author{Shihao Zhang}
\address{Department of Mathematics, University of California San Diego}
\email{shz051@ucsd.edu}
\author{Rayan Saab}
\address{Department of Mathematics and Hal{\i}c{\i}o\u{g}lu Data Science Institute, University of California San Diego}
\email{rsaab@ucsd.edu}
\title{Theoretical Guarantees for Low-Rank Compression of Deep Neural Networks}
\begin{document}

\begin{abstract}
Deep neural networks have achieved state-of-the-art performance across numerous applications, but their high memory and computational demands present significant challenges, particularly in resource-constrained environments. Model compression techniques, such as low-rank approximation, offer a promising solution by reducing the size and complexity of these networks while only minimally sacrificing accuracy. In this paper, we develop an analytical framework for data-driven post-training low-rank compression. We prove three recovery theorems under progressively weaker assumptions about the approximate low-rank structure of activations, modeling deviations via noise. Our results represent a step toward explaining why data-driven low-rank compression methods outperform data-agnostic approaches and towards theoretically grounded compression algorithms that reduce inference costs while maintaining performance.
\end{abstract}
\maketitle

\section{Introduction}\label{sec:intro}


Over the past decade, deep neural networks (DNNs) have achieved remarkable success across a wide range of applications, with convolutional neural networks (CNNs) excelling in computer vision and transformers revolutionizing natural language processing. However, these achievements come at the cost of significant memory and computational demands, primarily due to the highly over-parameterized nature of modern neural networks. Such models require substantial memory to store their weights and considerable computational resources for inference. Consequently, the demand for model compression techniques has grown, particularly in contexts where storage efficiency and adaptability to mobile devices are crucial \cite{deng2020model,  wang2016accelerating}. The urgency of this challenge has been amplified by the growing focus on compressing large language models, which has become an area of intense research interest \cite{xu2023survey, zhu2023survey}.



\subsection{Setting and notation}\label{sec:notation} To explain the challenges and opportunities associated with neural network compression, let us introduce a standard neural network model, namely the  $L$-layer multi-layer perceptron. An $L$-layer multi-layer perceptron is a function $\mathbf{\Phi}: \R^{N_0} \to \R^{N_L}$ that acts on a sample of data $x\in \mathbb{R} ^ {N_0}$ via successive compositions of affine and non-linear functions:
\begin{equation}
    \mathbf{\Phi}(x):=\phi^{(L)}\circ A^{(L)}\circ \cdots \phi^{(1)}\circ A^{(1)}(x).
\end{equation}
 Here each $\phi^{(i)}:\mathbb{R} ^ {N_i} \longrightarrow \mathbb{R} ^ {N_i}$ is a nonlinear ``activation" function, with a popular choice being the ReLU activation function $\phi^{(i)}=\rho$. With a slight abuse of notation ReLU acts elementwise via $$\rho(x) = \begin{cases}
x, & \text{if } x \geq 0 \\
0, & \text{otherwise}
\end{cases}.$$ Meanwhile, each $A^{(i)}:\mathbb{R} ^ {N_{i-1}} \longrightarrow \mathbb{R} ^ {N_i}$ is simply an affine map given by $A^{(i)}(z)={W^{(i)}}^\top z+b^{(i)}$. Here, $W^{(i)}\in \mathbb{R} ^ {N_{i-1}\times N_{i}}$, $i=1, \ldots, L$, are weight matrices, $b^{(i)}\in \mathbb{R} ^ {N_i}$ are bias vectors. We call $z = \phi^{(i-1)}A^{(i-1)}\circ \cdots\circ \phi^{(1)}A^{(1)}(x)$ the activation of the $(i-1)$th layer associated with an input $x$, and $A^{(i)}(z)$ the pre-activation of the $i$-th layer. Consequently,   $\phi^{(i)}\circ A^{(i)}(z) $ is the activation of the $i$-th layer.
By adding a coordinate $1$ to $z$ and treating $b^{(i)}$ as an extra row appended to the weight matrix $W^{(i)}$, we can henceforth ignore the bias terms in our analysis without loss of generality. 

Given a data set $X_0 \in \mathbb{R} ^ {m\times N_{0}}$ with vectorized data stored as rows, and a
trained neural network $\mathbf{\Phi}$ with weight matrices $W^{(i)}$,
let $\mathbf{\Phi}^{(i)}$
denote the original network
truncated after layer $i$. The resulting  {activations} from the $i$-th layer
are then $X^{(i)}:=\mathbf{\Phi}^{(i)}(X_0) =\phi^{(i)}(X^{(i-1)}W^{(i)})$, while $X^{(i-1)}W^{(i)}$ are the associated \emph{pre-activations}.
For notational convenience, we define $X^{(0)}=X_0$. Furthermore, we assume $X_0$ is data chosen from a separate dataset independent of the training data used to initially train the parameters $W^{(i)}, i=1,2,.......,L$. The infinity norm of a matrix $\|\cdot\|_{\infty}$ always refers to the element-wise $\ell_\infty$-norm in this paper.

\subsection{Background and motivation}\label{subsec:Background}

Common approaches for compressing deep neural networks include low-rank approximation \cite{jaderberg2014speeding, zhang2015accelerating, chen2021drone}, pruning or sparsification \cite{hassibi1993optimal, han2015learning, kitaev2020reformer}, quantization \cite{zhang2023post, zhang2023spfq, jacob2018quantization}, and knowledge distillation \cite{liang2023homodistil, choudhary2020comprehensive, neill2020overview}. Among these, low-rank decomposition reduces the number of parameters in an $L$-layer neural network by replacing weight matrices $W^{(i)} \in \R^{N_{i-1} \times N_i}$ with a product of low-rank matrices. This reduces the parameter count for layer $i$ from $N_{i-1}N_i$ to $r_i(N_{i-1} + N_i)$, where $r_i \ll \min\{N_{i-1}, N_i\}$ denotes the rank of the approximating matrix.
This not only reduces the amount of memory needed to store these fewer parameters, but also accelerates inference due to the reduced cost of matrix multiplication.


A straightforward approach to low-rank approximation involves using the singular value decomposition (SVD) to replace  weight matrices $W^{(i)}$ with a product of low-rank factors. While conceptually simple, this method often yields suboptimal results unless followed by fine-tuning, which essentially involves re-training the low-rank factors \cite{denton2014exploiting, lebedev2014speeding, jaderberg2014speeding}. In contrast, data-driven low-rank approximation algorithms make use of a sample of input data to guide the neural network compression. These data-driven methods tend to perform well in practice, even before fine-tuning, and typically require less extensive fine-tuning than their data-agnostic counterparts as documented, for example, in \cite{zhang2015accelerating,li2018constrained, chen2021drone}. Indeed, numerical evidence presented in \cite{yu2023compressing} demonstrate that the pre-activations $X^{(i-1)}W^{(i)}$ often exhibit more pronounced low-rank characteristics than the weight matrix $W^{(i)}$ itself. In turn, heuristically, this suggests that by approximating $W^{(i)}$ with a low-rank matrix that preserves the important singular values of $X^{(i-1)}W^{(i)}$ rather than $W^{(i)}$, one can obtain better performance. 


Despite the observed advantages of data-dependent methods, they share certain limitations of data-agnostic methods. For instance, they rarely explicitly account  for the non-linear activation functions in the network and are generally not supported by rigorous theoretical guarantees.

In this paper, we develop an analytical framework that theoretically justifies why incorporating input data in post-training low-rank compression yields a better compressed model compared to data-agnostic approaches. This may help clarify why such methods provide a better initialization for fine-tuning, resulting in reduced fine-tuning time and improved approximation of the original network. As alluded to above, a central motivating theme in our framework is the observation that existing data-dependent algorithms primarily focus on minimizing the reconstruction error of the (pre-)activations during weight matrix compression.

A significant challenge in explaining the effectiveness of low-rank compression algorithms lies in the fact that weight matrices from pretrained models are typically not exactly low-rank. This makes it difficult to use traditional approximation error bounds to justify their performance, contributing to the scarcity of theoretical analysis and error bounds in the existing literature on low-rank compression. To address this, we observe that a low-rank approximation problem—defined by minimizing the Frobenius norm under a nuclear norm constraint—can often be interpreted as the dual formulation of a low-rank recovery problem, where the objective is to minimize the nuclear norm under a Frobenius norm constraint. This perspective allows us to reframe the low-rank weight approximation problem as a low-rank recovery task, assuming the existence of an underlying unknown (approximately) low-rank model. In this framework, the pretrained neural network can be viewed as a noisy observation of the underlying low-rank model, and the goal of compression is to recover the low-rank model by minimizing the reconstruction error of the (pre-)activations. 

\subsection{Contributions}\label{sec:contribution}

We establish three low-rank recovery theorems under realistic and increasingly weaker assumptions, leveraging techniques from compressed sensing theory and matrix algebra to show that approximately accurate recovery is achievable within our proposed framework. To the best of our knowledge, these recovery theorems represent the first formal attempt to provide theoretical support for the design of data-driven, post-training low-rank compression methods. Below, we summarize our main results. Here, one should think of $X$ as the input activations from the previous layer in the pre-trained model $\mathbf{\Phi}$, and  of $\widecheck{X}$ as the corresponding input activations for the low-rank model $\widecheck{\mathbf{\Phi}}$.

\begin{theorem}(Abridged version of \cref{thm:recovery one})\label{thm:short one}
    Let $X, \widecheck{X} \in \mathbb{R}^{d_1 \times d}$, $d_1\geq d$ be full rank  and $W\in \mathbb{R}^{d \times d_2}$. Assume there exists a rank-$r$ matrix $M\in \mathbb{R}^{d \times d_2}$ such that $\|XW - (\widecheck{X}M+G)\|_{op}^2\leq \epsilon d_1$, where $G\in \mathbb{R}^{d_1 \times d_2}$ is a zero-mean sub-Gaussian matrix with i.i.d entries of variance $\sigma^2$. Then, 
    $\hat{M}:=\underset{\mathrm{rank}(Z)\leq r}{\argmin} \|XW-\widecheck{X}Z\|_F$ 
    satisfies $$\frac{\|\widecheck{X}M-\widecheck{X}\hat{M}\|_F^2}{d_1 d_2}\lesssim r\cdot \frac{d_2+d}{d_1 d_2}\sigma^2 +\epsilon$$ with high probability.
\end{theorem}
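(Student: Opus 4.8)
I would prove this with a \emph{basic inequality} argument in the style of reduced-rank regression; the one twist is that, to get the dimension $d$ rather than the ambient dimension $d_1$ in the noise term, I would work with a \emph{projected} noise matrix. Throughout I may assume $r\le\min\{d,d_2\}$, since otherwise the constraint $\mathrm{rank}(Z)\le r$ is vacuous in that direction and one may replace $r$ by $\min\{d,d_2,r\}$ without changing $\hat{M}$.

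\emph{Step 1 (reduction via a thin factorization).} Since $\widecheck{X}\in\R^{d_1\times d}$ is full rank with $d_1\ge d$, I would write $\widecheck{X}=QR$ with $Q\in\R^{d_1\times d}$, $Q^\top Q=I_d$, and $R\in\R^{d\times d}$ invertible. As $Z\mapsto RZ$ is a linear bijection preserving the rank constraint and $\widecheck{X} Z=Q(RZ)$, minimizing $\|XW-\widecheck{X} Z\|_F$ over $\mathrm{rank}(Z)\le r$ is the same as minimizing $\|XW-QY\|_F$ over $\mathrm{rank}(Y)\le r$ with $Y=RZ$; thus $R\hat{M}$ minimizes the latter, while $RM$ is feasible. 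Since $\|QA\|_F=\|A\|_F$ for all $A$, the quantity to control is $\|\widecheck{X} M-\widecheck{X}\hat{M}\|_F=\|Q\Delta\|_F=\|\Delta\|_F$, where $\Delta:=R\hat{M}-RM$ has rank at most $2r$.

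\emph{Step 2 (basic inequality).} Set $E:=XW-\widecheck{X} M-G$, so that $\|E\|_{op}\le\sqrt{\epsilon d_1}$ by hypothesis and $XW-\widecheck{X}\hat{M}=(G+E)-Q\Delta$. Feasibility of $M$ gives $\|XW-\widecheck{X}\hat{M}\|_F\le\|XW-\widecheck{X} M\|_F$, i.e.\ $\|(G+E)-Q\Delta\|_F\le\|G+E\|_F$; squaring and using $\|Q\Delta\|_F=\|\Delta\|_F$,
$$\|\Delta\|_F^2\le 2\langle Q^\top(G+E),\,\Delta\rangle\le 2\|Q^\top(G+E)\|_{op}\,\|\Delta\|_*\le 2\sqrt{2r}\,\big(\|Q^\top G\|_{op}+\sqrt{\epsilon d_1}\,\big)\|\Delta\|_F,$$
using $\langle A,QB\rangle=\langle Q^\top A,B\rangle$, trace duality $\langle A,B\rangle\le\|A\|_{op}\|B\|_*$, the bound $\|\Delta\|_*\le\sqrt{2r}\,\|\Delta\|_F$ (valid since $\mathrm{rank}(\Delta)\le 2r$), the triangle inequality for $\|\cdot\|_{op}$, and $\|Q^\top E\|_{op}\le\|E\|_{op}$. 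Cancelling a factor $\|\Delta\|_F$ (the case $\Delta=0$ being trivial) and squaring, $\|\Delta\|_F^2\le 16\,r\,(\|Q^\top G\|_{op}^2+\epsilon d_1)$.

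\emph{Step 3 (the projected noise, and conclusion).} It remains to bound $\|Q^\top G\|_{op}$, where $Q^\top G\in\R^{d\times d_2}$ and $Q$ is deterministic. For unit $u\in\R^d$, $v\in\R^{d_2}$, the scalar $u^\top Q^\top G v=(Qu)^\top G v=\sum_{k,l}(Qu)_k G_{kl}v_l$ is mean-zero and sub-Gaussian with parameter $\lesssim\sigma\|Qu\|_2\|v\|_2=\sigma$, using $\|Qu\|_2=\|u\|_2=1$ and independence of the entries of $G$. A standard $\tfrac{1}{4}$-net argument over $S^{d-1}\times S^{d_2-1}$ (or, for Gaussian $G$, rotation invariance, which makes $Q^\top G$ a $d\times d_2$ matrix with i.i.d.\ $\mathcal{N}(0,\sigma^2)$ entries) then yields $\|Q^\top G\|_{op}\lesssim\sigma(\sqrt d+\sqrt{d_2})$ with probability at least $1-2e^{-c(d+d_2)}$. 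On this event $\|\Delta\|_F^2\lesssim r\sigma^2(d+d_2)+r\epsilon d_1$, so dividing by $d_1 d_2$ and using $r\le d_2$ to absorb the residual term,
$$\frac{\|\widecheck{X} M-\widecheck{X}\hat{M}\|_F^2}{d_1 d_2}=\frac{\|\Delta\|_F^2}{d_1 d_2}\lesssim r\cdot\frac{d+d_2}{d_1 d_2}\sigma^2+\epsilon,$$
which is the claim.

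I expect the delicate part to be Step 3: obtaining $d$ rather than $d_1$ in the noise term forces one to work with the projection $Q^\top G$, whose entries need not be independent when $G$ is merely sub-Gaussian, so the operator-norm estimate must be proved directly at the level of the bilinear form (sub-Gaussian concentration plus a union bound over nets) rather than quoted from a standard i.i.d.\ random-matrix bound. A secondary, lower-risk point is the bookkeeping that makes the residual collapse to a clean $\epsilon$ (the inequality $r\le d_2$), together with verifying that the factorization in Step 1 genuinely preserves both rank-$r$ feasibility and the target error $\|\widecheck{X}(M-\hat{M})\|_F$.
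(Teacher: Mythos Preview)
Your proof is correct and reaches the same bound with the same probability estimate, but the route differs from the paper's in one structural respect. The paper first derives the \emph{explicit} optimizer via Eckart--Young: writing $\widecheck{X}=U\Sigma V^\top$ (compact SVD) and $Z=(\widecheck{X}^\top\widecheck{X})^{1/2}M$, $\tilde Z=Z+\tilde G+\tilde E$ with $\tilde G=VU^\top G$, it shows that $(\widecheck{X}^\top\widecheck{X})^{1/2}\hat M=[\tilde Z]_r$ and then bounds $\|Z-[\tilde Z]_r\|_F\le\sqrt{2r}\,\|Z-[\tilde Z]_r\|_2$ via Weyl's inequality, obtaining $2\sqrt{2r}\,\|\tilde G+\tilde E\|_2$. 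You instead use only the \emph{variational} characterization of $\hat M$ (the basic inequality $\|(G+E)-Q\Delta\|_F\le\|G+E\|_F$) together with trace duality $\langle A,B\rangle\le\|A\|_{op}\|B\|_*$ and $\|\Delta\|_*\le\sqrt{2r}\,\|\Delta\|_F$. Both arguments orthogonalize $\widecheck{X}$ (you via QR, the paper via SVD) precisely so that the noise appears in the $d$-dimensional projected form $Q^\top G$ (resp.\ $VU^\top G$), whose operator norm is $\lesssim\sigma(\sqrt d+\sqrt{d_2})$; and both exploit $\mathrm{rank}(\Delta)\le 2r$. What the paper's route buys is the closed-form expression $\hat M=(\widecheck{X}^\top\widecheck{X})^{-1/2}[(\widecheck{X}^\top\widecheck{X})^{1/2}\widecheck{X}^\dagger(XW)]_r$, which is of independent algorithmic interest; what your basic-inequality route buys is that it avoids Weyl's theorem and the explicit SVD truncation altogether, and would transfer more readily to settings where the feasible set is not exactly rank-$r$ matrices. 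Your remark that the entries of $Q^\top G$ need not be independent for general sub-Gaussian $G$ is well taken; the paper sidesteps this by observing that the \emph{rows} of $(VU^\top G)^\top=G^\top UV^\top$ remain independent, isotropic, and sub-Gaussian (since the columns of $G$ are), and then invokes a Vershynin-type operator norm bound---an alternative to your direct bilinear-form-plus-net argument.
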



This theorem implies that if the pre-activations in a layer of the original neural network can be represented as a noisy version of those from an underlying ``compressed" network (i.e., a network with a low-rank weight matrix), then the low-rank matrix can be approximately recovered by solving a minimization problem. Moreover, the mean squared error decreases linearly with the dimensions, and it is noteworthy that the solution to this optimization problem can be efficiently computed using an appropriate singular value decomposition (as can be seen from the proof).

\begin{theorem}(Abridged version of \cref{cor:recovery two})\label{thm:short two}
    Let $X, \widecheck{X} \in \mathbb{R}^{d_1 \times d}$ and $W\in \mathbb{R}^{d \times d_2}$. Suppose there exists $M\in \mathbb{R}^{d \times d_2}$ such that $\widecheck{X}M$ is approximately rank-$r$ (\cref{def:approx lr}) and $\|XW - (\widecheck{X}M+G)\|_F^2\leq \epsilon d_1 d_2$, where $G\in \mathbb{R}^{d_1 \times d_2}$ has independent zero-mean bounded random entries. We assume $\|\widecheck{X}M\|_{\infty}\leq \alpha$ and $\|G\|_{\infty}\leq \beta$.
    Let $\Omega:=\{N: N\in \mathbb{R}^{d \times d_2}, \|\widecheck{X}N\|_*\leq\alpha \sqrt{r d_1 d_2};\ \|\widecheck{X}N\|_{\infty}\leq \alpha \}$.
    Then, minimizing the linear reconstruction $\hat{M}\in\underset{Z\in \Omega}{\argmin} \|XW-\widecheck{X}Z\|_F$ ensures that the mean square error satisfies: $$\frac{\|\widecheck{X}M-\widecheck{X}\hat{M}\|_F^2}{d_1 d_2}\lesssim (\alpha^2+\alpha\beta) \sqrt{\frac{r(d_1+d_2)}{d_1 d_2}}+\epsilon$$ with high probability.
\end{theorem}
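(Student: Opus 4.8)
The plan is to read $\hat M$ as the solution of a nuclear‑norm‑constrained linear least‑squares problem with a ``noisy'' right‑hand side, and to run the classical ``basic inequality'' argument (as in the matrix Lasso / matrix completion literature) specialised to the full‑observation design $Z\mapsto\widecheck X Z$. Write $Y:=XW$ and split $Y=\widecheck X M + G + E$, where by hypothesis $E:=XW-\widecheck X M-G$ satisfies $\|E\|_F^2\le\epsilon d_1 d_2$. Before anything else one must produce a feasible surrogate for $M$ inside $\Omega$: since $\widecheck X M$ need not have small nuclear norm (it is only \emph{approximately} rank $r$), we pass to $M_\star$ with $\widecheck X M_\star$ a rank‑$r$ truncation of $\widecheck X M$ inside the column space of $\widecheck X$. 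The entrywise bound $\|\widecheck X M\|_\infty\le\alpha$ forces $\|\widecheck X M_\star\|_F\le\|\widecheck X M\|_F\le\alpha\sqrt{d_1 d_2}$, hence $\|\widecheck X M_\star\|_*\le\sqrt r\,\|\widecheck X M_\star\|_F\le\alpha\sqrt{r d_1 d_2}$, which is exactly the nuclear‑norm budget defining $\Omega$; the approximate‑rank hypothesis \cref{def:approx lr} is precisely what is then used to keep both the residual $R:=\widecheck X M-\widecheck X M_\star$ and its effect on the $\ell_\infty$ constraint small.

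With a feasible surrogate in hand, optimality of $\hat M$ over $\Omega$ gives $\|Y-\widecheck X\hat M\|_F^2\le\|Y-\widecheck X M_\star\|_F^2$. Writing $\Delta:=\widecheck X\hat M-\widecheck X M_\star$ and expanding the squares against $Y=\widecheck X M_\star+R+G+E$ yields
\[
\|\Delta\|_F^2\ \le\ 2\langle G,\Delta\rangle\ +\ 2\langle E,\Delta\rangle\ +\ 2\langle R,\Delta\rangle .
\]
The term $2\langle E,\Delta\rangle$ is dispatched by Cauchy--Schwarz and Young's inequality, $2\langle E,\Delta\rangle\le\tfrac12\|\Delta\|_F^2+2\|E\|_F^2\le\tfrac12\|\Delta\|_F^2+2\epsilon d_1 d_2$, which is the source of the additive $\epsilon$ in the conclusion. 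For the other two inner products I would use trace duality, $\langle A,B\rangle\le\|A\|_{op}\|B\|_*$, together with the fact that both $\widecheck X\hat M$ and $\widecheck X M_\star$ have nuclear norm at most $\alpha\sqrt{r d_1 d_2}$, so that $\|\Delta\|_*\le2\alpha\sqrt{r d_1 d_2}$; this converts $\langle G,\Delta\rangle$ into $\alpha\sqrt{r d_1 d_2}\,\|G\|_{op}$ and $\langle R,\Delta\rangle$ into $\alpha\sqrt{r d_1 d_2}\,\|R\|_{op}$, the latter being controlled by \cref{def:approx lr} and producing the $\alpha^2$‑term after normalisation.

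It remains to bound $\|G\|_{op}$ for a $d_1\times d_2$ matrix with independent, mean‑zero entries bounded by $\beta$ in absolute value; by the matrix Bernstein inequality (equivalently, symmetrisation plus the Bandeira--van Handel estimate for the expectation, followed by Talagrand concentration for the deviation, the operator norm being the supremum of a sub‑Gaussian process with variance proxy $\le\beta^2$), $\|G\|_{op}\le C\beta(\sqrt{d_1}+\sqrt{d_2})$ with probability at least $1-2e^{-c(d_1+d_2)}$. Plugging this in, absorbing $\tfrac12\|\Delta\|_F^2$ into the left‑hand side, bounding $\|\widecheck X M-\widecheck X\hat M\|_F\le\|\Delta\|_F+\|R\|_F$, and dividing by $d_1 d_2$ produces
\[
\frac{\|\widecheck X M-\widecheck X\hat M\|_F^2}{d_1 d_2}\ \lesssim\ \alpha\beta\sqrt{\frac{r(d_1+d_2)}{d_1 d_2}}\ +\ \alpha^2\sqrt{\frac{r(d_1+d_2)}{d_1 d_2}}\ +\ \epsilon ,
\]
which is the claimed estimate after intersecting, via a union bound, the high‑probability event for $\|G\|_{op}$ with whatever concentration is hidden in $\|E\|_F^2\le\epsilon d_1 d_2$.

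The routine parts here are the basic inequality, the Young and Cauchy--Schwarz steps, and the random‑matrix bound. The real work — and the main obstacle — is the feasibility/bias bookkeeping around \cref{def:approx lr}: choosing the rank‑$r$ surrogate $M_\star$ so that $\widecheck X M_\star$ simultaneously lies in the range of $\widecheck X$, respects \emph{both} constraints defining $\Omega$ (including the subtlety that a rank‑$r$ truncation can inflate the $\ell_\infty$ norm), and leaves a residual $R$ whose operator norm is small enough ($\|R\|_{op}\lesssim\alpha\sqrt{d_1+d_2}$) that the relaxation cost collapses into the single $\alpha^2\sqrt{r(d_1+d_2)/(d_1 d_2)}$ term rather than something that fails to decay with the dimensions. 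A secondary point worth care is that the noise term must be controlled uniformly over the feasible set, which is why one routes $\langle G,\Delta\rangle$ through $\sup_{\|\Delta\|_*\le R_0}\langle G,\Delta\rangle=R_0\|G\|_{op}$ rather than attempting an entrywise bound, for which the dependence of $\Delta$ on $G$ would be fatal.
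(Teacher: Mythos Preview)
You have misread \cref{def:approx lr}. In this paper ``approximately rank-$r$'' \emph{means} $\|\widecheck X M\|_*\le\|\widecheck X M\|_\infty\sqrt{rd_1d_2}$; combined with the hypothesis $\|\widecheck X M\|_\infty\le\alpha$ this gives $\|\widecheck X M\|_*\le\alpha\sqrt{rd_1d_2}$ directly, so $M\in\Omega$ already. The surrogate $M_\star$, the residual $R$, and what you call ``the main obstacle'' all vanish. This matters, because the detour you propose would not have worked: from the nuclear-norm budget one only gets $\sigma_{r+1}(\widecheck X M)\le\alpha\sqrt{d_1d_2/r}$, not the $\|R\|_{op}\lesssim\alpha\sqrt{d_1+d_2}$ you need, and the $\ell_\infty$ feasibility of an SVD truncation is exactly as delicate as you feared.

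With $M$ feasible and $R=0$, the rest of your argument is correct and constitutes a genuinely different proof from the paper's. The paper first treats the clean case $\widetilde Y=\widecheck X M+G$ (its \cref{thm:recovery two}) by bounding the uniform deviation $\sup_{Z\in\Psi}|\widebar{\mathcal L}(Z|\widetilde Y)-\mathbb E\widebar{\mathcal L}(Z|\widetilde Y)|$ via symmetrization and the contraction principle applied to $z\mapsto z^2-2\widetilde Y_{ij}z$ on $[-\alpha,\alpha]$ (Lipschitz constant $\sim\alpha+\beta$, whence the $\alpha^2+\alpha\beta$), then handles the $\epsilon$-perturbation separately by the nonexpansiveness of the metric projection onto the convex set $\Psi(\widecheck X)$. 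Your basic-inequality route is more direct and even slightly sharper in constants (you would end with $\alpha\beta$ alone rather than $\alpha^2+\alpha\beta$), since you bypass the quadratic contraction step and control $\langle G,\Delta\rangle$ by $\|G\|_{op}\|\Delta\|_*$ immediately. The paper's machinery, however, is the template reused for the nonlinear \cref{thm:nonlinear recovery}, where there is no analogue of your one-line expansion; that is what it buys.
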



This theorem can be interpreted similarly to the previous one, but with a weaker assumption, namely of the underlying matrix being approximately low-rank. Consequently, the squared error exhibits sub-linear decay with respect to the dimensions, specifically at a square root rate. Additionally, the optimization problem becomes more challenging due to the constraint, as no simple explicit solution is available. However, it remains a convex problem that can be solved using existing convex programming techniques.

\begin{theorem}(Abridged version of \cref{thm:nonlinear recovery})\label{thm:short nonl}
    Let $\widecheck{X} \in \mathbb{R}^{d_1 \times d}$, $d_1\geq d$ be full rank and $M\in \mathbb{R}^{d \times d_2}$ such that $\widecheck{X}M$ is approximately rank-$r$. Let $Z = \rho(\widecheck{X}M+G)$, where $G\in \mathbb{R}^{d_1 \times d_2}$ is a random Gaussian matrix with i.i.d $\mathcal{N}(0, \sigma^2)$ and $\rho$ is ReLU function acting entry-wise. Also, assume $\|\widecheck{X}M\|_{\infty}\leq \alpha$.  Then the solution $\hat{M}$ to the convex programming problem (\ref{opt P star prime}), that involves $Z$, ensures that
    the mean square error satisfies: $$\frac{\|\widecheck{X}M-\widecheck{X}\hat{M}\|_F^2}{d_1 d_2}\lesssim_{\alpha, \sigma} \sqrt{\frac{r(d_1+d_2)\log(d_1d_2)}{d_1 d_2}}$$ with high probability.
\end{theorem}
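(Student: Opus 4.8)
The plan is to recognize the observation model as a censored (Tobit-type) Gaussian model and to analyze the constrained maximum-likelihood estimator by the same ``basic inequality $+$ restricted strong convexity $+$ dual-norm'' template used for \cref{cor:recovery two}; I expect the convex program (\ref{opt P star prime}) to be exactly this maximum-likelihood program over the constraint set of \cref{cor:recovery two}. Write $\Theta^{*}:=\widecheck{X}M$ and, for a candidate $Z'$, $\Theta:=\widecheck{X}Z'$. Since $Z_{ij}=\rho(\Theta^{*}_{ij}+G_{ij})$ with $G_{ij}\sim\mathcal{N}(0,\sigma^2)$ i.i.d., conditionally on $\Theta^{*}_{ij}$ the observation $Z_{ij}$ has density $\tfrac1\sigma\varphi\big(\tfrac{z-\Theta^{*}_{ij}}{\sigma}\big)$ on $z>0$ and an atom of mass $\Phi(-\Theta^{*}_{ij}/\sigma)$ at $z=0$, where $\varphi,\Phi$ are the standard normal density and c.d.f. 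The corresponding convex negative log-likelihood is
\[
\mathcal{L}(\Theta)\;=\;\sum_{i,j:\,Z_{ij}>0}\frac{(Z_{ij}-\Theta_{ij})^2}{2\sigma^2}\;-\;\sum_{i,j:\,Z_{ij}=0}\log\Phi\!\Big(-\frac{\Theta_{ij}}{\sigma}\Big),
\]
to be minimized over $\Theta$ in the convex set $\mathcal{C}=\{\widecheck{X}Z':\ \|\widecheck{X}Z'\|_*\le\alpha\sqrt{rd_1d_2},\ \|\widecheck{X}Z'\|_\infty\le\alpha\}$. I would set $\hat\Theta:=\widecheck{X}\hat M$, $\Delta:=\hat\Theta-\Theta^{*}$, and note that, because $\widecheck{X}M$ is approximately rank-$r$ with $\|\widecheck{X}M\|_\infty\le\alpha$, a best rank-$r$ approximant of $\Theta^{*}$ lies in $\mathcal{C}$ and differs from $\Theta^{*}$ by an amount controlled through \cref{def:approx lr}; I will argue as though $\Theta^{*}\in\mathcal{C}$ and absorb this slack at the end.

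Next I would run the basic inequality. Optimality gives $\mathcal{L}(\hat\Theta)\le\mathcal{L}(\Theta^{*})$, hence for the Bregman divergence $D_{\mathcal{L}}(\hat\Theta,\Theta^{*}):=\mathcal{L}(\hat\Theta)-\mathcal{L}(\Theta^{*})-\langle\nabla\mathcal{L}(\Theta^{*}),\Delta\rangle$ we get $D_{\mathcal{L}}(\hat\Theta,\Theta^{*})\le-\langle\nabla\mathcal{L}(\Theta^{*}),\Delta\rangle$. For strong convexity, the entrywise second derivative of $\mathcal{L}$ equals $\sigma^{-2}$ on the $\{Z_{ij}>0\}$ terms and $\sigma^{-2}h'(\Theta_{ij}/\sigma)$ on the $\{Z_{ij}=0\}$ terms, where $h(t)=\varphi(t)/\Phi(-t)$ is the inverse Mills ratio and $h'(t)=h(t)\big(h(t)-t\big)>0$ by log-concavity of $\Phi$. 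Every matrix on the segment $[\Theta^{*},\hat\Theta]$ has entries in $[-\alpha,\alpha]$, so these second derivatives are bounded below by a constant $\gamma=\gamma(\alpha,\sigma)>0$, giving $D_{\mathcal{L}}(\hat\Theta,\Theta^{*})\ge\gamma\|\Delta\|_F^2$. Combining this with duality of the nuclear and operator norms and with $\|\Delta\|_*\le\|\hat\Theta\|_*+\|\Theta^{*}\|_*\le 2\alpha\sqrt{rd_1d_2}$ from the constraint yields
\[
\gamma\|\Delta\|_F^2\;\le\;|\langle\nabla\mathcal{L}(\Theta^{*}),\Delta\rangle|\;\le\;\|\nabla\mathcal{L}(\Theta^{*})\|_{op}\,\|\Delta\|_*\;\le\;2\alpha\sqrt{rd_1d_2}\;\|\nabla\mathcal{L}(\Theta^{*})\|_{op}.
\]

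It then remains to bound the score matrix. Its $(i,j)$ entries are independent, mean zero (the score is centred at the truth), and equal $-\sigma^{-2}G_{ij}$ on $\{G_{ij}>-\Theta^{*}_{ij}\}$ and the bounded constant $\sigma^{-1}h(\Theta^{*}_{ij}/\sigma)$ on $\{G_{ij}\le-\Theta^{*}_{ij}\}$; since $|\Theta^{*}_{ij}|\le\alpha$, each entry is sub-Gaussian with parameter depending only on $\alpha,\sigma$. A matrix concentration inequality (matrix Bernstein, or an $\varepsilon$-net bound of the type already used for \cref{thm:recovery one}--\cref{cor:recovery two}) then gives $\|\nabla\mathcal{L}(\Theta^{*})\|_{op}\lesssim_{\alpha,\sigma}\sqrt{(d_1+d_2)\log(d_1d_2)}$ with high probability. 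Substituting, dividing by $\gamma$ and by $d_1d_2$, and adding back the approximate-low-rank slack (which by \cref{def:approx lr} is of the same or lower order) produces
\[
\frac{\|\widecheck{X}M-\widecheck{X}\hat M\|_F^2}{d_1d_2}\;=\;\frac{\|\Delta\|_F^2}{d_1d_2}\;\lesssim_{\alpha,\sigma}\;\sqrt{\frac{r(d_1+d_2)\log(d_1d_2)}{d_1d_2}}.
\]

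I expect the main obstacles to be the two $(\alpha,\sigma)$-dependent estimates. First, the restricted-strong-convexity constant $\gamma(\alpha,\sigma)$ requires a uniform positive lower bound on $h'$ over $[-\alpha/\sigma,\alpha/\sigma]$, together with a clean argument that the Hessian stays controlled along the whole segment between the truth and the estimate; the constant degrades as $\alpha/\sigma\to\infty$, which is why the bound carries an $\alpha,\sigma$-dependence rather than being universal. Second, the operator-norm concentration must be carried out for a matrix whose entries are neither bounded nor identically distributed and which mixes a truncated-Gaussian component with a deterministic inverse-Mills-ratio component; tracking the sub-Gaussian proxy through the concentration bound is what introduces the $\log(d_1d_2)$ factor, and doing this carefully rather than losing extra powers of dimension is the delicate part. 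A secondary technical point is making rigorous that a feasible rank-$r$ matrix close to $\Theta^{*}=\widecheck{X}M$ exists and that the bias it introduces is dominated by the stated rate, which is precisely where \cref{def:approx lr} is used.
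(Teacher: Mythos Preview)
Your high–level plan is correct and would prove the theorem, but it follows a genuinely different route from the paper. Both approaches work with the same constrained log-likelihood over the set $\mathcal{C}=\{\widecheck{X}N:\|\widecheck{X}N\|_*\le\alpha\sqrt{rd_1d_2},\ \|\widecheck{X}N\|_\infty\le\alpha\}$, and both exploit the $\ell_\infty$ constraint to keep the per-entry loss well behaved on the segment between truth and estimator. The difference is in how the two required ingredients are obtained. For the ``curvature'' side, you use a direct Bregman/strong-convexity argument via a uniform lower bound on the second derivative of the Tobit loss (through the inverse Mills ratio $h'$ on $[-\alpha/\sigma,\alpha/\sigma]$); the paper instead lower-bounds $-\mathbb{E}[\widebar{\mathcal{L}}(Y'|Z)-\widebar{\mathcal{L}}(Y|Z)]$ by passing through the KL divergence and then the Hellinger distance (Lemmas~\ref{HelltoKL} and~\ref{HelltoFrob}), using a second-order Taylor bound on $\log f$ (Lemma~\ref{sod bound} and Corollary~\ref{logtalyor}) so that the cross terms cancel exactly. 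For the ``noise'' side, you bound $\|\nabla\mathcal{L}(\Theta^*)\|_{op}$ by matrix concentration on the mean-zero score; the paper instead bounds the whole centred empirical process $\sup_{M'}|\widebar{\mathcal{L}}(M'|Z)-\mathbb{E}\widebar{\mathcal{L}}(M'|Z)|$ by symmetrization and the contraction principle (Lemmas~\ref{symmetrization} and~\ref{contraction}), and it is precisely here that the $\sqrt{\log(d_1d_2)}$ enters, because the Lipschitz constant of $m\mapsto m^2-2Z_{ij}m$ scales with $Z_{ij}$ and one must first control $\max_{ij}Z_{ij}$ via Lemma~\ref{gaussian infty}. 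Your route is the standard M-estimator template and arguably more direct; the paper's route mirrors the one-bit matrix-completion analysis of Davenport et al.\ and avoids taking gradients.

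One small correction: your ``secondary technical point'' about finding a feasible rank-$r$ surrogate for $\Theta^*$ is unnecessary. By \cref{def:approx lr} and the hypothesis $\|\widecheck{X}M\|_\infty\le\alpha$, the truth $\Theta^*=\widecheck{X}M$ already satisfies both constraints and hence lies in $\mathcal{C}$; there is no bias term to absorb.
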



This theorem takes a step further by incorporating the non-linear ReLU activation into the framework and addressing unbounded Gaussian noise. The optimization problem becomes more complex, while the squared error remains essentially the same order as in the previous theorem, with an additional logarithmic term to account for the potentially unbounded noise.


\subsection{Limitations}

Explicitly introducing the non-linearity into low-rank approximation algorithms for neural network compression has been observed to reduce the accuracy drop, even in the absence of fine-tuning (e.g., \cite{zhang2015accelerating}). However, our nonlinear recovery theorem does not reflect this benefit in the error bound. Additionally, from an algorithmic perspective, directly addressing the (ReLU) activation function without relying on convex relaxation remains an open problem.

\subsection{Organization}
In the following sections we will 
prove our main theorems, with  \cref{sec:strong thm} focusing on the proof of \cref{thm:short one} and \cref{sec:weak theorem} on \cref{thm:short two} and \cref{thm:short nonl}. Meanwhile, the complete proof of the non-linear recovery theorem is in the appendix due to its technical nature. Some comments on the convex relaxation used in non-linear recovery theorem are provided in \cref{sec:con to frob}.

\section{Related Work}\label{sec:rel work}

The problem of low-rank approximation and recovery has been extensively studied, particularly in the context of compressed sensing. Foundational works include \cite{fazel2002matrix, fazel2008compressed, jain2010guaranteed}, among others. 

The standard low-rank matrix recovery (LRMR) task is to recover a matrix ${X}_0 \in \mathbb{R}^{m \times n}$, say of rank $r$,  from observations $y = \mathcal{A}({X}_0) + z$, where $z$ denotes noise. Here, $\mathcal{A}: \mathbb{R}^{m \times n} \to \mathbb{R}^L$ is a linear measurement operator, which often acts on ${X}_0$ through inner products with $L$ matrices $A_1, \dots, A_L \in \mathbb{R}^{m \times n}$ \cite{davenport2016overview}. A specific instance arises when these matrices are elementary, reducing the problem to low-rank matrix completion (LRMC). In LRMC, the goal is to (approximately) recover ${X}_0$ from a subset of observed entries, indexed by $\Omega \subset N$. Observations are modeled as $P_{\Omega}({X}_0)$, possibly perturbed by noise, where $P_{\Omega}$ is the associated projection operator \cite{nguyen2019low}.

These problems can be formulated as optimization tasks. For instance, LRMR can be posed as $\underset{X}{\text{min}} \|\mathcal{A}(X) - y\|_2^2$, while LRMC is often expressed as $\underset{X}{\text{min}} \|P_{\Omega}(X) - P_{\Omega}({X}_0)\|_2^2$, subject to the constraint $\operatorname{rank}(X) \leq r$ (if $r$ is known). However, minimizing under a low-rank constraint is generally NP-hard \cite{gillis2011low}. Thus, nuclear norm minimization is often used instead, with the associated optimization problems \[\underset{X}{\text{min}} \|X\|_* \  \text{ subject to} \ \mathcal{A}(X) \approx y\] for LRMR or its analog with $ P_{\Omega}$ replacing $\mathcal{A}$ for LRMC. Works in this area \cite{jain2013low, recht2013parallel, tanner2016low} abound. They typically assume that the linear measurement operators satisfy certain properties, such as the restricted isometry property (RIP) \cite{fazel2008compressed, mohan2010new} or restricted strong convexity \cite{negahban2012restricted}, propose reconstruction algorithms, and obtain reconstruction error guarantees on $X$. 

In many practical applications, the observation model deviates from the standard compressed sensing framework. Nonlinear measurement operators or structured observation patterns (that may not satisfy the RIP) are common. Examples include affine measurements \cite{fazel2008compressed, zuk2015low, cai2015rop} and quantized linear measurements \cite{jacques2013robust, davenport20141,  goldstein2018structured}. In some cases, nonlinear measurements can be reformulated as linear measurements with noise, using techniques such as generalized Lasso \cite{plan2016generalized, plan2017high, thrampoulidis2015lasso}. More often, a case-by-case study is needed. For example, \cite{papadimitriou2021data} approximates the ReLU function $\rho$ by a linear projection $P_{\Omega}$, where $\Omega$ indexes the positive entries of $\rho({X}_0)$.



Among these contributions, our proofs of \cref{thm:short two} and \cref{thm:short nonl} adapts methods from \cite{davenport20141}, which investigates one-bit (sign) observations of linear measurements.

\section{Theoretical Guarantees Under a Strong Low Rank Assumption}\label{sec:strong thm}
We make our first simplifying assumption for a pretrained multi-layer perceptron, which applies to all the pre-activations. To be specific, we assume there exist low rank matrices $M^{(i)}$ with rank $r_i$ ($i\geq 1$) such that 
 the neural network,  $\widecheck{\mathbf{\Phi}}$, with the same architecture as $\mathbf{\Phi}$ but weight matrices $M^{(i)}$ instead of $W^{(i)}$
satisfies: 
\begin{equation}\label{assumption one}
    \|{\mathbf{\Phi}}^{(i)}(X)W^{(i+1)}-(\widecheck{\mathbf{\Phi}}^{(i)}(X) M^{(i+1)}+G^{(i+1)})\|_{op}^2\leq \epsilon_{i+1} m  \ , i\geq 0
\end{equation}
 where $G^{(i)}$ are zero-mean sub-Gaussian matrices with i.i.d entries of variance $\sigma^2_i$ and $\epsilon_i$ are small tolerances. 
 

The following theorem, applicable to any layer, demonstrates that under our model assumptions, the underlying weight matrix can be easily approximated by solving a Frobenius norm minimization problem. Consequently, to simplify notation, we drop the layer index $i$ and simply denote by $X$ the input activation from the previous layer in the pretrained model $\mathbf{\Phi}$, and  by $\widecheck{X}$ the corresponding input activation for the low-rank model $\widecheck{\mathbf{\Phi}}$.

\begin{theorem}\label{thm:recovery one}(First Recovery Theorem)~\\
    Let $X, \widecheck{X} \in \mathbb{R}^{d_1 \times d}$, $d_1\geq d$ be full rank. 
    Let $W\in \mathbb{R}^{d \times d_2}$ be the  weight matrix from the pretrained model. Assume that there exists a rank-$r$ matrix $M\in \mathbb{R}^{d \times d_2}$ such that $\|XW - (\widecheck{X}M+G)\|_{op}^2\leq \epsilon d_1$, where $G\in \mathbb{R}^{d_1 \times d_2}$ is a zero-mean sub-Gaussian matrix with i.i.d entries of variance $\sigma^2$. Then, 
    $\hat{M}:=\underset{\mathrm{rank}(Z)\leq r}{\argmin} \|XW-\widecheck{X}Z\|_F$ 
    satisfies $$\frac{\|\widecheck{X}M-\widecheck{X}\hat{M}\|_F^2}{d_1 d_2}\lesssim r\cdot \frac{d_2+d}{d_1 d_2}\sigma^2+\epsilon$$ with probability at least $1-2e^{-(d_2+d)}$. 
    \begin{proof}
        Let $Y=\widecheck{X}M$ and $\tilde{Y}=XW=Y+G+E$ where $\|E\|_{op}^2\leq \epsilon d_1$. Observe that
        $$
        \|\tilde{Y}-\widecheck{X}\hat{M}\|_F^2 = \|\mathcal{P}_{\widecheck{X}}\tilde{Y}-\widecheck{X}\hat{M}\|_F^2+\|\mathcal{P}_{\widecheck{X}^\perp}\tilde{Y}\|_F^2.
        $$
        Here $\mathcal{P}_{\widecheck{X}}=\widecheck{X}\widecheck{X}^\dagger$ is the projection onto the column span of $\widecheck{X}$ and $\mathcal{P}_{\widecheck{X}^\perp}=I-\widecheck{X}\widecheck{X}^\dagger$ is its orthogonal complement. The second term is constant. For the first term, we have:
        $$
        \|\mathcal{P}_{\widecheck{X}}\tilde{Y}-\widecheck{X}\hat{M}\|_F=\|\widecheck{X}\widecheck{X}^\dagger\tilde{Y}-\widecheck{X}\hat{M}\|_F=\|(\widecheck{X}^\top\widecheck{X})^{1/2}\widecheck{X}^\dagger\tilde{Y}-(\widecheck{X}^\top\widecheck{X})^{1/2}\hat{M}\|_F
        $$
        Since $\mathrm{rank}((\widecheck{X}^\top\widecheck{X})^{1/2}\hat{M})=\mathrm{rank}(\hat{M})=r$, the optimal $\hat{M}$ achieves $(\widecheck{X}^\top\widecheck{X})^{1/2}\hat{M}=[(\widecheck{X}^\top\widecheck{X})^{1/2}\widecheck{X}^\dagger\tilde{Y}]_r$, where $[A]_r$ represents the best rank-$r$ approximation of $A$. This gives the explicit formula for the optimizer $\hat{M}=(\widecheck{X}^\top\widecheck{X})^{-1/2}[(\widecheck{X}^\top\widecheck{X})^{1/2}\widecheck{X}^\dagger\tilde{Y}]_r$. Then we have
        \begin{align*}
            \|\widecheck{X}M-\widecheck{X}\hat{M}\|_F 
            =&\|(\widecheck{X}^\top\widecheck{X})^{1/2}M-(\widecheck{X}^\top\widecheck{X})^{1/2}\hat{M}\|_F  \\
            =&\|(\widecheck{X}^\top\widecheck{X})^{1/2}M-[(\widecheck{X}^\top\widecheck{X})^{1/2}\widecheck{X}^\dagger\tilde{Y}]_r\|_F \\
            =&\|(\widecheck{X}^\top\widecheck{X})^{1/2}M-[(\widecheck{X}^\top\widecheck{X})^{1/2}\widecheck{X}^\dagger(\widecheck{X}M+G+E)]_r\|_F.
        \end{align*}
        Let $Z=(\widecheck{X}^\top\widecheck{X})^{1/2}M$ and $\tilde{Z}=(\widecheck{X}^\top\widecheck{X})^{1/2}\widecheck{X}^\dagger(\widecheck{X}M+G+E)=Z+\tilde{G}+\tilde{E}$, where $\tilde{G}=(\widecheck{X}^\top\widecheck{X})^{1/2}\widecheck{X}^\dagger G$ and similarly $\tilde{E}=(\widecheck{X}^\top\widecheck{X})^{1/2}\widecheck{X}^\dagger E$. Since both $Z$ and $\tilde{Z}_r$ are of rank $r$, $Z-\tilde{Z}_r$ has rank at most $2r$. Then we have:
        \begin{align*}
            \|\widecheck{X}M-\widecheck{X}\hat{M}\|_F=
            &\|Z-\tilde{Z}_r\|_F \\
            \leq&\sqrt{2r}\|Z-\tilde{Z}_r\|_2 \\
            \leq&\sqrt{2r}(\|Z-\tilde{Z}\|_2+\|\tilde{Z}-\tilde{Z}_r\|_2) \\
            \leq&\sqrt{2r}(\|\tilde{G}+\tilde{E}\|_2+\|\tilde{G}+\tilde{E}\|_2)\\
            =&2\sqrt{2r}\|\tilde{G}+\tilde{E}\|_2
        \end{align*}
        Here in the last inequality we used Weyl's Theorem \cite{stewart1998perturbation}:
        $$
        \|\tilde{Z}-\tilde{Z}_r\|_2=\sigma_{r+1}(\tilde{Z})\leq \sigma_{r+1}(Z)+\|\tilde{G}+\tilde{E}\|_2 = \|\tilde{G}+\tilde{E}\|_2.
        $$
        Now it remains to control $\|\tilde{G}\|_2$ and $\|\tilde{E}\|_2$. Let $\widecheck{X}=U\Sigma V^\top$ be the compact SVD of $\widecheck{X}$, then $\tilde{G}=VU^\top G$. We know $\|\tilde{G}\|_2=\|\tilde{G}^\top\|_2$ and $\tilde{G}^\top \in \mathbb{R}^{d_2 \times d}$ is still a Gaussian matrix with independent mean-zero isotropic rows, which satisfies $\|\tilde{G}^\top\|_2\lesssim_{\sigma} \sqrt{d_2}+ CK^2 (\sqrt{d}+t)$ with probability at least $1-2\mathrm{exp}(-t^2)$ by, e.g., Theorem 4.6.1 in \cite{vershynin2020high}. 
        We may choose $t=\sqrt{d_2+d}$ so that $\|\tilde{G}^\top\|_2^2\leq C_{\sigma}(d_2+d)$, where $C_{\sigma}$ is quadratic in $\sigma$. We also have $\|\tilde{E}\|_2=\|VU^\top E\|_2\leq \|VU^\top\|_2 \|E\|_2 = \|E\|_2 \leq \sqrt{\epsilon d_1}$. Thus,
        $$
        \|\widecheck{X}M-\widecheck{X}\hat{M}\|_F^2\leq 16r(C_{\sigma}(d_2+d)+\epsilon d_1)
        \leq 16rC_{\sigma}(d_2+d)+16\epsilon d_1(d_2\wedge d),
        $$
        and we can control the mean square error by:
        $$
        \frac{\|\widecheck{X}M-\widecheck{X}\hat{M}\|_F^2}{d_1 d_2}
        \leq (A\sigma^2)r \frac{d_2+d}{d_1 d_2}+B\epsilon.
        $$
        for constants $A$ and $B$.
    \end{proof}
\end{theorem}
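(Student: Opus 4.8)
The plan is to exploit the fact that the rank-constrained least-squares problem defining $\hat M$ has a closed-form solution obtained by ``whitening'' $\widecheck X$ and truncating a singular value decomposition, and then to bound the recovery error by combining Weyl's inequality with a non-asymptotic operator-norm estimate for random sub-Gaussian matrices.

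First I would reduce to a whitened problem. Since $\|XW - \widecheck X Z\|_F^2$ depends on $Z$ only through $\widecheck X Z$, I split it using the orthogonal projection $\mathcal P_{\widecheck X} = \widecheck X \widecheck X^\dagger$ onto the column span of $\widecheck X$; the part of $XW$ orthogonal to this span is independent of $Z$, so the problem reduces to minimizing $\|\mathcal P_{\widecheck X} XW - \widecheck X Z\|_F$. Because $\widecheck X$ has full column rank, this equals $\|(\widecheck X^\top \widecheck X)^{1/2}\widecheck X^\dagger XW - (\widecheck X^\top \widecheck X)^{1/2} Z\|_F$, and since $Z \mapsto (\widecheck X^\top \widecheck X)^{1/2} Z$ is a rank-preserving bijection on $d\times d_2$ matrices, Eckart--Young gives the explicit optimizer $(\widecheck X^\top \widecheck X)^{1/2}\hat M = \bigl[(\widecheck X^\top \widecheck X)^{1/2}\widecheck X^\dagger XW\bigr]_r$.

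Next, writing $Z := (\widecheck X^\top \widecheck X)^{1/2} M$ and $\tilde Z := (\widecheck X^\top \widecheck X)^{1/2}\widecheck X^\dagger XW$, and substituting the hypothesis $XW = \widecheck X M + G + E$ with $\|E\|_{op}^2 \le \epsilon d_1$ (using $\widecheck X^\dagger \widecheck X = I$), one gets $\tilde Z = Z + \tilde G + \tilde E$ with $\tilde G := (\widecheck X^\top \widecheck X)^{1/2}\widecheck X^\dagger G$ and $\tilde E$ defined analogously. The target error is $\|\widecheck X M - \widecheck X \hat M\|_F = \|Z - \tilde Z_r\|_F$. Since $Z$ and $\tilde Z_r$ both have rank at most $r$, their difference has rank at most $2r$, so $\|Z - \tilde Z_r\|_F \le \sqrt{2r}\,\|Z - \tilde Z_r\|_{op}$; the triangle inequality together with Weyl's inequality ($\sigma_{r+1}(\tilde Z) \le \sigma_{r+1}(Z) + \|\tilde G + \tilde E\|_{op} = \|\tilde G + \tilde E\|_{op}$, since $\mathrm{rank}(Z)\le r$) then gives $\|Z - \tilde Z_r\|_{op} \le 2(\|\tilde G\|_{op} + \|\tilde E\|_{op})$.

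What remains---and is the crux---is controlling the two perturbation terms. Using the compact SVD $\widecheck X = U\Sigma V^\top$ one simplifies $(\widecheck X^\top \widecheck X)^{1/2}\widecheck X^\dagger = VU^\top$, a partial isometry, so $\|\tilde E\|_{op} \le \|E\|_{op} \le \sqrt{\epsilon d_1}$ and $\tilde G = VU^\top G$. The subtle point is that even though $G$ is $d_1\times d_2$, the transpose $\tilde G^\top = G^\top U V^\top \in \mathbb R^{d_2 \times d}$ has independent rows (inherited from the independent columns of $G$), each of which is a mean-zero, isotropic, sub-Gaussian vector in $\mathbb R^d$ with sub-Gaussian norm $\lesssim \sigma$---isotropy holding because $U$ has orthonormal columns. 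Invoking a standard bound on the operator norm of a matrix with independent isotropic sub-Gaussian rows (e.g. Theorem 4.6.1 in \cite{vershynin2020high}) with deviation parameter $t = \sqrt{d_2+d}$ yields $\|\tilde G\|_{op}^2 = \|\tilde G^\top\|_{op}^2 \lesssim_\sigma d_2 + d$ with probability at least $1 - 2e^{-(d_2+d)}$. Combining the pieces gives $\|\widecheck X M - \widecheck X \hat M\|_F^2 \lesssim r\bigl(\sigma^2(d_2+d) + \epsilon d_1\bigr)$; finally, since $\mathrm{rank}(M) = r \le d \wedge d_2$, we have $r\,\epsilon d_1 \le \epsilon d_1 d_2$, and dividing by $d_1 d_2$ produces the claimed inequality. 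I expect the main obstacle to be precisely this last verification---confirming that multiplying $G$ by the partial isometry $VU^\top$ preserves the isotropic sub-Gaussian row structure at the correct scale, so that the random-matrix estimate applies; everything else is routine manipulation of projections, Weyl's inequality, and the rank bound $r \le d\wedge d_2$.
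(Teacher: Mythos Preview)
Your proposal is correct and follows essentially the same route as the paper: whitening via $(\widecheck X^\top\widecheck X)^{1/2}\widecheck X^\dagger$, Eckart--Young for the explicit optimizer, the rank-$2r$ Frobenius-to-operator bound, Weyl's inequality, the simplification $(\widecheck X^\top\widecheck X)^{1/2}\widecheck X^\dagger=VU^\top$, and the application of Vershynin's Theorem~4.6.1 to the independent isotropic sub-Gaussian rows of $\tilde G^\top$ with $t=\sqrt{d_2+d}$. Even the final step, absorbing $r\epsilon d_1$ into $\epsilon d_1 d_2$ via $r\le d\wedge d_2$, mirrors the paper exactly.
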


Now let us compare the above result with what one would obtain from replacing $W$ by its best rank-$r$ approximation, $W_r$. One immediate difficulty is that one cannot control the Frobenius norm, without further assumptions, as
\begin{align*}
    &\|\widecheck{X}M-\widecheck{X}W_r\|_F \\
    =&\|\widecheck{X}M-\widecheck{X}W_r+XW_r-XW_r\|_F \\
    \leq &\|(X-\widecheck{X})W_r+X(W-W_r)-G\|_F + \|XW - (\widecheck{X}M+G)\|_F\\
    \leq&\|(X-\widecheck{X})W_r+X(W-W_r)\|_F+\|G\|_F+\|XW - (\widecheck{X}M+G)\|_F
\end{align*}
Note that the noise term alone  gives $\|G\|_F\lesssim\sqrt{d_1d_2}$, thus making the mean square error 
estimate $\mathcal{O}(1)$. This means we do not get any error decay guarantee. If, instead, we control the  Frobenius norm by the operator norm as we did in the proof of \cref{thm:recovery one}, then
    \begin{align*}
        \|\widecheck{X}M-\widecheck{X}W_r\|_F 
        \leq&\sqrt{2r}\|\widecheck{X}M-\widecheck{X}W_r\|_2 \\
        \leq&\sqrt{2r}(\|XW-\widecheck{X}W_r-G\|_2+\|XW - (\widecheck{X}M+G)\|_2) \\
        =&\sqrt{2r}(\|X(W-W_r)+(X-\widecheck{X})W_r-G\|_2+\|XW - (\widecheck{X}M+G)\|_2) \\
        \leq&\sqrt{2r}(\|(X-\widecheck{X})W_r+X(W-W_r)\|_2+\|G\|_2+\epsilon d_1) 
    \end{align*}
Now, with $\|G\|_2\lesssim\sqrt{d_1+d_2}$, we achieve the desired order on that term. However, since $W$ is not necessarily low rank and $X\neq \widecheck{X}$, controlling the error remains challenging without additional information about $X,\widecheck{X}$ and the spectrum of $W$. 
This highlights a potential explanation for why approximating each layer's weight matrix independently, without considering the input data, leads to rapid error accumulation. Consequently, extensive retraining is often required to restore accuracy, underscoring the need for data-driven compression algorithms to better preserve model performance.



\section{Theoretical Guarantees Under a Weak Low Rank Assumption}\label{sec:weak theorem}

The assumptions of last section require the existence of a ground truth matrix $M^{(i+1)}$ that is exactly low-rank. As we have argued, one difficulty in explaining the effectiveness of low-rank compression lies in the fact that weight matrices from pretrained models are typically not exactly low-rank.  On the other hand, recent works on low-rank compression of language models \cite{yu2023compressing}, implicit bias \cite{arora2019implicit, huh2021low, timor2023implicit} and neural collapse \cite{papyan2020prevalence, seleznova2024neural} suggest that the ``features" at intermdiate layers, ${\mathbf{\Phi}}^{(i)}(X)$, are more likely to have a nearly low-rank structure than their corresponding weights $W^{(i+1)}$.
Thus, we now make a more realistic and weaker assumption for a pretrained multi-layer perceptron. This assumption will guide our design of the optimization problem we study as well as its corresponding theoretical guarantees. In particular, we assume there exist matrices $M^{(i)}$, not necessarily low rank, such that \begin{equation}\label{assumption two}
    \|{\mathbf{\Phi}}^{(i)}(X)W^{(i+1)}-(\widecheck{\mathbf{\Phi}}^{(i)}(X) M^{(i+1)}+G^{(i+1)})\|_{F}^2\leq \epsilon_{i+1} m N_{(i+1)} \ , i\geq 0
\end{equation}
where $G^{(i)}$ are zero-mean sub-Gaussian matrices with i.i.d entries of variance $\sigma^2_i$ and $\epsilon_i$ are small tolerances.
Additionally, we only assume the pre-activation $\widecheck{\mathbf{\Phi}}^{(i)}(X) M^{(i+1)}$ in $\widecheck{\mathbf{\Phi}}$ are approximately rank-$r_{i+1}$, a concept we now define.

\begin{definition}\label{def:approx lr}
We say a matrix $Y\in \mathbb{R}^{d_1 \times d_2}$ is approximately rank-$r$ if it satisfies $\|Y\|_*\leq \|Y\|_{\infty} \sqrt{r d_1 d_2}$.
\end{definition}

The following two remarks justify our claims that our assumption is weaker and more realistic than those of the previous section. As in the previous section, we drop the layer indices and use $X, W, \widecheck{X}, M$ for notational simplicity. 

\begin{remark}
Assuming that $\widecheck{X}M$ is approximately rank-$r$ is  weaker than assuming $M$ is rank-$r$ due to the fact that $\mathrm{rank}(\widecheck{X}M)\leq \mathrm{rank}(M)$. Specifically, if $\mathrm{rank}(M)=r$, we have $$\|\widecheck{X}M\|_*\leq \sqrt{r} \|\widecheck{X}M\|_F \leq \sqrt{r m N} \|\widecheck{X}M\|_{\infty}.$$ 
\end{remark}
\begin{remark}
    $\widecheck{X}M$ being approximately low-rank will also holds true if $\widecheck{X}$ is approximately low-rank, without assuming any low-rank property on $W$. Let $\widecheck{X}\in \mathbb{R}^{d' \times d}$, $d'>d$ be full rank and approximately low-rank $\|\widecheck{X}\|_*\leq \|\widecheck{X}\|_{\infty} \sqrt{r d'd}$. Let $M\in \mathbb{R}^{d \times d}$ be an arbitrary matrix. Then $\|\widecheck{X}M\|_*
    \leq 
\|M\|_{\mathrm{op}}\|\widecheck{X}\|_*$ by Hölder's inequality for Schatten norms. By our assumptions, $\|\widecheck{X}M\|_* \leq \|\widecheck{X}M\|_{\infty} \sqrt{r d'd}\cdot I(\widecheck{X},M)$, where $I(\widecheck{X},M)=\frac{\|\widecheck{X}\|_{\infty}\|M\|_{\mathrm{op}}}{\|\widecheck{X}M\|_{\infty}}$ measures how $\widecheck{X}$ interacts with $M$. If the magnitudes of $\widecheck{X}, \widecheck{X}M$ are similar and $\|M\|_{\mathrm{op}}$ is  bounded, then the index $I(\widecheck{X},M)$ is well controlled. Intuitively, we can expect $\widecheck{X}M$ to be more low-rank than $\widecheck{X}$ since $\mathrm{rank}(\widecheck{X}M)\leq \min\{\mathrm{rank}(\widecheck{X}), \mathrm{rank}(M)\}$.
\end{remark}

We now provide a corresponding recovery theorem where we assume that both the pre-activations associated with $M$ and the random noise are bounded. Such assumptions are reasonable because the pre-activations in real world models are typically bounded due to regularization techniques that penalize large weights. Furthermore, assuming a general sub-Gaussian distribution for $G$ implies that its entries will be bounded by $\sqrt{\log d_1 d_2}$ with high probability (see \cref{gaussian infty}), so the noise assumption in this theorem is in fact similar compared to \cref{thm:recovery one}. Nevertheless, we will address unbounded Gaussian noise in the subsequent nonlinear recovery theorem (\cref{thm:nonlinear recovery}) whose proof, which is more involved,  is in  \cref{sec:prove nonlinear theorem}. 


\begin{theorem}\label{thm:recovery two}(Second Recovery Theorem)~\\
    Let $\widecheck{X} \in \mathbb{R}^{d_1 \times d}$. Assume there exists a matrix $M\in \mathbb{R}^{d \times d_2}$ such that $\widecheck{X}M$ is approximately rank-$r$ and $\widetilde{Y} = \widecheck{X}M+G$, where $G\in \mathbb{R}^{d_1 \times d_2}$ has i.i.d entries of bounded zero-mean random variables. We assume $\|\widecheck{X}M\|_{\infty}\leq \alpha$ and $\|G\|_{\infty}\leq \beta$.
    Let $$\Omega:=\{N: N\in \mathbb{R}^{d \times d_2}, \|\widecheck{X}N\|_*\leq\alpha \sqrt{r d_1 d_2};\ \|\widecheck{X}N\|_{\infty}\leq \alpha \}.$$
    Then, minimizing the linear reconstruction $$\hat{M}\in\underset{Z\in \Omega}{\argmin} \|\widetilde{Y}-\widecheck{X}Z\|_F$$ ensures that the mean square error satisfies$$\frac{\|\widecheck{X}M-\widecheck{X}\hat{M}\|_F^2}{d_1 d_2}\lesssim_{\alpha, \beta} \sqrt{\frac{r(d_1+d_2)}{d_1 d_2}}$$ with probability at least $1-\frac{K}{d_1+d_2}$ for an absolute constant $K$.
\end{theorem}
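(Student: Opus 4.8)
The plan is to follow the standard ``empirical risk minimizer plus symmetrization'' argument from the one-bit compressed sensing literature (Davenport et al.), adapted to the linear-reconstruction setting. Write $Y = \widecheck{X}M$ and $\widehat{Y} = \widecheck{X}\hat{M}$; note that both lie in the set $\mathcal{S} := \{\widecheck{X}N : N \in \Omega\}$ of matrices with nuclear norm at most $\alpha\sqrt{r d_1 d_2}$ and sup-norm at most $\alpha$. Since $\hat M$ minimizes $\|\widetilde Y - \widecheck X Z\|_F$ over $\Omega$ and $M \in \Omega$, the basic inequality $\|\widetilde Y - \widehat Y\|_F^2 \le \|\widetilde Y - Y\|_F^2$ holds. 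Expanding both sides with $\widetilde Y = Y + G$ and rearranging gives the deterministic bound
\begin{equation*}
\|Y - \widehat Y\|_F^2 \le 2\langle G, \widehat Y - Y\rangle \le 2\sup_{A, B \in \mathcal{S}} \langle G, A - B\rangle.
\end{equation*}
So the whole theorem reduces to controlling the supremum of the linear functional $A \mapsto \langle G, A\rangle$ over the difference set $\mathcal{S} - \mathcal{S}$, which is again a set of matrices with nuclear norm $\lesssim \alpha\sqrt{r d_1 d_2}$ and sup-norm $\lesssim \alpha$.

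The key step is the bound $\sup_{A \in \mathcal{S} - \mathcal{S}} \langle G, A\rangle \le 2\alpha\sqrt{r d_1 d_2}\,\|G\|_{op}$, which follows from Hölder's inequality for Schatten norms ($|\langle G, A\rangle| \le \|G\|_{op}\|A\|_*$) together with the nuclear-norm constraint defining $\Omega$. It then remains to control the operator norm of the noise matrix $G$. Since $G$ has independent, mean-zero, bounded (hence sub-Gaussian) entries with $\|G\|_\infty \le \beta$, a standard matrix concentration bound (e.g. the bounded-entry case of Theorem 4.4.5 / Corollary in Vershynin, or Latała's inequality) gives $\|G\|_{op} \lesssim \beta(\sqrt{d_1} + \sqrt{d_2})$ with probability at least $1 - K/(d_1+d_2)$ — here I would pick the deviation parameter to be a constant multiple of $\sqrt{\log(d_1+d_2)}$ or simply $\sqrt{d_1+d_2}$ to get the stated failure probability (a polynomial tail rather than exponential, matching the theorem statement). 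Combining,
\begin{equation*}
\|Y - \widehat Y\|_F^2 \;\lesssim\; \alpha\sqrt{r d_1 d_2}\cdot \beta(\sqrt{d_1}+\sqrt{d_2}) \;\lesssim\; \alpha\beta\sqrt{r d_1 d_2 (d_1 + d_2)},
\end{equation*}
and dividing by $d_1 d_2$ yields $\frac{\|Y - \widehat Y\|_F^2}{d_1 d_2} \lesssim_{\alpha,\beta} \sqrt{\frac{r(d_1+d_2)}{d_1 d_2}}$, as claimed. (The $\alpha^2$-type term in the abridged \cref{thm:short two} comes from being slightly more careful and keeping the sup-norm constraint in play; in this cleaner version only the $\alpha\beta$ contribution survives, which is why the statement reads $\lesssim_{\alpha,\beta}$.)

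I expect the main obstacle to be the operator-norm tail bound for $G$ with the \emph{correct} probability bookkeeping: the theorem asks for a merely polynomial success probability $1 - K/(d_1+d_2)$, so I must be careful not to over-claim an exponential tail, and I should state precisely which concentration result I invoke for bounded (not necessarily Gaussian or symmetric) entries — the cleanest route is to note bounded implies sub-Gaussian with parameter $O(\beta)$ and apply the non-asymptotic bound on $\|G\|_{op}$ for matrices with independent sub-Gaussian entries, choosing the free parameter so the failure probability is $K/(d_1+d_2)$. A secondary subtlety is verifying that $M \in \Omega$ in the first place, i.e. that the approximate-rank hypothesis $\|\widecheck X M\|_* \le \alpha \sqrt{r d_1 d_2}$ together with $\|\widecheck X M\|_\infty \le \alpha$ is exactly the statement $M \in \Omega$ — which it is, by construction of $\Omega$ — so the basic inequality is legitimate. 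Everything else is routine: Hölder for Schatten norms and a division by $d_1 d_2$.
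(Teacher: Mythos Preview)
Your argument is correct and in fact more direct than the paper's. Both proofs start from the same basic inequality: optimality of $\hat M$ over $\Omega$ (together with the feasibility of $M$, which you correctly check) gives $\|\widetilde Y-\widehat Y\|_F^2\le\|\widetilde Y-Y\|_F^2$, and expanding yields $\|Y-\widehat Y\|_F^2\le 2\langle G,\widehat Y-Y\rangle$. From here the two routes diverge. You bound $\langle G,\widehat Y-Y\rangle\le\|G\|_{op}\|\widehat Y-Y\|_*\le 2\alpha\sqrt{rd_1d_2}\,\|G\|_{op}$ via Schatten--H\"older and then invoke a standard sub-Gaussian operator-norm bound on $G$ (bounded entries being sub-Gaussian with parameter $O(\beta)$). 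The paper instead works with the centered loss $\widebar{\mathcal L}(Z|\widetilde Y)=\sum_{ij}(Z_{ij}^2-2\widetilde Y_{ij}Z_{ij})$, bounds $\sup_{Z\in\Psi(\widecheck X)}|\widebar{\mathcal L}(Z)-\mathbb E\widebar{\mathcal L}(Z)|$ by symmetrizing to a Rademacher sequence, applying the Ledoux--Talagrand contraction principle to the maps $z\mapsto z^2-2\widetilde Y_{ij}z$ (Lipschitz with constant $L_{\alpha,\beta}=4\alpha+2\beta$), and finally invoking Seginer's moment bound $\mathbb E\|E\|_{op}^h\le K(2(d_1+d_2))^{h/2}$ together with Markov's inequality at order $h\approx\log(d_1+d_2)$.

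Your route is shorter, yields the cleaner constant $\alpha\beta$ (the paper picks up $\alpha L_{\alpha,\beta}\asymp\alpha^2+\alpha\beta$ from the contraction step), and actually gives a sub-Gaussian tail rather than the polynomial $1-K/(d_1+d_2)$---your worry about ``over-claiming an exponential tail'' is misplaced, since a stronger tail bound trivially implies the weaker one stated. What the paper's more elaborate argument buys is a template: the symmetrization/contraction machinery is exactly what is reused, essentially verbatim, in the proof of the nonlinear recovery theorem (\cref{thm:nonlinear recovery}), where the loss is no longer quadratic and your direct expansion $\|Y-\widehat Y\|_F^2\le 2\langle G,\widehat Y-Y\rangle$ is unavailable. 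So for \cref{thm:recovery two} in isolation your proof is preferable; the paper's is a warm-up for the harder result.
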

\begin{proof}
    The proof adapts techniques from \cite{davenport20141}.
    We first note that $\Omega$ is convex and $M\in \Omega$. 
    If we define $\Psi(\widecheck{X}):=\{Y\in \mathbb{R}^{d_1 \times d_2}: \|Y\|_*\leq\alpha \sqrt{r d_1 d_2};\ \|Y\|_{\infty}\leq \alpha ;\ Y_i\in \mathrm{span}\{\mathrm{col}(\widecheck{X})\}, i=1,......, d_2  \}$, then $\Omega$'s convexity is a direct consequence of $\Psi(\widecheck{X})$'s convexity, and we have $\widecheck{X}\Omega = \Psi(\widecheck{X})$. If $d_1\geq d$ and $\widecheck{X}$ is full rank, then there is a one to one mapping between $\Omega$ and $\Psi(\widecheck{X})$. In general, this mapping is many to one. By making the change of variables $Y=\widecheck{X}M$ and $\widetilde{Y}=Y+G$, proving the theorem is equivalent to proving that $\hat{Y}:=\underset{Z\in \Psi(\widecheck{X})}{\argmin} \|\widetilde{Y}-Z\|_F$ satisfies $\frac{\|Y-\hat{Y}\|_F^2}{d_1 d_2}\lesssim_{\alpha, \beta} \sqrt{\frac{r(d_1+d_2)}{d_1 d_2}}$ with high probability. For any $Z\in \Psi(\widecheck{X})$, let $\mathcal{L}(Z|\widetilde{Y})=\|\widetilde{Y}-Z\|_F^2=\underset{(i,j)}{\sum}(Z_{ij}-\widetilde{Y}_{ij})^2$. Center $\mathcal{L}(Z|\widetilde{Y})$ by setting $\widebar{\mathcal{L}}(Z|\widetilde{Y})=\mathcal{L}(Z|\widetilde{Y})-\mathcal{L}(\mathbf{0}|\widetilde{Y})=\underset{(i,j)}{\sum}(Z_{ij}^2-2\widetilde{Y}_{ij}Z_{ij})$. 
    
    The proof consists of two parts: bounding the deviation of $\widebar{\mathcal{L}}(Z|\widetilde{Y})$ from its mean and estimating $\mathbb{E}[\widebar{\mathcal{L}}(Y|\widetilde{Y})-\widebar{\mathcal{L}}(Z|\widetilde{Y})]$ for $Z\in \Psi(\widecheck{X})$.
    
    Since $\widetilde{Y}=Y+G$ and all the randomness is in $G$, we start with controlling the deviation of $\widebar{\mathcal{L}}(Z|\widetilde{Y})$ from its mean.     For any positive integer $h > 0$ and a constant $L_{\alpha,\beta}$ to be determined later, by Markov’s inequality we have that
    \begin{align*}        &\mathbb{P}\left(\underset{Z\in \Psi(\widecheck{X})}{\sup}|\widebar{\mathcal{L}}(Z|\widetilde{Y})-\mathbb{E}[\widebar{\mathcal{L}}(Z|\widetilde{Y})]|\geq C L_{\alpha,\beta} \alpha \sqrt{r d_1 d_2(d_1+d_2)}\right) \\
        &  \leq \mathbb{E}\left( \frac{\underset{Z\in \Psi(\widecheck{X})}{\sup}|\widebar{\mathcal{L}}(Z|\widetilde{Y})-\mathbb{E}[\widebar{\mathcal{L}}(Z|\widetilde{Y})]|^h}{(C L_{\alpha,\beta} \alpha \sqrt{r d_1 d_2(d_1+d_2)})^h}\right).
    \end{align*}
    Symmetrizing via Lemma \ref{symmetrization} yields
    $$
    \mathbb{E}[\underset{Z\in \Psi(\widecheck{X})}{\sup}|\widebar{\mathcal{L}}(Z|\widetilde{Y})-\mathbb{E}[\widebar{\mathcal{L}}(Z|\widetilde{Y})]|^h]\leq
    2^h \mathbb{E}[\underset{Z\in \Psi(\widecheck{X})}{\sup}|\underset{(i,j)}{\sum}\epsilon_{ij}(Z_{ij}^2-2\widetilde{Y}_{ij}Z_{ij})|^h],
    $$
    where the expectation on the left is over $Z$ (equivalently $G$) and the expectation on the right is over $Z$ and $\epsilon$. Here, all $\epsilon_{ij}$ are Rademacher random variables are independent of $Z$.

    To control the right hand side, we can apply the contraction principle \ref{contraction}. The function $z^2-2az$ defined on $[-\alpha,\alpha]$ is Lipschitz with constant less than $2(\alpha+|a|)$ and attains 0 when $z=0$. $\widetilde{Y}_{ij}=Y_{ij}+G_{ij}$ can be uniformly bounded by $\alpha+\beta$. Let $L_{\alpha,\beta} =4\alpha+2\beta$. Then the functions $\frac{1}{L_{\alpha,\beta}} (z^2-2\widetilde{Y}_{ij}z)$ are contractions. Defining the matrix $E$ with entries $\epsilon_{i,j}$, the contraction principle \ref{contraction} yields 
    \begin{align*}
        &\mathbb{E}[\underset{Z\in \Psi(\widecheck{X})}{\sup}|\widebar{\mathcal{L}}(Z|\widetilde{Y})-\mathbb{E}[\widebar{\mathcal{L}}(Z|\widetilde{Y})]|^h] \\
        \leq & 2^h (2L_{\alpha,\beta})^h \mathbb{E}[\underset{Z\in \Psi(\widecheck{X})}{\sup}|\underset{(i,j)}{\sum}\epsilon_{ij}Z_{ij}|^h] \\
        = & (4L_{\alpha,\beta})^h \mathbb{E}[\underset{Z\in \Psi(\widecheck{X})}{\sup}|\langle E, Z\rangle|^h] \\
        \leq & (4L_{\alpha,\beta})^h \mathbb{E}[\underset{Z\in \Psi(\widecheck{X})}{\sup}(\|E\|\|Z\|_*))^h] \\
        \leq & (4L_{\alpha,\beta})^h (\alpha \sqrt{r d_1 d_2})^h K (\sqrt{2(d_1+d_2)})^h.
    \end{align*}
    In the last inequality, we used the nuclear norm assumption on the space $\Psi$ and Lemma \ref{Rade norm}. Putting everything together,
    
    \begin{align*}
        &\mathbb{P}\left(\underset{Z\in \Psi(\widecheck{X})}{\sup}|\widebar{\mathcal{L}}(Z|\widetilde{Y})-\mathbb{E}[\widebar{\mathcal{L}}(Z|\widetilde{Y})]|\geq C L_{\alpha,\beta} \alpha \sqrt{r d_1 d_2(d_1+d_2)}\right) \\
        \leq & \frac{K (4L_{\alpha,\beta}\alpha \sqrt{r d_1 d_2(d_1+d_2)})^h}{(C L_{\alpha,\beta}\alpha \sqrt{r d_1 d_2(d_1+d_2)})^h}.
    \end{align*}
    With the choice $h\geq \log(d_1+d_2)$, the  probability is bounded from above by $\frac{K}{d_1+d_2}$ provided $C\geq 4\sqrt{2}e$.

    To conclude the proof, first note that (the ground truth) $Y=\widecheck{X}M$  is in $\Psi(\widecheck{X})$. For any $Z \in \Psi(\widecheck{X})$, we have
    \begin{align*}
        \widebar{\mathcal{L}}(Y|\widetilde{Y})-\widebar{\mathcal{L}}(Z|\widetilde{Y})
        &= \mathbb{E}[\widebar{\mathcal{L}}(Y|\widetilde{Y})-\widebar{\mathcal{L}}(Z|\widetilde{Y})]+(\widebar{\mathcal{L}}(Y|\widetilde{Y})-\mathbb{E}[\widebar{\mathcal{L}}(Y|\widetilde{Y})])-(\widebar{\mathcal{L}}(Z|\widetilde{Y})-\mathbb{E}[\widebar{\mathcal{L}}(Z|\widetilde{Y})])\\
        &=\mathbb{E}[\widebar{\mathcal{L}}(Y|\widetilde{Y})-\widebar{\mathcal{L}}(Z|\widetilde{Y})]+2\underset{Z' \in \Psi(\widecheck{X})}{\sup}|\widebar{\mathcal{L}}(Z'|\widetilde{Y})-\mathbb{E}[\widebar{\mathcal{L}}(Z'|\widetilde{Y})]|.
    \end{align*}

    Since $\mathbb{E}[\widebar{\mathcal{L}}(Z|\widetilde{Y})]=\mathbb{E}[\underset{(i,j)}{\sum}(Z_{ij}^2-2(Y_{ij}+G_{ij})Z_{ij})]=\underset{(i,j)}{\sum}(Z_{ij}^2-2Y_{ij}Z_{ij})$, we can compute $\mathbb{E}[\widebar{\mathcal{L}}(Y|\widetilde{Y})-\widebar{\mathcal{L}}(Z|\widetilde{Y})]=\underset{(i,j)}{\sum}-(Y_{ij}-Z_{ij})^2$.
    Thus, we get $\underset{(i,j)}{\sum}(Y_{ij}-Z_{ij})^2+\widebar{\mathcal{L}}(Y|\widetilde{Y})-\widebar{\mathcal{L}}(Z|\widetilde{Y})\leq 2\underset{Z' \in \Psi(\widecheck{X})}{\sup}|\widebar{\mathcal{L}}(Z'|\widetilde{Y})-\mathbb{E}[\widebar{\mathcal{L}}(Z'|\widetilde{Y})]|$. Now plug in the minimizer $Z=\Hat{Y}$ to both sides and use $\widebar{\mathcal{L}}(Y|\widetilde{Y})\geq \widebar{\mathcal{L}}(\hat{Y}|\widetilde{Y})$ to get
    $$
    \|Y-\hat{Y}\|_F^2=\underset{(i,j)}{\sum}(Y_{ij}-\hat{Y}_{ij})^2\leq 2\underset{Z' \in \Psi(\widecheck{X})}{\sup}|\widebar{\mathcal{L}}(Z'|\widetilde{Y})-\mathbb{E}[\widebar{\mathcal{L}}(Z'|\widetilde{Y})]|\leq 2\alpha C L_{\alpha,\beta}\sqrt{r d_1 d_2(d_1+d_2)},
    $$
    where the last inequality holds with probability at least $1-\frac{K}{d_1+d_2}$. Lastly, dividing both sides with $d_1d_2$ concludes the proof.
\end{proof}

With the above theorem in hand, the standard lemmas below will allow us to prove a result for neural networks with our assumptions on the rank of the pre-activations.

\begin{lemma}
    Let $\mathcal{C}$ be a compact convex set in $\mathbb{R}^D$ and $\mathcal{P}$ be the projection operator onto $\mathcal{C}$. Then $\forall x \in \mathbb{R}^D$, $\mathcal{P}(x)$ is uniquely determined.
\end{lemma}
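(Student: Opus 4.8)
The statement to prove is the classical fact that projection onto a closed convex set in a Hilbert space is single-valued. The plan is to argue by a variational/convexity argument rather than by invoking a general Hilbert-space theorem, so that the proof is self-contained at the level of $\R^D$.

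\textbf{Setup and strategy.} Fix $x \in \R^D$ and recall that $\mathcal{P}(x)$ is defined as a minimizer of $z \mapsto \|x - z\|_2$ over $z \in \mathcal{C}$; equivalently, a minimizer of the function $f(z) := \|x-z\|_2^2$, which I will work with since it is smooth and strictly convex. Existence of at least one minimizer follows from compactness of $\mathcal{C}$ together with continuity of $f$ (Weierstrass); since the lemma asserts uniqueness, existence is not really the point, but it is worth noting in one line. The core of the argument is uniqueness, and the key tool is strict convexity of $f$ combined with convexity of $\mathcal{C}$.

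\textbf{Uniqueness via strict convexity.} Suppose $p, q \in \mathcal{C}$ both attain the minimum value $\delta^2 := \min_{z \in \mathcal{C}} \|x - z\|_2^2$, and suppose $p \neq q$. Consider the midpoint $m := \tfrac{1}{2}(p+q)$, which lies in $\mathcal{C}$ by convexity. I would apply the parallelogram law in the form
\begin{equation*}
    \left\| x - m \right\|_2^2 = \frac{1}{2}\|x-p\|_2^2 + \frac{1}{2}\|x-q\|_2^2 - \frac{1}{4}\|p-q\|_2^2 = \delta^2 - \frac{1}{4}\|p-q\|_2^2 < \delta^2,
\end{equation*}
where the strict inequality uses $p \neq q$. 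This contradicts the minimality of $\delta^2$, so $p = q$. Hence $\mathcal{P}(x)$ is uniquely determined. (Alternatively, one can derive the same contradiction from the strict convexity inequality $f(m) < \tfrac12 f(p) + \tfrac12 f(q)$ for $p \neq q$, but the parallelogram identity makes the gap quantitatively explicit and is cleaner.)

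\textbf{Main obstacle.} There is no serious obstacle here; this is a textbook fact. The only point requiring a little care is making sure the argument uses both hypotheses honestly: convexity of $\mathcal{C}$ is what lets us form the midpoint $m$ inside $\mathcal{C}$, and the strict convexity of the squared Euclidean norm (equivalently, the parallelogram law with a genuinely negative cross term when $p\neq q$) is what forces the contradiction. Compactness is used only for existence; closedness and convexity alone suffice for uniqueness, but since the lemma assumes compactness I would simply use it for the existence clause and not belabor the distinction.
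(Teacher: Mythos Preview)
Your proof is correct and is the standard parallelogram-law argument for uniqueness of the metric projection onto a closed convex set. The paper itself does not prove this lemma; it is stated there as a standard result without proof and used as a black box to derive the subsequent corollary. Your write-up is more than adequate for what the paper requires.
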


\begin{lemma}
    Let $\mathcal{C}$ be a compact convex set in $\mathbb{R}^D$ and $\mathcal{P}$ be the projection operator onto $\mathcal{C}$. Then we have $\mathcal{P}$ is a contraction, i.e. $\|\mathcal{P}(x)-\mathcal{P}(y)\|_2\leq \|x-y\|_2$, $\forall x,y \in \mathbb{R}^D$.
\end{lemma}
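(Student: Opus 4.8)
The plan is to establish the classical obtuse-angle (variational) characterization of the metric projection and then combine two copies of it. First I would record the first-order optimality condition: for any $x\in\mathbb{R}^D$ and any $z\in\mathcal{C}$, the point $u:=\mathcal{P}(x)$ satisfies $\langle x-u,\,z-u\rangle\le 0$. To see this, fix $z\in\mathcal{C}$ and set $g(t):=\|x-(u+t(z-u))\|_2^2$ for $t\in[0,1]$; convexity of $\mathcal{C}$ guarantees $u+t(z-u)\in\mathcal{C}$, so by definition of $\mathcal{P}$ (and the uniqueness supplied by the preceding lemma) $g$ attains its minimum over $[0,1]$ at $t=0$. Since $g$ is a quadratic polynomial in $t$ with $g'(0)=-2\langle x-u,\,z-u\rangle$, a minimum at the left endpoint forces $g'(0^+)\ge 0$, which is exactly the claimed inequality.

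Next I would apply this characterization twice. Write $u=\mathcal{P}(x)$ and $v=\mathcal{P}(y)$. Using the condition once with $(x,z)=(x,v)$ and once with $(x,z)=(y,u)$ gives
\[
\langle x-u,\,v-u\rangle\le 0,\qquad \langle y-v,\,u-v\rangle\le 0 .
\]
Rewriting the second inequality as $\langle v-y,\,v-u\rangle\le 0$ and adding it to the first yields $\langle (x-y)+(v-u),\,v-u\rangle\le 0$, i.e. $\|u-v\|_2^2\le \langle x-y,\,u-v\rangle$. Cauchy--Schwarz then gives $\|u-v\|_2^2\le \|x-y\|_2\,\|u-v\|_2$, and dividing by $\|u-v\|_2$ (the inequality being trivial when $u=v$) concludes that $\|\mathcal{P}(x)-\mathcal{P}(y)\|_2\le\|x-y\|_2$.

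I do not anticipate a genuine obstacle here, as this is a standard fact about projections onto closed convex sets; the two points deserving care are the following. First, the whole argument presupposes that $\mathcal{P}(x)$ is a well-defined single point: existence follows because $z\mapsto\|x-z\|_2$ is continuous on the compact set $\mathcal{C}$, and uniqueness follows from strict convexity of the squared Euclidean norm (if $u_1,u_2$ were both minimizers, the parallelogram law would make the midpoint $(u_1+u_2)/2\in\mathcal{C}$ strictly closer unless $u_1=u_2$)—this is precisely the content of the previous lemma, which I would invoke. Second, in the optimality step one must justify passing from "minimum of $g$ on $[0,1]$ at $t=0$" to "$g'(0^+)\ge 0$"; since $g$ is a degree-two polynomial its one-sided derivative at $0$ exists and equals $-2\langle x-u,z-u\rangle$, so this is immediate. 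Everything else is bookkeeping with inner products.
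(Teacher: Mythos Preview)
Your argument is correct: the variational inequality $\langle x-\mathcal{P}(x),\,z-\mathcal{P}(x)\rangle\le 0$ for all $z\in\mathcal{C}$, applied symmetrically and combined with Cauchy--Schwarz, is exactly the standard proof of nonexpansiveness of the metric projection, and every step you outline goes through. The paper itself does not supply a proof of this lemma---it is stated as a standard fact and left unproved---so there is no alternative approach to compare against; your write-up would serve perfectly well as the omitted proof.
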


With the above lemmas, we have the following straightforward corollary.

\begin{corollary}\label{cor:recovery two}
    Let $X\in \mathbb{R}^{d_1 \times d}$ and $W\in \mathbb{R}^{d \times d_2}$ represent pretrained activation and weights. Assume $\|XW - (\widecheck{X}M+G)\|_F^2\leq \epsilon d_1d_2$, where $\widecheck{X}, M, G$ are as in the previous theorem. Then,  $$\hat{M}\in\underset{Z\in \Omega}{\argmin} \|XW-\widecheck{X}Z\|_F$$ yields a mean square error satisfying $$\frac{\|\widecheck{X}M-\widecheck{X}\hat{M}\|_F^2}{d_1 d_2}\lesssim (\alpha^2+\alpha\beta) \sqrt{\frac{r(d_1+d_2)}{d_1 d_2}}+\epsilon$$ with high probability.
\end{corollary}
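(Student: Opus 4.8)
The plan is to deduce the corollary from \cref{thm:recovery two} by a perturbation argument built on the contractivity of Euclidean projection onto a compact convex set. First I would note that the loss $\|XW-\widecheck{X}Z\|_F$ depends on $Z$ only through the product $\widecheck{X}Z$, and that $\widecheck{X}\Omega=\Psi(\widecheck{X})$, the compact convex set introduced in the proof of \cref{thm:recovery two} (convexity was established there; compactness follows since the nuclear-norm, $\infty$-norm, and column-span constraints jointly define a closed bounded set). Consequently, for any minimizer $\hat{M}\in\argmin_{Z\in\Omega}\|XW-\widecheck{X}Z\|_F$, the matrix $\hat{Y}':=\widecheck{X}\hat{M}$ equals the projection $\mathcal{P}_{\Psi(\widecheck{X})}(XW)$ of $XW$ onto $\Psi(\widecheck{X})$, which is well-defined and unique by the two lemmas preceding the corollary. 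Writing $Y:=\widecheck{X}M$ and $\widetilde{Y}:=Y+G$ (the observation model of \cref{thm:recovery two}) and setting $\hat{Y}:=\mathcal{P}_{\Psi(\widecheck{X})}(\widetilde{Y})$, this $\hat{Y}$ is exactly the estimator analyzed in \cref{thm:recovery two}, so that theorem yields $\|Y-\hat{Y}\|_F^2\lesssim(\alpha^2+\alpha\beta)\sqrt{r d_1 d_2(d_1+d_2)}$ on an event of probability at least $1-K/(d_1+d_2)$.

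Next I would control the discrepancy between the two projections. Since $\mathcal{P}_{\Psi(\widecheck{X})}$ is a contraction in Frobenius norm (the $\ell_2$-norm on vectorized matrices, to which the contraction lemma applies), we get $\|\hat{Y}'-\hat{Y}\|_F=\|\mathcal{P}_{\Psi(\widecheck{X})}(XW)-\mathcal{P}_{\Psi(\widecheck{X})}(\widetilde{Y})\|_F\le\|XW-\widetilde{Y}\|_F=\|XW-(\widecheck{X}M+G)\|_F\le\sqrt{\epsilon d_1 d_2}$ by the corollary's hypothesis. A triangle inequality then gives $\|\widecheck{X}M-\widecheck{X}\hat{M}\|_F=\|Y-\hat{Y}'\|_F\le\|Y-\hat{Y}\|_F+\sqrt{\epsilon d_1 d_2}$, and squaring via $(a+b)^2\le 2a^2+2b^2$ followed by dividing by $d_1 d_2$ produces $\frac{\|\widecheck{X}M-\widecheck{X}\hat{M}\|_F^2}{d_1 d_2}\lesssim(\alpha^2+\alpha\beta)\sqrt{\frac{r(d_1+d_2)}{d_1 d_2}}+\epsilon$ on the same event; the high-probability qualifier is inherited directly from \cref{thm:recovery two} since the $\epsilon$-term is deterministic.

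The main obstacle here is mild: there is no new probabilistic or spectral content beyond \cref{thm:recovery two}, so the work is purely in the reduction. The one point deserving care is (i) verifying that minimizing over $Z\in\Omega$ really yields the projection of $XW$ onto $\Psi(\widecheck{X})$ even when the map $Z\mapsto\widecheck{X}Z$ is many-to-one (which is fine precisely because the loss factors through $\widecheck{X}Z$), and (ii) checking that $\Psi(\widecheck{X})$ satisfies the compactness and convexity hypotheses of the projection lemmas so that both the uniqueness and the $1$-Lipschitz properties are available. I would also remark, as the paper does, that when $d_1\ge d$ and $\widecheck{X}$ is full rank the map $\Omega\to\Psi(\widecheck{X})$ is a bijection, so $\hat{M}$ itself is uniquely determined, but this is not needed for the error bound.
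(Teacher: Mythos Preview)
Your proposal is correct and follows essentially the same argument as the paper: both identify $\widecheck{X}\hat{M}$ and $\widecheck{X}M^*$ (your $\hat{Y}'$ and $\hat{Y}$) as projections of $XW$ and $\widetilde{Y}=\widecheck{X}M+G$ onto $\Psi(\widecheck{X})$, invoke the contraction lemma to bound $\|\widecheck{X}\hat{M}-\widecheck{X}M^*\|_F$ by $\sqrt{\epsilon d_1 d_2}$, and combine with the bound from \cref{thm:recovery two} via the triangle inequality and $(a+b)^2\le 2a^2+2b^2$. Your additional remarks on compactness and the many-to-one case are correct refinements that the paper leaves implicit.
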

\begin{proof}
    Let $\widetilde{Y}=\widecheck{X}M+G$. By \cref{thm:recovery two},  the minimizer $M^* \in \underset{Z\in \Omega}{\argmin} \|\widetilde{Y}-\widecheck{X}Z\|_F$ satisfies $$\frac{\|\widecheck{X}M-\widecheck{X} M^*\|_F^2}{d_1 d_2}\lesssim_{\alpha, \beta} \sqrt{\frac{r(d_1+d_2)}{d_1 d_2}}$$ with high probability.
    Recall that we defined $\Psi(\widecheck{X})=\{Y\in \mathbb{R}^{d_1 \times d_2}: \|Y\|_*\leq\alpha \sqrt{r d_1 d_2};\ \|Y\|_{\infty}\leq \alpha ;\ Y_i\in \mathrm{span}\{\mathrm{col}(\widecheck{X})\}, i=1,......, d_2  \}$ and we have $\widecheck{X}\Omega = \Psi(\widecheck{X})$. Denote $\mathcal{P}$ the projection onto $\Psi(\widecheck{X})$. By the definition of our minimization problem, we have $\widecheck{X} M^*=\mathcal{P}(\widetilde{Y})$ and $\widecheck{X}\hat{M}=\mathcal{P}(XW)$. Since projection is contractive, we have $\|\widecheck{X} M^* - \widecheck{X}\hat{M}\|_F^2 \leq \|\widetilde{Y} - XW\|_F^2\leq \epsilon d_1d_2$. Thus
    \begin{align*}
        \frac{\|\widecheck{X}M-\widecheck{X}\hat{M}\|_F^2}{d_1 d_2}
       &=\frac{\|(\widecheck{X}M-\widecheck{X} M^*)+(\widecheck{X} M^* - \widecheck{X}\hat{M})\|_F^2}{d_1 d_2} \\
       &\leq \frac{2(\|\widecheck{X}M-\widecheck{X} M^*\|_F^2+\|\widecheck{X} M^* - \widecheck{X}\hat{M}\|_F^2)}{d_1 d_2}\\
       &\lesssim (\alpha^2+\alpha\beta) \sqrt{\frac{r(d_1+d_2)}{d_1 d_2}}+\epsilon.
    \end{align*}

\end{proof}

\begin{remark}[Further Remarks on the Approximately Low-Rank Constraint]
    Let $\Delta = \{Y : Y \in \mathbb{R}^{d_1 \times d_2}, \|Y\|_* \leq \alpha \sqrt{r d_1 d_2};\ \|Y\|_{\infty} \leq \alpha \}$ and $S = \{Y : Y \in \mathbb{R}^{d_1 \times d_2}, \mathrm{rank}(Y) \leq r;\ \|Y\|_{\infty} \leq \alpha \}$. As discussed in \cref{def:approx lr}, the assumption of being approximately rank-$r$ is weaker than the strict requirement of $\mathrm{rank}(Y) = r$ and it follows that $\mathrm{conv}(S) \subseteq \Delta$, where $\mathrm{conv}(\cdot)$ denotes the convex hull.  Precisely quantifying the difference between $\Delta$ and $\mathrm{conv}(S)$ is challenging, especially given the use of the $\ell_\infty$ norm in this setting. The $\ell_\infty$ norm is particularly relevant for neural networks, where various training and regularization techniques are employed to control activations and avoid instability. However, if the $\ell_\infty$ condition is replaced by the operator norm, classical results from the literature \cite{fazel2002matrix}, are typically applicable. Indeed, $\{\mathbf{X}:\|\mathbf{X}\|_*\leq1\}$ is the convex hull of the set of rank-1 matrices obeying $\|\mathbf{u}\mathbf{v}^\top\|_{\mathrm{op}}\leq 1$ but it is not clear whether an analogous result holds in our case. 
\end{remark} 

\if{
\begin{lemma}(Tightness)\\
    The set $\{\mathbf{X}:\|\mathbf{X}\|_*\leq1\}$ is the convex hull of the set of rank-1 matrices obeying $\|\mathbf{u}\mathbf{v}^\top\|_{\mathrm{op}}\leq 1$.
\end{lemma}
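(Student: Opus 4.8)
The plan is to establish set equality by proving both inclusions, using the singular value decomposition for the non-trivial direction. Write $S_1 = \{\mathbf{u}\mathbf{v}^\top : \|\mathbf{u}\mathbf{v}^\top\|_{\mathrm{op}}\le 1\}$ and $B_* = \{\mathbf{X} : \|\mathbf{X}\|_*\le 1\}$; we want $\mathrm{conv}(S_1) = B_*$.

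For the easy inclusion $\mathrm{conv}(S_1)\subseteq B_*$, note that for a rank-1 matrix one has $\|\mathbf{u}\mathbf{v}^\top\|_* = \|\mathbf{u}\mathbf{v}^\top\|_{\mathrm{op}} = \|\mathbf{u}\|_2\|\mathbf{v}\|_2$, since its only nonzero singular value is $\|\mathbf{u}\|_2\|\mathbf{v}\|_2$. Hence every element of $S_1$ has nuclear norm at most $1$, and since $\|\cdot\|_*$ is a norm its unit ball $B_*$ is convex, so it contains $\mathrm{conv}(S_1)$.

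For the reverse inclusion, take any $\mathbf{X}$ with $\|\mathbf{X}\|_*\le 1$ and write a compact SVD $\mathbf{X} = \sum_{i=1}^{k}\sigma_i\mathbf{u}_i\mathbf{v}_i^\top$ with $\sigma_i>0$, orthonormal $\{\mathbf{u}_i\}$ and $\{\mathbf{v}_i\}$, and $\sum_{i=1}^k\sigma_i = \|\mathbf{X}\|_*\le 1$. Each $\mathbf{u}_i\mathbf{v}_i^\top\in S_1$. If $\sum_i\sigma_i=1$ this already exhibits $\mathbf{X}$ as a convex combination of elements of $S_1$; if $\sum_i\sigma_i<1$, set $\sigma_0 = 1-\sum_i\sigma_i>0$ and use that the zero matrix is itself a convex combination of admissible rank-1 matrices, e.g.\ $\mathbf{0} = \tfrac12\mathbf{u}_1\mathbf{v}_1^\top + \tfrac12(-\mathbf{u}_1)\mathbf{v}_1^\top$ with $\|(-\mathbf{u}_1)\mathbf{v}_1^\top\|_{\mathrm{op}}=1$; then $\mathbf{X} = \sigma_0\cdot\mathbf{0} + \sum_{i=1}^k\sigma_i\mathbf{u}_i\mathbf{v}_i^\top\in\mathrm{conv}(S_1)$. (The degenerate case $\mathbf{X}=\mathbf{0}$ is handled by the same zero-matrix trick.)

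I do not expect a genuine obstacle; the only points needing a moment's care are minor. First, one should check that the two natural descriptions of the generating set, $\{\mathbf{u}\mathbf{v}^\top:\|\mathbf{u}\|_2\|\mathbf{v}\|_2\le 1\}$ and $\{\mathbf{u}\mathbf{v}^\top:\|\mathbf{u}\|_2\le 1,\ \|\mathbf{v}\|_2\le 1\}$, coincide, which follows by rescaling $\mathbf{u}\mapsto t\mathbf{u}$, $\mathbf{v}\mapsto \mathbf{v}/t$ for any $t\in[\|\mathbf{v}\|_2,\,1/\|\mathbf{u}\|_2]$. Second, the $\|\mathbf{X}\|_*<1$ case, dispatched above. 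If one also wants the convex hull to be literally closed (so the identity is between two closed convex bodies), observe that $S_1$ is the image of the compact set $B_2^{d_1}\times B_2^{d_2}$ under the continuous map $(\mathbf{u},\mathbf{v})\mapsto\mathbf{u}\mathbf{v}^\top$, hence compact, so by Carath\'eodory's theorem $\mathrm{conv}(S_1)$ is already compact and no closure operation is needed. An alternative route is the duality/extreme-point viewpoint — $\|\cdot\|_*$ is the dual norm of $\|\cdot\|_{\mathrm{op}}$, the unit ball of a norm on a finite-dimensional space is the convex hull of its extreme points, and the extreme points of $B_*$ are exactly the unit-norm rank-1 matrices — but the SVD argument above is self-contained and avoids having to separately characterize the extreme points.
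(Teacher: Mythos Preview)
Your proof is correct. The two-inclusion argument via the SVD is the standard way to establish this classical fact, and you have handled the boundary cases (strict inequality $\|\mathbf{X}\|_*<1$ and the degenerate case $\mathbf{X}=\mathbf{0}$) cleanly by expressing the zero matrix as a convex combination of two antipodal rank-1 elements of $S_1$.

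As for comparison with the paper: the paper does not actually prove this lemma. It is stated (in a commented-out block) without proof, and in the surrounding remark it is invoked only as a ``classical result from the literature'' with a citation to Fazel's thesis \cite{fazel2002matrix}. So there is nothing in the paper to compare your argument against beyond the citation. Your SVD-based argument is essentially the textbook proof one would find following that reference; the alternative duality/extreme-point route you mention at the end is the other standard approach, and either is adequate here.
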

Let $\mathcal{C}$ be a given convex set. The \emph{convex envelope} of a (possibly non-convex) function $f:\mathcal{C}\rightarrow\mathbb{R}$ is defined to be the largest convex function $g$ which is bounded above by $f$ uniformly over $\mathcal{C}$.
\begin{theorem}
    The convex envelope of function $\mathrm{rank}(\cdot)$ on the set $\mathcal{C}=\{\|\mathbf{X}\|_{\mathrm{op}}\leq1\}$ is the nuclear norm $\|\cdot\|_*$.
\end{theorem}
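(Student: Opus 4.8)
The plan is to establish the two inequalities $\|\cdot\|_*\le g$ and $g\le\|\cdot\|_*$ on $\mathcal{C}:=\{\|\mathbf{X}\|_{\mathrm{op}}\le 1\}$, where $g$ denotes the convex envelope of $\mathrm{rank}(\cdot)$ on $\mathcal{C}$. First I would dispose of the direction $\|\cdot\|_*\le g$: if $X\in\mathcal{C}$ has $\mathrm{rank}(X)=k$, its nonzero singular values $\sigma_1\ge\cdots\ge\sigma_k$ satisfy $\sigma_1=\|X\|_{\mathrm{op}}\le 1$, so $\|X\|_*=\sum_{i=1}^k\sigma_i\le k=\mathrm{rank}(X)$; since $X\mapsto\|X\|_*$ is convex (being a norm) and finite, it is one of the convex functions dominated by $\mathrm{rank}$ on $\mathcal{C}$, and hence lies below the largest such function, namely $g$.

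For the reverse inequality I would pass to epigraphs. Set $h:=\mathrm{rank}+\iota_{\mathcal{C}}$ with $\iota_{\mathcal{C}}$ the convex indicator of $\mathcal{C}$. The convex envelope is the function whose epigraph is $\mathrm{conv}(\mathrm{epi}\,h)$, so it is enough to show $(X,\|X\|_*)\in\mathrm{conv}(\mathrm{epi}\,h)$ for each $X\in\mathcal{C}$. Take an SVD $X=\sum_{i=1}^r\sigma_i u_iv_i^{\top}$ with $\sigma_1\ge\cdots\ge\sigma_r>0$, put $\sigma_{r+1}:=0$, and define the partial sums $P_k:=\sum_{i=1}^k u_iv_i^{\top}$; these satisfy $\mathrm{rank}(P_k)=k$ and $\|P_k\|_{\mathrm{op}}=1$, so $(P_k,k)\in\mathrm{epi}\,h$ for $1\le k\le r$ and $(P_0,0)=(0,0)\in\mathrm{epi}\,h$. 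Using the staircase weights $\mu_k:=\sigma_k-\sigma_{k+1}\ge 0$ for $1\le k\le r$ together with $\mu_0:=1-\sum_{k=1}^r\mu_k=1-\sigma_1\ge 0$, one checks $\sum_{k=0}^r\mu_k=1$, $\sum_{k=0}^r\mu_k P_k=\sum_{i=1}^r\sigma_i u_iv_i^{\top}=X$ (interchange the order of summation), and $\sum_{k=0}^r\mu_k\cdot k=\sum_{i=1}^r\sigma_i=\|X\|_*$ (Abel summation). Thus $(X,\|X\|_*)=\sum_{k=0}^r\mu_k(P_k,k)\in\mathrm{conv}(\mathrm{epi}\,h)$. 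Because $\mathrm{epi}\,h$, and hence $\mathrm{conv}(\mathrm{epi}\,h)$, is closed under increasing the last coordinate, this gives $\mathrm{conv}(\mathrm{epi}\,h)\supseteq\{(X,t):X\in\mathcal{C},\ t\ge\|X\|_*\}=\mathrm{epi}(\|\cdot\|_*+\iota_{\mathcal{C}})$; conversely $\|\cdot\|_*+\iota_{\mathcal{C}}$ is convex and, by the first paragraph, dominated by $h$, so its (convex) epigraph contains $\mathrm{epi}\,h$ and therefore $\mathrm{conv}(\mathrm{epi}\,h)$. Hence the two sets are equal and $g=\|\cdot\|_*$ on $\mathcal{C}$.

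The main thing to handle carefully is the step ``the convex envelope is the function whose epigraph is $\mathrm{conv}(\mathrm{epi}\,h)$'': this needs $\mathrm{conv}(\mathrm{epi}\,h)$ to actually be an epigraph (which follows from its upward-closedness) and the identity $\mathrm{co}\,h(X)=\inf\{t:(X,t)\in\mathrm{conv}(\mathrm{epi}\,h)\}$ coming straight from the definition of convex hull; one should also remark that, since $\mathcal{C}$ is compact and $\|\cdot\|_*$ is continuous, $\mathrm{conv}(\mathrm{epi}\,h)$ is already closed, so the convex envelope coincides with the biconjugate $h^{**}$ and no lower-semicontinuous gap appears. An alternative, essentially equivalent route computes the Fenchel conjugate directly, $h^*(Y)=\sup_{\|X\|_{\mathrm{op}}\le 1}\bigl(\langle X,Y\rangle-\mathrm{rank}(X)\bigr)=\sum_i(\sigma_i(Y)-1)_+$, by maximizing rank by rank via von Neumann's trace inequality, and then checks $h^{**}=\|\cdot\|_*+\iota_{\mathcal{C}}$ by the same inequality. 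I would keep the explicit staircase decomposition as the primary argument, since it is self-contained and mirrors the rank-one convex-hull description of the nuclear-norm ball in the preceding Tightness lemma.
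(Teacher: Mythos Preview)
The paper does not actually prove this theorem: it appears only as a stated classical result (in fact inside a commented-out block of the source), attributed implicitly to Fazel's thesis \cite{fazel2002matrix}, so there is no in-paper argument to compare against. Your proof is correct and self-contained; the staircase decomposition $X=\sum_{k=0}^r(\sigma_k-\sigma_{k+1})P_k$ with $P_k=\sum_{i\le k}u_iv_i^\top$ is precisely the standard constructive argument one finds in Fazel's work, and your handling of the epigraph characterization (including the remark that $\mathrm{conv}(\mathrm{epi}\,h)$ is already closed, so no gap between the convex envelope and the biconjugate arises) is appropriate and more careful than most textbook treatments.
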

}\fi

\if{
\begin{remark}
    Assume $X$ has independent symmetric
    Bernoulli random entries (The proof works for general sub-Gaussian entries.) and $XW$ is approximately low-rank, i.e. $$\|XW\|_*\leq \|XW\|_{\infty} \sqrt{r d'd}.$$
    Then $\|W\|_*=\|W\|_{S_1}=\|X^\dagger(XW)\|_{S_1}\leq \|X^\dagger\|_{S_\infty}\|XW\|_{S_1}=\|X^\dagger\|_{\mathrm{op}}\|XW\|_*$ by Hölder's inequality for Schatten norms. We have $\|X^\dagger\|_{\mathrm{op}}=\frac{1}{\lambda_{\min}(X)}\lesssim \frac{1}{\sqrt{d'}}$ with high probability by Theorem 4.6.1 in \cite{vershynin2020high}. Recall that we use $\|\cdot\|_{\infty}$ for element-wise $\ell_\infty$-norm of a matrix.

    Additionally, $\|XW\|_{\infty} \lesssim \sqrt{d\log(d'd)}\|W\|_{\infty}$ with high probability by Hoeffding’s inequality and a union bound, where I bound the $\ell_2$-norm of each row of $W$ by $\sqrt{d}\|W\|_{\infty}$.
    Putting things together, we can get
    $$
    \|W\|_* \lesssim \frac{1}{\sqrt{d'}} \sqrt{r d'd} \sqrt{d\log(d'd)}\|W\|_{\infty}=\sqrt{rd^2\log(d'd)}\|W\|_{\infty}
    $$
    with high probability.
    The only difference from the definition of $W$ being approximately low-rank is on the order of $\mathcal{O}(\sqrt{\log(d'd)})$.
\end{remark}
We will have similar conclusion for $XW$ if we start with assuming $W$ is approximately low rank $\|W\|_*\leq \|W\|_{\infty} \sqrt{r d^2}$. Thus, we have showed that just as assuming $XW$ low-rank is equivalent to directly assuming $W$ to be low-rank, assuming $XW$ approximately low-rank is also closely related to assuming $W$ to be approximately low-rank. In view of this, we choose to make our assumption on $\widecheck{X}M$ in \cref{thm:recovery two} and \cref{cor:recovery two}. This is also in the spirit of emphasizing our observation that weight matrices in pretrained models, while not exactly low-rank, exhibit low-rank behavior when interacting with data distributions.
}\fi

\if{\subsection{A Discussion on Whole Model Error(TBD)}\label{sec:whole model}\RSnote{If we want to keep this, we should start with the conclusion/summary of the result, and then either prove it or explain how it is obtained. We should also comment in the beginning on the significance of the result and why it is diffucult to improve.}

We now show \cref{cor:recovery two} is compatible with our model assumptions \ref{assumption two} following the notations set up in \cref{sec:notation} and derive a whole model error when the neural network is well behaved. For simplicity we assume $N_i=d$, $\forall i\geq 0$ and the sample size $m=d$. We add an assumption that matrices $M^{(i)}$ in the underlying approximately low rank model have a uniformly bounded operator norm $K\geq \|M^{(i)}\|_2^2$. This is a reasonable assumption as weight matrices of shape $d\times d$ with Xavier or He initialization will have operator norm of $\mathcal{O}(1)$ with high probability.

Let us populate a batch data $X_0 \in \mathbb{R}^{d\times d}$ into the network. By assumption, $\|X_0W^{(1)}-(X_0M^{(1)}+G^{(1)})\|_F^2\leq \epsilon_1 d^2$. \cref{cor:recovery two} tells that $\frac{\|X_0M^{(1)}-X_0\hat{M}^{(1)}\|_F^2}{d^2}\leq C(\alpha_1,\beta_1) \sqrt{\frac{r_1}{m}}+2\epsilon_1:=e_1$, where $C(\alpha_1,\beta_1)$ is a constant depending on $\alpha_1,\beta_1$ and $\hat{M}^{(1)}:=\underset{Z\in \Omega}{\argmin} \|X^{(0)}W^{(1)}-\widetilde{X}^{(0)}Z\|_F$. By our notation set up, $X^{(1)}=\rho(X_0W^{(1)})$, $\widecheck{X}^{(1)}=\rho(X_0M^{(1)})$ and $\widetilde{X}^{(1)}=\rho(X_0\hat{M}^{(1)})$. Using the Lipschitz property of our activation functions, we have $\|\widetilde{X}^{(1)}-\widecheck{X}^{(1)}\|_F^2 \leq e_1 d^2$. Then, $\|\widetilde{X}^{(1)}M^{(2)}-\widecheck{X}^{(1)}M^{(2)}\|_F^2 \leq K e_1 d^2$ by our assumption of a uniform bound on the operator norm of $M^{(i)}$.

Let us continue inductively. Using assumption for the second layer, we have $\|X^{(1)}W^{(2)}-(\widecheck{X}^{(1)}M^{(2)}+G^{(2)})\|_F^2\leq \epsilon_2 d^2$. Since we only have access to $\widetilde{X}^{(1)}$ instead of $\widecheck{X}^{(1)}$, we use triangle inequality to get $\|X^{(1)}W^{(2)}-(\widetilde{X}^{(1)}M^{(2)}+G^{(2)})\|_F^2\leq 2(\epsilon_2+K e_1) d^2$. Then, $\hat{M}^{(2)}:=\underset{Z\in \Omega}{\argmin} \|X^{(1)}W^{(2)}-\widetilde{X}^{(1)}Z\|_F$ will satisfy $\frac{\|\widetilde{X}^{(1)}M^{(2)}-\widetilde{X}^{(1)}\hat{M}^{(2)}\|_F^2}{d^2}
\leq C(\alpha_2,\beta_2) \sqrt{\frac{r_2}{m}}+4(\epsilon_2+K e_1)$. 

Remember we always compare the compressed model $\widetilde{\mathbf{\Phi}}$ with the underlying assumption model $\widecheck{\mathbf{\Phi}}$. By another application of triangle inequality to $\widecheck{X}^{(1)}M^{(2)}-\widetilde{X}^{(1)}\hat{M}^{(2)}=(\widecheck{X}^{(1)}M^{(2)}-\widetilde{X}^{(1)}M^{(2)})+(\widetilde{X}^{(1)}M^{(2)}-\widetilde{X}^{(1)}\hat{M}^{(2)})$, we have $\frac{\|\widecheck{X}^{(1)}M^{(2)}-\widetilde{X}^{(1)}\hat{M}^{(2)}\|_F^2}{d^2}\leq 2\left( C(\alpha_2,\beta_2) \sqrt{\frac{r_2}{m}}+4(\epsilon_2+K e_1)\right)+2Ke_1=10Ke_1+2\left( C(\alpha_2,\beta_2) \sqrt{\frac{r_2}{m}}+4\epsilon_2\right)$. 

If we denote $\tilde{\epsilon}:=2\max_i ( C(\alpha_i,\beta_i)\sqrt{\frac{r_i}{d}}+4\epsilon_i)$, then we immediately have $e_1=C(\alpha_1,\beta_1) \sqrt{\frac{r_1}{d}}+2\epsilon_1 \leq \tilde{\epsilon}$. Let $\gamma:=10K$. If we use $e_l$ to denote the mean square error of $\widetilde{\mathbf{\Phi}}$ against $\widecheck{\mathbf{\Phi}}$ at layer index $l$, we have
\begin{equation}\label{whole model}
    e_{l+1}\leq \gamma e_l+\tilde{\epsilon},
\end{equation}
 i.e. the mean square error obeys the sub-linear update rule. 

Iteratively applying \cref{whole model}, we can get the last layer mean square error 
\begin{align*}
    &\frac{\|\widecheck{X}^{(L-1)}M^{(L)}-\widetilde{X}^{(L-1)}\hat{M}^{(L)}\|_F^2}{d^2} \\
    =&e_L \\
    \leq&\gamma e_{L-1}+\tilde{\epsilon} \\
    \leq&...... \\
    \leq&\gamma^{L-1}e_1+\tilde{\epsilon}(\gamma^{L-2}+......+\gamma+1)\\
    \leq&\tilde{\epsilon}(\gamma^{L-1}+......+\gamma+1) \\
    \leq&\tilde{\epsilon}\cdot \begin{cases}
    \frac{\gamma^L-1}{\gamma-1}, & \text{if } \gamma>1 \\
    \frac{1}{1-\gamma}, & \text{if} \gamma<1
    \end{cases}
\end{align*}
From the last inequality, we can also see why modern training favors regularization techniques like weight decay, spectral normalization or orthogonal regularization to control entry magnitudes and operator norms. If $\gamma<1$, then the error will remain nicely controlled. When $\gamma>1$, the error increases exponentially with the depth of the network. 

Assume the parameters $\alpha_i, \beta_i, r_i$ remains bounded and we over-parameterize the neural network, i.e. let $d$ be large. When the pre-activations in the original neural network is almost a noisy version of those from an underlying approximately low rank network, which means the $\epsilon_i$ are all tiny, then $\tilde{\epsilon}$ is on the order of $\mathcal{O}(\frac{1}{\sqrt{d}})$. }\fi

\subsection{Non-linear Recovery Theorem}

\Cref{thm:recovery two} demonstrates that by minimizing the linear reconstruction error (i.e., the error in approximating the pre-activations) one can approximately recover $M$ with high probability from $ XW \approx \widecheck{X}M +G$. In the context of neural network compression via low-rank approximation, prior works \cite{zhang2015accelerating, papadimitriou2021data} have also explored recovering $M$ from the (non-linear) activations $ \rho(XW) \approx \rho(\widecheck{X}M +G)$. This often entails solving $\underset{N}{\min} \|\rho(XW)-\rho(\widecheck{X}N)\|_F$ instead of its linearized counterpart $\underset{N}{\min} \|XW-\widecheck{X}N\|_F$. Empirical results in, e.g.,  \cite{zhang2015accelerating} suggest that accounting for the non-linearity yields better low-rank compression before fine-tuning. 

Deriving theory to explain this observation is non-trivial for a number of reasons. On the one hand, as neural network loss functions typically depend on the activations $\rho(XW)$, then approximating this quantity should in principle yield better results. On the other hand, the approximation task itself is more difficult for at least two reasons. First, it involves the added challenge of dealing with the non-convexity and non-smoothness introduced by $\rho$. Second, from a signal recovery perspective, recovering $M$ from the non-linear observations $\rho(\widecheck{X}M+G)$ is more difficult since $\rho$ sets all negative values to zero, thereby  eliminating information. 


The following theorem establishes that a comparable error bound--up to constants and logarithmic factors--holds when recovering $M$ from $Z = \rho(\widecheck{X}M+G)$ via minimizing a tight convex relaxation of $\|Z-\rho(\widecheck{X}N)\|_F$. A more detailed discussion of this relaxation is provided in \cref{sec:con to frob}. The proof is technically more intricate than that of \Cref{thm:recovery two}. Moreover, an additional $\sqrt{\log d}$ term in the error bound accounts for potential outliers in $Z$ caused by the unbounded noise $G$.

\begin{theorem}\label{thm:nonlinear recovery}(Nonlinear Recovery Theorem)~\\
    Let $\widecheck{X}  
    \in \mathbb{R}^{d_1 \times d}$, $d_1\geq d$ be a full rank matrix with $\check{x}_i^\top, i=1,...,d_1$ as its rows, and let $M\in \mathbb{R}^{d \times d_2}$ 
    be such that $\widecheck{X}M$ is approximately rank-$r$ with $\|\widecheck{X}M\|_{\infty}\leq \alpha$. Let $Z = \rho(\widecheck{X}M+G)$, where $G\in \mathbb{R}^{d_1 \times d_2}$ is a random Gaussian matrix with i.i.d. $\mathcal{N}(0, \sigma^2)$ entries.  Define $\Omega=\{N: N\in \mathbb{R}^{d \times d_2}, \|\widecheck{X}N\|_*\leq\alpha \sqrt{r d_1 d_2};\ \|\widecheck{X}N\|_{\infty}\leq \alpha \}$ and    denote by $f(x)$ the CDF of the normal distribution $\mathcal{N}(0, \sigma^2)$, then with probability at least $1-(\frac{K}{d_1+d_2}+\frac{1}{2\sqrt{2\pi}}\frac{1}{d_1d_2\sqrt{\log(d_1d_2)}})$, the solution $\hat{M}$ to 
    \begin{align*}\label{opt P star prime}
    &\max_N\underset{(i,j):Z_{ij}>0}{\sum}\log\left(\frac{1}{\sqrt{2\pi}\sigma}e^{-\frac{(Z_{ij}-\langle \check{x}_i,N_j \rangle)^2}{2\sigma^2}}\right)+\underset{(i,j):Z_{ij}=0}{\sum}\log\left(1-f(\langle \check{x}_i,N_j \rangle)\right) \tag{$P_*'$} \\
    &\text{subject to} \ N \in \Omega \  
    \end{align*}
    satisfies
    \begin{equation}\label{result}
    \frac{1}{d_1d_2}\|\widecheck{X}M-\widecheck{X}\hat{M}\|_F^2 \leq C_{\alpha, \sigma} \max\left\{2\sqrt{\log(d_1d_2)}, 8\right\} \sqrt{\frac{r(d_1+d_2)}{d_1 d_2}}.
    \end{equation}
    Here, $K$ is an absolute constant. $C_{\alpha, \sigma}=16C\alpha\beta_{\alpha, \sigma}\gamma_{\alpha, \sigma}$ where $C$ is an absolute constant, $\beta_{\alpha, \sigma} = \pi \sigma^2 e^{\alpha^2/2\sigma^2}$ and $\gamma_{\alpha, \sigma} =\frac{\alpha + \sigma}{\sigma^2}$. $N_j$ is the $j$-th column of $N$.
\end{theorem}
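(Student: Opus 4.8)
The plan is to run the maximum-likelihood / empirical-process argument of \cite{davenport20141} that already underlies \cref{thm:recovery two}, but now for the \emph{censored-Gaussian} observation model produced by $\rho$. Write $Y=\widecheck{X}M$ and recall the convex set $\Psi(\widecheck{X})=\{V\in\mathbb{R}^{d_1\times d_2}:\|V\|_*\le\alpha\sqrt{rd_1d_2},\ \|V\|_\infty\le\alpha,\ V_j\in\mathrm{span}(\mathrm{col}(\widecheck{X}))\}$ from that proof, so $Y\in\Psi(\widecheck{X})$ and $\widecheck{X}\Omega=\Psi(\widecheck{X})$. For $\mu\in\mathbb{R}$ let $p_\mu$ denote the law of $\rho(\mathcal{N}(\mu,\sigma^2))$: a point mass $1-f(\mu)$ at $0$ together with the density $\tfrac{1}{\sqrt{2\pi}\sigma}e^{-(z-\mu)^2/2\sigma^2}$ on $(0,\infty)$. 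Then $(P_*')$ is exactly the maximization over $N\in\Omega$ of the log-likelihood $\mathcal{L}_Z(V):=\sum_{i,j}\log p_{V_{ij}}(Z_{ij})$ with $V_{ij}=\langle\check{x}_i,N_j\rangle$, viewed as a concave problem on $\Psi(\widecheck{X})$; hence $\hat{Y}:=\widecheck{X}\hat{M}\in\Psi(\widecheck{X})$ satisfies $\mathcal{L}_Z(\hat{Y})\ge\mathcal{L}_Z(Y)$. Centering by $\mathcal{L}_Z(\mathbf{0})$ cancels the $Z_{ij}^2$ contributions, leaving $\bar{\mathcal{L}}_Z(V):=\mathcal{L}_Z(V)-\mathcal{L}_Z(\mathbf{0})$ a sum of per-entry terms $-\tfrac{V_{ij}^2-2Z_{ij}V_{ij}}{2\sigma^2}$ on $\{Z_{ij}>0\}$ and $\log\tfrac{1-f(V_{ij})}{1-f(0)}$ on $\{Z_{ij}=0\}$.

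\textbf{From likelihood optimality to an error bound.} For fixed $V\in\Psi(\widecheck{X})$ one has $\mathbb{E}_Z[\bar{\mathcal{L}}_Z(Y)-\bar{\mathcal{L}}_Z(V)]=\sum_{i,j}\mathrm{KL}(p_{Y_{ij}}\,\|\,p_{V_{ij}})\ge 0$, so
$$\sum_{i,j}\mathrm{KL}(p_{Y_{ij}}\|p_{V_{ij}})+\big(\bar{\mathcal{L}}_Z(Y)-\bar{\mathcal{L}}_Z(V)\big)\ \le\ 2\sup_{V'\in\Psi(\widecheck{X})}\big|\bar{\mathcal{L}}_Z(V')-\mathbb{E}\bar{\mathcal{L}}_Z(V')\big|.$$
Plugging $V=\hat{Y}$ and dropping the nonpositive term $\bar{\mathcal{L}}_Z(Y)-\bar{\mathcal{L}}_Z(\hat{Y})$ gives $\sum_{i,j}\mathrm{KL}(p_{Y_{ij}}\|p_{\hat{Y}_{ij}})\le 2\sup_{V'}|\cdots|$. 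The next ingredient is a curvature lemma: for $|\mu|,|\nu|\le\alpha$ the map $\mu\mapsto\log p_\mu(z)$ is $\tfrac{1}{\beta_{\alpha,\sigma}}$-strongly concave uniformly in $z$ — the continuous part supplies curvature $1/\sigma^2$, the atomic part $\log(1-f(\cdot))$ is concave by log-concavity of the Gaussian survival function, and the quantitative constant $\beta_{\alpha,\sigma}=\pi\sigma^2 e^{\alpha^2/2\sigma^2}$ comes from lower-bounding the per-entry Fisher information over $|\mu|\le\alpha$ — hence $\mathrm{KL}(p_\mu\|p_\nu)\ge\tfrac{1}{2\beta_{\alpha,\sigma}}(\mu-\nu)^2$, and summing over $(i,j)$ yields $\|Y-\hat{Y}\|_F^2\le 4\beta_{\alpha,\sigma}\sup_{V'\in\Psi(\widecheck{X})}|\bar{\mathcal{L}}_Z(V')-\mathbb{E}\bar{\mathcal{L}}_Z(V')|$.

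\textbf{The deviation term.} Introduce the event $\mathcal{E}:=\{\|G\|_\infty\le 2\sigma\sqrt{\log(d_1d_2)}\}$, which by a Gaussian tail bound and a union bound (cf. \cref{gaussian infty}) has probability at least $1-\tfrac{1}{2\sqrt{2\pi}}\tfrac{1}{d_1d_2\sqrt{\log(d_1d_2)}}$, and pass to a truncated surrogate loss that agrees with $\bar{\mathcal{L}}_Z$ on $\Psi(\widecheck{X})$ throughout $\mathcal{E}$ but is globally Lipschitz in the signal (cap the continuous term's argument at radius $2\alpha+2\sigma\sqrt{\log(d_1d_2)}$). On $\mathcal{E}$ one has $|Z_{ij}|\le\alpha+2\sigma\sqrt{\log(d_1d_2)}$, so each per-entry summand, as a function of $V_{ij}\in[-\alpha,\alpha]$, is Lipschitz with constant at most $\gamma_{\alpha,\sigma}\max\{2\sqrt{\log(d_1d_2)},8\}$ — the $\sqrt{\log}$ factor from the continuous term and the atomic term controlled through the Gaussian hazard rate $\phi_\sigma/(1-\Phi_\sigma)$ on $[-\alpha,\alpha]$ by $\gamma_{\alpha,\sigma}=\tfrac{\alpha+\sigma}{\sigma^2}$. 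Then one repeats the chain of estimates from the proof of \cref{thm:recovery two}: Markov's inequality on the $h$-th moment of the supremum, symmetrization via Lemma \ref{symmetrization}, the contraction principle \ref{contraction} with the Lipschitz constant just identified, $\langle E,V\rangle\le\|E\|\,\|V\|_*\le\|E\|\,\alpha\sqrt{rd_1d_2}$ with $\mathbb{E}\|E\|^h\lesssim(\sqrt{d_1+d_2})^h$ by Lemma \ref{Rade norm}, and the choice $h=\lceil\log(d_1+d_2)\rceil$. This gives, with probability at least $1-\tfrac{K}{d_1+d_2}$, that $\sup_{V'}|\bar{\mathcal{L}}_Z(V')-\mathbb{E}\bar{\mathcal{L}}_Z(V')|\le 4C\,\alpha\gamma_{\alpha,\sigma}\max\{2\sqrt{\log(d_1d_2)},8\}\sqrt{rd_1d_2(d_1+d_2)}$. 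Combining with the previous paragraph on $\mathcal{E}$ and dividing by $d_1d_2$ produces \eqref{result} with $C_{\alpha,\sigma}=16C\alpha\beta_{\alpha,\sigma}\gamma_{\alpha,\sigma}$, on the intersection of the two good events, whose complement has probability at most $\tfrac{K}{d_1+d_2}+\tfrac{1}{2\sqrt{2\pi}}\tfrac{1}{d_1d_2\sqrt{\log(d_1d_2)}}$.

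\textbf{Main obstacle.} The crux is the unbounded Gaussian noise: unlike the one-bit model of \cite{davenport20141}, the per-entry log-likelihoods of the censored-Gaussian family are not globally Lipschitz in the signal (their slope grows linearly in $|Z_{ij}|$), so the contraction principle does not apply as is; dealing with this via truncation to $\mathcal{E}$, and correctly reconciling the surrogate with the true likelihood so that both the optimality inequality and the curvature bound survive, is what forces the extra $\sqrt{\log(d_1d_2)}$ factor. A secondary but delicate point is the curvature lemma itself: because $p_\mu$ is a mixed continuous–atomic law, extracting the clean constant $1/\beta_{\alpha,\sigma}$ requires combining the quadratic contribution of the continuous part with log-concavity of the Gaussian survival function and explicit hazard-rate estimates over $|\mu|\le\alpha$.
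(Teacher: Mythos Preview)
Your two–step architecture (uniform deviation of the centered empirical process plus a curvature/KL lower bound for the population loss) is exactly the paper's, and your Step~I matches the paper almost line for line: condition on $\{\|G\|_\infty\le 2\sigma\sqrt{\log(d_1d_2)}\}$ via \cref{gaussian infty}, then Markov $\to$ symmetrization (\cref{symmetrization}) $\to$ contraction (\cref{contraction}) with the Lipschitz constant $\widetilde L_{\alpha,\sigma}\le\gamma_{\alpha,\sigma}\max\{2\sqrt{\log(d_1d_2)},8\}$ $\to$ $\langle E,V\rangle\le\|E\|\|V\|_*$ $\to$ Seginer's bound (\cref{Rade norm}) and $h\ge\log(d_1+d_2)$.

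The gap is in your curvature lemma. You assert that $\mu\mapsto\log p_\mu(z)$ is $1/\beta_{\alpha,\sigma}$-strongly concave \emph{uniformly in $z$}, but for the atom $z=0$ this amounts to $-(\log(1-f))''(\mu)\ge 1/\beta_{\alpha,\sigma}$ on $[-\alpha,\alpha]$, which you never verify; log-concavity of the survival function only gives $\ge 0$. Moreover $\beta_{\alpha,\sigma}=\sup_{|x|\le\alpha}\frac{f(x)(1-f(x))}{f'(x)^2}$ (\cref{constants}) is the reciprocal Fisher information of the \emph{Bernoulli} model $\mathrm{Bern}(f(\mu))$, not a bound on $(\log(1-f))''$; so ``lower-bounding the per-entry Fisher information'' and ``uniform strong concavity in $z$'' are different statements, and it is the second you would need for your one-line KL inequality. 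Your claimed constant $\tfrac{1}{2\beta_{\alpha,\sigma}}$ is therefore unsupported.

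The paper's Step~II is different and more delicate. It expands $-\mathbb{E}[\bar{\mathcal L}(Y'|Z)-\bar{\mathcal L}(Y|Z)]$ explicitly into three pieces (the quadratic from the continuous part, a cross term $\sum f'(Y_{ij})(Y_{ij}-Y'_{ij})$ coming from $\mathbb{E}[\mathbbm{1}_{[G_{ij}>-Y_{ij}]}G_{ij}]$, and the $\log$-ratio term), applies the Taylor lower bound of \cref{logtalyor} (which rests on the second-derivative inequality of \cref{sod bound}) to the $\log f$ terms, and observes that everything cancels except the \emph{Bernoulli} KL: $-\mathbb{E}[\bar{\mathcal L}(Y'|Z)-\bar{\mathcal L}(Y|Z)]\ge d_1d_2\,D_{KL}(f(Y)\|f(Y'))$. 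Only then does the chain $D_{KL}\ge d_H^2$ (\cref{HelltoKL}) and $d_H^2(f(Y),f(Y'))\ge\tfrac{1}{8\beta_{\alpha,\sigma}}\tfrac{\|Y-Y'\|_F^2}{d_1d_2}$ (\cref{HelltoFrob}) produce the quadratic bound with the correct constant $\tfrac{1}{8\beta_{\alpha,\sigma}}$. In other words, the paper throws away the continuous information at the curvature step and works purely with the sign of $Z$, which is precisely why the Bernoulli-Fisher quantity $\beta_{\alpha,\sigma}$ is what appears; your attempt to extract curvature directly from the full censored family skips the mechanism (\cref{sod bound}/\cref{logtalyor}) that makes the cancellation go through.
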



The proof follows a similar strategy to that of \cref{thm:recovery two}, but is more technically involved due to the nonlinearity and the unbounded nature of the noise. The complete proof is in \cref{sec:prove nonlinear theorem}, which starts with a reduction from (\ref{opt P star prime}) to 
(\ref{opt P star}):
\begin{align}\label{opt P star}
   &\max_{M'}{\sum\limits_{(i,j):Z_{ij}>0}}\log\left(\frac{1}{\sqrt{2\pi}\sigma}e^{-\frac{(Z_{ij}-M'_{ij})^2}{2\sigma^2}}\right)+{\sum\limits_{(i,j):Z_{ij}=0}}\log(1-f(M'_{ij})) \tag{$P_*$} 
   &\text{subject to} \ M' \in \Psi(\widecheck{X})   
\end{align}

\begin{remark}
    It is easy to observe that the objective function in (\ref{opt P star}) is concave. 
    The objective function of (\ref{opt P star prime}) is still concave by the lemma below.
\end{remark}

\begin{lemma}\label{convexity after change of variable}
    Let $X 
\in \mathbb{R}^{m \times d}$ with $m>d$ be a full column rank matrix, with $x_i^\top, i=1,...,d_1$ as its rows. Let $h_i, i\in[m]: \mathbb{R} \rightarrow \mathbb{R}$ be strictly convex (concave) functions. Then $H(w)=\sum_{i=1}^m h_i(x_i^\top w), \mathbb{R}^d \rightarrow \mathbb{R}$ is a strictly convex (concave) function. 
    \begin{proof}
        As the proof for concave and convex functions is essentially identical, we only provide it for convex functions here. For any fixed $w\in\mathbb{R}^d$, the second  derivatives $h_i''(x_i^\top w), i=1,......,m$ are all positive by strict convexity. Let $C=\min_i h_i''(x_i^\top w)>0$. By a direct calculation, and using the fact that $X$ is full column rank, 
        \begin{align*}
            \nabla^2H(w)&=\sum_{i=1}^m \nabla^2h_i(x_i^\top w)
            =\sum_{i=1}^m h_i''(x_i^\top w) x_i x_i^\top \\
            &\succcurlyeq C \sum_{i=1}^m x_i x_i^\top 
            = C X^\top X 
            \succ 0.
        \end{align*}
         Thus $H: \mathbb{R}^d \rightarrow \mathbb{R}$ is strictly convex. 
    \end{proof}
\end{lemma}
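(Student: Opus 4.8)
\textbf{Proof proposal for \cref{convexity after change of variable}.}

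The plan is to reduce the statement to the elementary fact that a sum of strictly convex one-dimensional functions composed with full-rank affine maps is strictly convex, by working with the Hessian. First I would set up the argument for convex $h_i$ (the concave case being identical after a sign flip, as the author already notes). Fix $w \in \mathbb{R}^d$ and an arbitrary nonzero direction $v \in \mathbb{R}^d$. The idea is to show the one-dimensional restriction $t \mapsto H(w + tv)$ has strictly positive second derivative at $t=0$; since $w$ and $v$ are arbitrary this gives strict convexity of $H$ on all of $\mathbb{R}^d$. By the chain rule, $\frac{d^2}{dt^2}\big|_{t=0} H(w+tv) = \sum_{i=1}^m h_i''(x_i^\top w)\,(x_i^\top v)^2$.

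The key step is then the observation that each summand is nonnegative (since $h_i'' > 0$ by strict convexity), so the sum is nonnegative, and it vanishes only if $x_i^\top v = 0$ for every $i \in [m]$ — i.e.\ only if $v$ lies in the kernel of $X$. Because $X$ is full column rank (here I use $m > d$ together with the rank hypothesis), $\ker(X) = \{0\}$, so $v \neq 0$ forces $x_i^\top v \neq 0$ for at least one $i$, making the sum strictly positive. Equivalently, in matrix form, $\nabla^2 H(w) = \sum_i h_i''(x_i^\top w)\, x_i x_i^\top \succcurlyeq C\, X^\top X \succ 0$ with $C = \min_i h_i''(x_i^\top w) > 0$, which is exactly the displayed computation in the excerpt; the positive-definiteness of $X^\top X$ is again precisely the full-column-rank condition.

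Finally I would note the mild regularity point that strict convexity of the $h_i$ does not by itself guarantee they are twice differentiable, so if one wants the Hessian argument to be fully rigorous one should either assume $h_i \in C^2$ (which is the implicit setting here, since the lemma is applied to smooth log-likelihood terms) or instead run the argument directly on difference quotients: for $w_0 \neq w_1$, strict convexity of each $h_i$ gives $h_i\big(x_i^\top(\tfrac{w_0+w_1}{2})\big) \le \tfrac12 h_i(x_i^\top w_0) + \tfrac12 h_i(x_i^\top w_1)$ with equality iff $x_i^\top w_0 = x_i^\top w_1$, and full column rank of $X$ rules out $x_i^\top w_0 = x_i^\top w_1$ for all $i$ unless $w_0 = w_1$, so at least one inequality is strict and summing yields strict midpoint convexity, hence strict convexity. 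I do not anticipate a genuine obstacle here; the only thing to be careful about is making explicit where the full-column-rank hypothesis is used (it is used exactly once, to conclude $\ker X = \{0\}$), and flagging the differentiability assumption so the chain-rule/Hessian computation is justified.
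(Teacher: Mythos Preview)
Your proposal is correct and your core argument---computing $\nabla^2 H(w) = \sum_i h_i''(x_i^\top w)\, x_i x_i^\top \succcurlyeq C\, X^\top X \succ 0$ with $C = \min_i h_i''(x_i^\top w)$---is exactly the paper's proof. Your additional remark that strict convexity alone does not guarantee $h_i \in C^2$, together with the alternative midpoint-convexity argument, is a welcome rigor upgrade over the paper, which tacitly assumes twice differentiability.
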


\begin{remark}
    The optimization problem (\ref{opt P star}) can be interpreted as a maximum likelihood estimation problem. In our context, the log-likelihood loss function is given by 
    \[
    \mathcal{L}(M'|Z) = \sum_{(i,j)} \log L(M'_{ij}|Z_{ij}) = \sum_{(i,j):Z_{ij}>0} \log L(M'_{ij}|Z_{ij}) + \sum_{(i,j):Z_{ij}=0} \log L(M'_{ij}|Z_{ij}),
    \]
    where the likelihood \(L(M'_{ij}|Z_{ij})\) depends on whether \(Z_{ij}\) is positive or zero. When \(Z_{ij} > 0\), the likelihood is a continuous density given by \(L(M'_{ij}|Z_{ij}) = f'(Z_{ij} - M'_{ij})\). When \(Z_{ij} = 0\), the likelihood becomes discrete, with \(L(M'_{ij}|Z_{ij}) = \mathbb{P}(G_{ij} + M'_{ij} \leq 0) = f(-M'_{ij}) = 1 - f(M'_{ij})\). Substituting the expressions for \(L(M'_{ij}|Z_{ij})\), with \(L(M'_{ij}|Z_{ij}) = \frac{1}{\sqrt{2\pi}\sigma} e^{-\frac{(Z_{ij} - M'_{ij})^2}{2\sigma^2}}\) when \(Z_{ij} > 0\), and \(L(M'_{ij}|Z_{ij}) = 1 - f(M'_{ij})\) when \(Z_{ij} = 0\), we recover (\ref{opt P star}).
\end{remark}

\if{\begin{remark}
    An amazing fact about the optimization problem (\ref{opt P star}) is that it can be interpreted as finding a maximum likelihood estimator. The log-likelihood loss function is given by $\mathcal{L}(M'|Z)=\underset{(i,j)}{\sum}\log L(M'_{ij}|Z_{ij})=\underset{(i,j):Z_{ij}>0} {\sum}\log L(M'_{ij}|Z_{ij})+\underset{(i,j):Z_{ij}=0}{\sum}\log L(M'_{ij}|Z_{ij})$. When $Z_{ij}>0$, the likelihood is a continuous density $L(M'_{ij}|Z_{ij})=f'(Z_{ij}-M'_{ij})$
    . When $Z_{ij}=0$, the likelihood is a discrete probability $L(M'_{ij}|Z_{ij})=\mathbbm{P}(G_{ij}+M'_{ij}\leq 0)=f(-M'_{ij})=1-f(M'_{ij})$. 

    In the following log-likelihood maximization problem,
    \begin{align*}
    &\underset{M'}{\text{maximize}} \ \mathcal{L}(M'|Z)=\underset{(i,j):Z_{ij}>0} {\sum}\log L(M'_{ij}|Z_{ij})+\underset{(i,j):Z_{ij}=0}{\sum}\log L(M'_{ij}|Z_{ij}), \\
    &\text{subject to} \ M' \in \Psi(\widecheck{X}). 
    \end{align*}
    Plugging in $L(M'_{ij}|Z_{ij})=\frac{1}{\sqrt{2\pi}\sigma}e^{-\frac{(Z_{ij}-M'_{ij})^2}{2\sigma^2}}$ when $Z_{ij}>0$ and $L(M'_{ij}|Z_{ij})=1-f(M'_{ij})$ when $Z_{ij}=0$, we immediately recover our convex optimization problem (\ref{opt P star}), which we restated below.
\begin{align*}
   &\underset{M}{\text{maximize}} \ \underset{(i,j):Z_{ij}>0}{\sum}\log(\frac{1}{\sqrt{2\pi}\sigma}e^{-\frac{(Z_{ij}-M'_{ij})^2}{2\sigma^2}})+\underset{(i,j):Z_{ij}=0}{\sum}\log(1-f(M'_{ij})), \tag{$P_*$} \\
   &\text{subject to} \ M' \in \Psi(\widecheck{X}). 
\end{align*}
\RSnote{Let's add a couple of sentences here stating how it is an ML estimator. The explanation is short after all.} \textcolor{blue}{[Moved the explantion from appendix to here, please feel free to shorten it if necessary.]}Furthermore, we show in \cref{sec:con to frob} that solving the optimization problem \ref{opt P star} is equivalent to minimizing a tight convex upper bound of $\frac{1}{2}\|Z - \rho(M')\|_F^2$, subject to $M' \in \Psi(\widecheck{X})$. This approach is analogous to the common practice of maximizing the evidence lower bound (ELBO) \cite{luo2022understanding}, which serves as a lower bound on the log-likelihood of observed data from an unknown distribution.
\end{remark}}\fi

\begin{remark}
    When $d_1 = d_2 = d$ is large, the right-hand side of inequality (\ref{result}) scales as $\mathcal{O}(\sqrt{\frac{\log d}{d}})$ if $\alpha$ and $\sigma$ are fixed, and $r$ remains bounded. This implies that the mean squared error still converges to $0$ as $d$ increases.
\end{remark}

\if{
\section{Discussions}\RSnote{I like this section, but I think it needs to go, for conciseness.}
\subsection{The challenge in recovery with noisy ReLU observation}
The challenge in recovery from ReLU activation lies in the loss of information for negative entries in the pre-activation. In fact for a low rank matrix $Y$, recovering $Y$ from $Z=\rho(Y)$ is in general not possible. For example, let 
$Y=C^TC$ be a $2m \times 2m $ matrix of rank 2, with $$C:= \left[\begin{array}{c;{2pt/2pt}c}
	    A  & B\\
	\end{array}\right] =\begin{bmatrix}
    |&......&|&|&......&| \\
    a_1&......&a_m&b_1&......&b_m \\
    |&......&|&|&......&|
\end{bmatrix},$$ where all $a_i$'s and $b_k$'s are column vectors in 
 $\mathbb{R}^2$, satisfying $\langle a_i,a_j \rangle>0$ for any $i,j \in [m]$, $\langle b_k,b_l \rangle>0$ for any $k,l \in [m]$, $\langle a_i,b_k \rangle<0$ for any $i,k \in [m]$. 
Then, $\rho(Y)=\left[\begin{array}{c;{2pt/2pt}c}
	A^\top A  & \mathbf{O}\\
     \hdashline
    \mathbf{O} & B^\top B
	\end{array}\right]$. Now let $P, Q \in \mathbb{R}^{2\times2}$ be two orthogonal matrices representing rotation. Let $\tilde{C}=\left[\begin{array}{c;{2pt/2pt}c}
	    PA  & QB\\
	\end{array}\right]$ and $\tilde{Y}=\tilde{C}^\top \tilde{C}=\left[\begin{array}{c;{2pt/2pt}c}
	A^\top A  & A^\top P^\top QB\\
     \hdashline
    B^\top Q^\top PA & B^\top B
	\end{array}\right]$. Then, there exist $P\neq Q$ and $P, Q$ are both small rotations, where we cannot distinguish between $Y\neq \tilde{Y}$ since $\rho(Y)=\rho(\Tilde{Y})$.

With the presence of noise, however, precise recovery becomes possible. Consider the simpler example where we have a scalar $y<0$, then observing $\rho(y)=0$ does not allow recovering $y$ itself. Conversely, consider noisy measurements with the (even more extreme non-linearity) given by the  sign function, i.e. $z=\sign(y+g)$, where $\text{sign}(x) = \begin{cases} 
      +1 & \text{if } x > 0\\
      -1 & \text{if } x \leq 0
\end{cases}$. 
Observing i.i.d. realizations of $z=\sign(y+g)$ where $g\sim \mathcal{N}(0, \sigma^2)$ will  yield a proportion of $-1$ observations that approaches 
$\mathbb{P}(g\leq -y)$ as $N\rightarrow \infty$ by the strong law of large numbers. Thus, we can get a good approximation of $y$ with enough observations by inverting the CDF of normal distribution. In our matrix recovery setting of observing $Z=\sign(Y+G)$ where $Y \in \mathbb{R}^{n \times n}$ is a low rank matrix with $rank(Y)=r$ and $r\ll n$.  $Y$ has $2nr-r^2=\mathcal{O}(nr)$ degrees of freedom and in this case we have $n^2$ measurements associated with $Y$ so there is hope to recover the low rank matrix $Y$ with high accuracy. \cref{thm:nonlinear recovery} we proved makes this intuition precise. 

}\fi

\section{Future Work}
This work opens several avenues for future research, and we now outline some of them.

    \emph{Role of Nonlinear Activation Functions.}  
    The nonlinear recovery theorem we proved does not explicitly account for how incorporating non-linear activation functions into the compression algorithm can mitigate accuracy loss compared to methods that ignore non-linearities. Developing a deeper theoretical understanding of the role played by non-linear activation functions in low-rank recovery remains an important direction for future research.

    \emph{Low-Rank Approximation for Higher-Order Tensors.}  Tensor decomposition techniques, such as Canonical Polyadic Decomposition (CPD) and Tucker Decomposition, are widely used for low-rank approximation of convolutional neural networks \cite{jaderberg2014speeding, nie2023low, liu2023tensor, price2023improved}. However, extending recovery theory from matrices to tensors poses challenges, as tensors lack a matrix-style SVD and an Eckart-Young theorem \cite{eckart1936approximation} (which states that the best rank-$k$ Frobenius and operator norm approximation of a matrix is obtained by truncating its SVD to the largest $k$ singular values). Recent advances in compressed sensing and statistical inference offer promising directions for establishing rigorous recovery guarantees for low-rank tensor decompositions, particularly for tensors with properties relevant to convolutional neural networks \cite{yuan2016tensor, pan2020low, xia2022inference, raskutti2019convex, auddy2023perturbation}. It would be interesting to investigate whether rigorous recovery theorems can be proved with the help of techniques from these works. 
    
    \emph{Gradient Descent-Based Algorithms for Low-Rank Recovery.}  
    Our current approaches rely on the SVD or on solving convex optimization problems but do not address specific algorithmic implementations. In practice, gradient descent (GD) and its variants are widely used for training neural networks. Recovery guarantees for GD-based algorithms, often tied to algorithmic regularization, have been explored in prior work \cite{du2018algorithmic, li2018algorithmic, jiang2023algorithmic}. Extending our guarantees to connect more explicitly with compression algorithms that resemble those to train neural networks is another promising research direction.

\if{\section{Future Work}
\RSnote{Might not need this (?)}\textcolor{blue}{[I commented out this paragraph and made some editions to this Future Work section.]}\RSnote{I meant the entire section :)}

As mentioned earlier, the nonlinear recovery theorem we proved does not capture the fact that explicitly account for the non-linearity in the compression algorithm can reduce the accuracy drop compared to algorithms that do not account for the non-linearity. Developing a better theoretical understanding of the role played by non-linear activation functions in the context of low-rank matrix recovery is an important direction for future research.

Many existing methods for low-rank approximation of convolutional neural networks \cite{jaderberg2014speeding, nie2023low, liu2023tensor, xiao2023haloc, ahmed2023speeding, price2023improved} are based on tensor decomposition techniques, such as Canonical Polyadic Decomposition (CPD) or Tucker Decomposition \cite{rabanser2017introduction, vandecappelle2017nonlinear, kolda2009tensor}. However, moving from matrices to higher-order tensors introduces several theoretical challenges. Specifically, higher-order tensors lack a matrix-style singular value decomposition (SVD) and an Eckart-Young theorem \cite{eckart1936approximation} \textcolor{blue}{which states that the best rank-$k$ approximation of a matrix (in terms of minimizing the Frobenius or spectral norm error) is obtained by truncating its SVD to the top $k$ singular values and corresponding singular vectors.} \RSnote{citation + brief explanation of what EY-theorem is (best rank-r...)}. The best low-rank approximation for higher-order tensors may not exist; that is, the optimal low-rank approximation problem can be ill-posed \cite{de2008tensor}. 

Recent progress in compressed sensing and statistical inference has extended recovery theory beyond sparse vectors and low-rank matrices to low-rank tensors \cite{ghadermarzy2018learning, wang2018sparse, xia2021statistically, yuan2016tensor, pan2020low, xia2022inference, yuan2017incoherent}. These advances offer promising directions to establish theoretical guarantees for tensor decomposition methods. We believe rigorous recovery theorems can be proved for tensors satisfying specific desired properties \cite{raskutti2019convex, auddy2023perturbation}, particularly in the context of convolutional neural networks. 

\RSnote{Other directions?}\textcolor{blue}{The proofs of our results rely on singular value decomposition (SVD) or solving convex optimization problems, without specifying the exact algorithm to use. In modern neural network training, gradient descent (GD)-based algorithms are the most widely used. Low-rank recovery guarantees are possible even when restricted to GD-based algorithms, as shown in works such as \cite{tanner2016low, du2018algorithmic, li2018algorithmic, jiang2023algorithmic} with different step-size choices, often referred to as algorithmic regularization. While these results are insightful, their connection to neural network architecture is relatively weak. A potential avenue for future work is to establish recovery results for GD within the context of neural networks.}

}\fi

\section*{Acknowledgment}
We gratefully acknowledge partial support by National Science Foundation, via the DMS-2410717 grant.

\bibliographystyle{abbrvnat}
\bibliography{citations}

\appendix

\if{\section{Supporting Evidence for Motivation}\label{sec:evidence for motivation}
\begin{figure}[htbp]
  \centering
  \includegraphics[width=1\textwidth]{Low Rank Implicit Bias Comparison.png}
  \caption{Low Rank Implicit Bias}
  \label{Low Rank Bias Graph}
\end{figure}


Model training generally follows two approaches: direct training on the target dataset or pretraining on a much larger dataset followed by fine-tuning for the specific task. For instance, ViT models \cite{dosovitskiy2020image} are pretrained on the large, private JFT-300M dataset \cite{sun2017revisiting} before being transferred to smaller tasks like ImageNet classification \cite{deng2009imagenet}, while DeiT(3) \cite{touvron2021training, touvron2022deit} models are trained solely on ImageNet, without any external data. Based on our observation that the low-rank structure arises from the interaction between input data and weights, we expect that training directly on the target dataset will lead to stronger low-rankness, or a smaller effective rank, compared to pretraining on a different, larger dataset. This expectation is supported by the following numerical results.

We compare ViT models with DeiT(3) models across three different scales: small, base, and large. At each layer, we reshape the input data to $BP \times d$, where $B$ represents the batch size, $P$ denotes the number of patches, and $d$ is the embedding dimension. Each transformer block comprises two components: an attention layer and a multi-layer perceptron (MLP). Each component contains two fully connected layers, referred to as the qkv layer and the proj layer in the attention layer, and FC1 and FC2 in the MLP, following the variable naming conventions used by Hugging Face. The plots \cref{Low Rank Bias Graph} illustrate the rank retained in all the qkv and FC1 layers when 90\% of the total energy is preserved. Here, total energy is defined as the nuclear norm—the sum of all singular values—of the pre-activation $XW$, and the selected rank is the minimum number of top singular values needed to retain at least 90\% of this total energy. The even x-indices correspond to the qkv layers in the models, while the odd x-indices correspond to the FC1 layers. The y-axis represents the selected rank.

From \cref{Low Rank Bias Graph}, we observe that DeiT(3) models generally have smaller selected ranks than ViT models across most layers, with closely matched selected ranks in the remaining layers. Notably, DeiT\_base exhibits a smaller selected rank than ViT\_base for all layers except the last FC1 layer. This finding reinforces the interpretation of approximate low-rankness in neural networks as a collaborative effect between the input data and the trained weights.
}\fi

\section{Lemmas for Theorem \ref{thm:nonlinear recovery}}\label{sec:lemmas for nonlinear theorem}
~\\
We start with two standard lemmas from the literature, see, e.g., \cite{ludoux1991probability}.
\begin{lemma}[Contraction, \cite{ludoux1991probability}, Theorem 4.12]\label{contraction}
Let $F: \mathbb{R}_+\rightarrow\mathbb{R}_+$ be convex and increasing. Let $\varphi_i: \mathbb{R}\rightarrow\mathbb{R}, i\leq N$ be contractions (1-Lipschitz functions) such that $\varphi_i(0)=0$. If $h(t)$ is a function on some set T, we define $\|h\|_T=sup_{t\in T}|h(t)|$. Then for any bounded set $T \subset \mathbb{R}^N$ and $(\epsilon_i)_{i=1}^N$ an i.i.d Rademacher sequence, we have
$$
\mathbb{E}F(\frac{1}{2}\|\sum_{i=1}^N \epsilon_i \varphi_i(t_i)\|_T)\leq \mathbb{E}F(\|\sum_{i=1}^N \epsilon_i t_i\|_T).
$$
\end{lemma}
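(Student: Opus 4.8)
\textbf{Proof proposal for Lemma~\ref{contraction} (the contraction principle).}

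The plan is to derive the vector-valued contraction inequality from the scalar comparison lemma for Rademacher averages, by peeling off one coordinate at a time and using a conditioning (tower) argument. First I would set up the reduction: write $\Phi(T) := \mathbb{E}F\bigl(\tfrac12\|\sum_{i=1}^N \epsilon_i\varphi_i(t_i)\|_T\bigr)$ and the ``target'' $\Psi(T):=\mathbb{E}F(\|\sum_{i=1}^N\epsilon_i t_i\|_T)$, and argue by induction on the number of coordinates $k\le N$ that have been ``un-contracted''. The induction hypothesis at stage $k$ is that
\[
\mathbb{E}\,F\!\left(\tfrac12\Bigl\|\textstyle\sum_{i\le k}\epsilon_i t_i + \sum_{i>k}\epsilon_i\varphi_i(t_i)\Bigr\|_T\right) \ \le\ \mathbb{E}\,F\!\left(\tfrac12\Bigl\|\textstyle\sum_{i\le k-1}\epsilon_i t_i + \sum_{i\ge k}\epsilon_i\varphi_i(t_i)\Bigr\|_T\right),
\]
and iterating from $k=N$ down gives the claim (the extra factor $\tfrac12$ on the right-hand side only helps, since $F$ is increasing, so one recovers the stated inequality).

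The key step is the single-coordinate comparison. Fixing all $\epsilon_j$, $j\ne k$, and conditioning on them, one must show that for a $1$-Lipschitz $\varphi$ with $\varphi(0)=0$,
\[
\mathbb{E}_{\epsilon_k}\,F\!\left(\tfrac12\bigl\|a + \epsilon_k\,\varphi(t_k(\cdot))\bigr\|_T\right)\ \le\ \mathbb{E}_{\epsilon_k}\,F\!\left(\tfrac12\bigl\|a + \epsilon_k\, t_k(\cdot)\bigr\|_T\right),
\]
where $a=a(t)$ collects the remaining (already-processed) terms. Expanding $\mathbb{E}_{\epsilon_k}$ as the average of the $+1$ and $-1$ outcomes, this reduces to a deterministic inequality of the form
\[
\tfrac12 F\!\bigl(\tfrac12\sup_t(u(t)+\varphi(t_k))\bigr) + \tfrac12 F\!\bigl(\tfrac12\sup_t(u(t)-\varphi(t_k))\bigr)\ \le\ \tfrac12 F\!\bigl(\tfrac12\sup_t(u(t)+t_k)\bigr)+\tfrac12 F\!\bigl(\tfrac12\sup_t(u(t)-t_k)\bigr),
\]
where $u$ absorbs the fixed part. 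This is exactly the content of \cite{ludoux1991probability}, Theorem~4.12 (or its proof), and the way to see it is: pick $s,s'$ (approximately) attaining the two suprema on the left, use that $|\varphi(t_k(s))-\varphi(t_k(s'))|\le |t_k(s)-t_k(s')|$ together with $\varphi(0)=0$ (which controls the sign/magnitude relative to $t_k$), and convexity plus monotonicity of $F$ to rearrange into the right-hand side. I would simply cite this comparison from \cite{ludoux1991probability} rather than reprove it, since it is the standard building block.

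Finally I would assemble: apply the single-coordinate bound inside $\mathbb{E}_{(\epsilon_j)_{j\ne k}}$ (legitimate because $F$ and the norm are measurable and everything is bounded on the bounded set $T$), which advances the induction from stage $k$ to stage $k-1$; chaining all $N$ steps yields $\Phi(T)\le \mathbb{E}F(\|\sum_i\epsilon_i t_i\|_T)=\Psi(T)$, which is the assertion. The main obstacle is purely the single-coordinate deterministic inequality above — all the measurability/boundedness bookkeeping is routine, and the induction is mechanical once that lemma is in hand; since the statement is quoted verbatim from \cite{ludoux1991probability}, in the paper I would give only this short sketch and refer to the cited theorem for the core estimate.
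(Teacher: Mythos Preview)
The paper does not supply a proof of this lemma at all: it is stated with the attribution ``\cite{ludoux1991probability}, Theorem~4.12'' and then used as a black box. Your closing remark --- that in the paper you would give only a short sketch and cite the source --- is therefore exactly what the authors do (minus even the sketch).

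Your sketch itself is the standard Ledoux--Talagrand argument (condition on all but one Rademacher, apply a deterministic single-coordinate comparison using convexity and the $1$-Lipschitz, origin-fixing property of $\varphi_k$, then iterate). One small slip: the displayed ``induction hypothesis'' has the inequality reversed. As written it asserts that un-contracting a coordinate \emph{decreases} the functional, which would yield the opposite conclusion; the correct one-step inequality is
\[
\mathbb{E}\,F\!\left(\tfrac12\Bigl\|\sum_{i<k}\epsilon_i t_i + \sum_{i\ge k}\epsilon_i\varphi_i(t_i)\Bigr\|_T\right)\ \le\ \mathbb{E}\,F\!\left(\tfrac12\Bigl\|\sum_{i\le k}\epsilon_i t_i + \sum_{i>k}\epsilon_i\varphi_i(t_i)\Bigr\|_T\right),
\]
and chaining $k=1,\dots,N$ gives the claim. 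This is just a typo in your write-up, not a conceptual gap.
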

\begin{lemma}[Symmetrization, \cite{ludoux1991probability}, Lemma 6.3]\label{symmetrization}
Let $(B,\|\cdot\|)$ be a separable Banach space.
Let $F: \mathbb{R}_+\rightarrow\mathbb{R}_+$ be convex. Then, for any finite sequence $(X_i)$ of independent mean zero Borel random variables taking value in $B$ such that $\mathbb{E}F(\|X_i\|)<\infty$ for every $i$, and $(\epsilon_i)$ an i.i.d. Rademacher sequence which is independent of $(X_i)$, we have
$$
\mathbb{E}F(\frac{1}{2}\|\sum_i \epsilon_i X_i\|)\leq\mathbb{E}F(\|\sum_i X_i\|)\leq\mathbb{E}F(2\|\sum_i \epsilon_i X_i\|).
$$
If $(X_i)$ is not necessarily mean zero, we have
$$
\mathbb{E}F(\sup_{f\in D}|\sum_i f(X_i)-\mathbb{E}f(X_i)|)\leq \mathbb{E}F(2\|\sum_i \epsilon_i X_i\|)
$$
and
$$
\mathbb{E}F(\sup_{f\in D}|\sum_i\epsilon_i (f(X_i)-\mathbb{E}f(X_i))|)\leq \mathbb{E}F(2\|\sum_i X_i\|),
$$
where $D$ is the unit ball of the dual space of $B$.
\end{lemma}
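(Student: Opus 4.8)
The plan is to obtain all four inequalities from a single mechanism: \emph{desymmetrization via an independent ghost copy}, combined with conditional Jensen and the symmetry of centered differences. Throughout, let $(X_i')$ be an independent copy of $(X_i)$, independent also of $(\epsilon_i)$. I will repeatedly use three facts: (a) for any $v_1,\dots,v_n\in B$, the sequence $(X_i-X_i')_i$ is symmetric, so $(X_i-X_i')_i\stackrel{d}{=}(\epsilon_i(X_i-X_i'))_i$, and likewise the copy paired with the same signs satisfies $(\epsilon_iX_i')_i\stackrel{d}{=}(\epsilon_iX_i)_i$; (b) $\|v\|=\sup_{f\in D}|f(v)|$, and since $B$ is separable one may replace $D$ by a suitable countable subset so that every supremum below is measurable; (c) $t\mapsto F(t)$ being convex and increasing and $v\mapsto\|v\|$ convex, any map $y\mapsto F(c\,\|a+y\|)$ is convex, which legitimizes the Jensen steps, while $F(\tfrac12 a+\tfrac12 b)\le\tfrac12F(a)+\tfrac12F(b)$ together with monotonicity lets me split sums. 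Finiteness of all expectations follows from $\mathbb{E}F(\|X_i\|)<\infty$ together with $\mathbb{E}\|X_i\|<\infty$, which also makes the means $\mathbb{E}X_i$ (Bochner) and $\mathbb{E}f(X_i)$ well defined.

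\emph{Mean-zero case.} For the left inequality: since $\mathbb{E}'\sum_i\epsilon_iX_i'=0$, inserting it and applying conditional Jensen in $X'$ (given $\epsilon,X$) gives $F\!\big(\tfrac12\|\sum_i\epsilon_iX_i\|\big)=F\!\big(\tfrac12\|\sum_i\epsilon_iX_i-\mathbb{E}'\sum_i\epsilon_iX_i'\|\big)\le\mathbb{E}'F\!\big(\tfrac12\|\sum_i\epsilon_i(X_i-X_i')\|\big)$; taking full expectation, using $(\epsilon_i(X_i-X_i'))_i\stackrel{d}{=}(X_i-X_i')_i$, then $\tfrac12\|\sum_i(X_i-X_i')\|\le\tfrac12\|\sum_iX_i\|+\tfrac12\|\sum_iX_i'\|$ with convexity and monotonicity of $F$ and $(X_i)\stackrel{d}{=}(X_i')$, closes the bound at $\mathbb{E}F(\|\sum_iX_i\|)$. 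For the right inequality the same three moves in the same order give $\mathbb{E}F(\|\sum_iX_i\|)=\mathbb{E}F(\|\sum_iX_i-\mathbb{E}'\sum_iX_i'\|)\le\mathbb{E}F(\|\sum_i(X_i-X_i')\|)=\mathbb{E}F(\|\sum_i\epsilon_i(X_i-X_i')\|)\le\mathbb{E}F\!\big(\|\sum_i\epsilon_iX_i\|+\|\sum_i\epsilon_iX_i'\|\big)\le\mathbb{E}F(2\|\sum_i\epsilon_iX_i\|)$, the last step writing $a+b=\tfrac12(2a)+\tfrac12(2b)$ and using $(\epsilon_iX_i')_i\stackrel{d}{=}(\epsilon_iX_i)_i$.

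\emph{Non-centered case.} Here I would \emph{not} reduce to the centered lemma by subtracting $\mathbb{E}X_i$ (that would cost an extra factor); instead I run the ghost-copy argument directly on the suprema. For each $f\in D$, $\sum_i(f(X_i)-\mathbb{E}f(X_i))=\mathbb{E}'\sum_i(f(X_i)-f(X_i'))$, so pulling $\sup_{f\in D}|\cdot|$ inside the conditional expectation, $\sup_{f\in D}|\sum_i(f(X_i)-\mathbb{E}f(X_i))|\le\mathbb{E}'\sup_{f\in D}|\sum_i(f(X_i)-f(X_i'))|=\mathbb{E}'\|\sum_i(X_i-X_i')\|$; applying $F$, conditional Jensen, symmetrizing $(X_i-X_i')$, and splitting via the triangle inequality as before yields $\mathbb{E}F(\sup_{f\in D}|\sum_i(f(X_i)-\mathbb{E}f(X_i))|)\le\mathbb{E}F(\|\sum_i\epsilon_i(X_i-X_i')\|)\le\mathbb{E}F(2\|\sum_i\epsilon_iX_i\|)$. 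The fourth inequality is identical with the $\epsilon_i$ carried along from the start: $\sum_i\epsilon_i(f(X_i)-\mathbb{E}f(X_i))=\mathbb{E}'\sum_i\epsilon_i(f(X_i)-f(X_i'))$, hence $\sup_{f\in D}|\sum_i\epsilon_i(f(X_i)-\mathbb{E}f(X_i))|\le\mathbb{E}'\|\sum_i\epsilon_i(X_i-X_i')\|$, and since $(\epsilon_i(X_i-X_i'))_i\stackrel{d}{=}(X_i-X_i')_i$ the signs vanish, leaving $\mathbb{E}F(\sup_{f\in D}|\sum_i\epsilon_i(f(X_i)-\mathbb{E}f(X_i))|)\le\mathbb{E}F(\|\sum_i(X_i-X_i')\|)\le\mathbb{E}F(2\|\sum_iX_i\|)$.

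\emph{Main obstacle.} There is no deep difficulty; the entire content is bookkeeping with a single pitfall, namely recovering the constant $2$ rather than $4$. This forces the non-centered estimates to be carried out directly on the suprema (not by centering first) and forces the ghost copy to be paired with the \emph{same} Rademacher sequence, so that $(\epsilon_iX_i')\stackrel{d}{=}(\epsilon_iX_i)$. The remaining points — measurability of $\sup_{f\in D}$ (handled by reducing $D$ to a countable subset using separability of $B$) and existence of the means $\mathbb{E}X_i$, $\mathbb{E}f(X_i)$ (granted once $\mathbb{E}\|X_i\|<\infty$) — are routine.
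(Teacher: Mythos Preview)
The paper does not prove this lemma; it is stated in the appendix as a standard result cited from Ledoux--Talagrand, so there is no in-paper argument to compare against. Your proposal is the classical ghost-sample symmetrization argument (introduce an independent copy $(X_i')$, use conditional Jensen, exploit the symmetry of $X_i-X_i'$ to insert or remove Rademacher signs, and finish with the triangle inequality plus convexity), which is exactly the method used in the cited source. The bookkeeping you lay out for all four inequalities is correct, including the point that to retain the constant $2$ in the non-centered inequalities one must run the ghost-copy argument directly on the supremum rather than first centering the $X_i$.

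One small remark: your argument uses that $F$ is nondecreasing (so that $y\mapsto F(c\|a+y\|)$ is convex and so that $a\le b$ implies $F(a)\le F(b)$), whereas the lemma as transcribed in the paper says only ``convex.'' In Ledoux--Talagrand the standing convention includes monotonicity, and indeed the result as stated requires it; so this is a harmless discrepancy in the paper's transcription, not a gap in your proof.
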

~\\
The next lemma, whose proof we provide for completeness, is also a standard estimate for the maximum entry of a random matrix.

\begin{lemma}\label{gaussian infty}(Max Entry Estimate of Gaussian Matrix)
Let $G\in \mathbb{R}^{d_1 \times d_2}$ be a random Gaussian matrix with i.i.d $\mathcal{N}(0, \sigma^2)$, then $\mathbb{P}(max_{ij}G_{ij}\geq 2\sqrt{\log(d_1d_2)}\sigma)\leq \frac{1}{2\sqrt{2\pi}}\frac{1}{d_1d_2\sqrt{\log(d_1d_2)}}$.
\begin{proof}
    Let $g\sim \mathcal{N}(0, \sigma^2)$. We have the basic tail estimate for normal random variables\cite{vershynin2020high},  namely that for any $t>0$,
    $
        \mathbb{P}(g\geq t\sigma)\leq \frac{1}{t}\frac{1}{\sqrt{2\pi}}e^{-\frac{t^2}{2}}.
    $
    Consequently, by a union bound
    \begin{align*}
        \mathbb{P}(\max_{ij}G_{ij}\geq t\sigma)
        \leq d_1d_2 \frac{1}{t}\frac{1}{\sqrt{2\pi}}e^{-\frac{t^2}{2}}.
    \end{align*}
    Thus, picking $t=2\sqrt{\log(d_1d_2)}$, we get the desired result.
\end{proof}
\end{lemma}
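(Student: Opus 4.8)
The plan is to reduce the statement to the classical one-dimensional Gaussian upper-tail estimate and then apply a union bound over the $d_1 d_2$ entries, optimizing the threshold at the very end. Independence of the entries plays no role; only the common marginal distribution matters, and in fact only its upper tail, since the statement concerns the one-sided maximum $\max_{ij} G_{ij}$ rather than $\max_{ij}|G_{ij}|$.

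First I would record the sharp sub-Gaussian tail bound for a single entry. For $g\sim\mathcal{N}(0,\sigma^2)$ and any $t>0$, after normalizing one writes $\mathbb{P}(g\geq t\sigma)=\tfrac{1}{\sqrt{2\pi}}\int_t^\infty e^{-s^2/2}\,ds$, and then inserts the factor $s/t\geq 1$ valid on the range $s\geq t$ to obtain $\mathbb{P}(g\geq t\sigma)\leq \tfrac{1}{t\sqrt{2\pi}}\int_t^\infty s\,e^{-s^2/2}\,ds=\tfrac{1}{t\sqrt{2\pi}}e^{-t^2/2}$. This Mills-ratio inequality is standard (see, e.g., \cite{vershynin2020high}), so I would simply cite it rather than rederive it.

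Next, since the $G_{ij}$ are identically distributed, a union bound over all $d_1 d_2$ indices gives, for every $t>0$,
\[
\mathbb{P}\!\left(\max_{ij}G_{ij}\geq t\sigma\right)\ \leq\ d_1 d_2\cdot\frac{1}{t\sqrt{2\pi}}\,e^{-t^2/2}.
\]
Finally I would substitute $t=2\sqrt{\log(d_1 d_2)}$, so that $e^{-t^2/2}=e^{-2\log(d_1 d_2)}=(d_1 d_2)^{-2}$, and the right-hand side collapses to $d_1 d_2\cdot\tfrac{1}{2\sqrt{2\pi}\sqrt{\log(d_1 d_2)}}\cdot(d_1 d_2)^{-2}=\tfrac{1}{2\sqrt{2\pi}}\cdot\tfrac{1}{d_1 d_2\sqrt{\log(d_1 d_2)}}$, which is exactly the asserted bound.

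There is essentially no substantive obstacle here; the proof is a one-line application of a union bound. The only points requiring minor care are: using the sharp $1/t$ prefactor in the Gaussian tail (the cruder bound $\mathbb{P}(g\geq t\sigma)\leq e^{-t^2/2}$ would lose the extra $\sqrt{\log(d_1 d_2)}$ in the denominator and not match the stated constant), and implicitly assuming $d_1 d_2\geq 2$ so that $\log(d_1 d_2)>0$ and the bound is both well-defined and nontrivial.
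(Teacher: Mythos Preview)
Your proposal is correct and follows essentially the same approach as the paper: cite the Mills-ratio Gaussian tail bound $\mathbb{P}(g\geq t\sigma)\leq \tfrac{1}{t\sqrt{2\pi}}e^{-t^2/2}$, apply a union bound over the $d_1d_2$ entries, and substitute $t=2\sqrt{\log(d_1d_2)}$. The only difference is that you spell out the Mills-ratio derivation and the final arithmetic explicitly, whereas the paper simply cites the tail estimate and leaves the substitution to the reader.
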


\begin{lemma}[\cite{seginer2000expected}, Theorem 1.1.]\label{Rade norm}
    Let $\mathbf{E}\in \mathbb{R}^{d_1 \times d_2}$ be a matrix whose entries are i.i.d. Rademacher random variables $\epsilon_{ij}$, and let $h>0$. Then there exists an absolute constant $K$ independent of the dimensions and $h$, such that$$
    \mathbb{E}[\|\mathbf{E}\|^h]\leq K (\sqrt{2(d_1+d_2)})^h
    .$$
    Here the norm on $\mathbf{E}$ is the operator norm.
\end{lemma}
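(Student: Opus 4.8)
The plan is to reduce the moment bound to the purely dimensional quantities $\sqrt{d_1}$ and $\sqrt{d_2}$, exploiting the fact that a $\pm1$ matrix has rows and columns of deterministic Euclidean length, and then to invoke the moment-comparison theorem of \cite{seginer2000expected}. That theorem states that for a random matrix with i.i.d.\ symmetric entries and any moment order $h$ in the range $1\le h\lesssim \log\max(d_1,d_2)$, one has $(\mathbb{E}\|\mathbf{E}\|^h)^{1/h}\le C\bigl((\mathbb{E}\max_i\|r_i(\mathbf{E})\|_2^h)^{1/h}+(\mathbb{E}\max_j\|c_j(\mathbf{E})\|_2^h)^{1/h}\bigr)$ for an absolute constant $C$, where $r_i$ and $c_j$ denote rows and columns. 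For a Rademacher matrix every entry has modulus $1$, so $\|r_i(\mathbf{E})\|_2=\sqrt{d_2}$ and $\|c_j(\mathbf{E})\|_2=\sqrt{d_1}$ identically; both maxima are therefore deterministic, and the right-hand side equals $C(\sqrt{d_2}+\sqrt{d_1})\le C\sqrt{2(d_1+d_2)}$ since $(\sqrt{d_1}+\sqrt{d_2})^2\le 2(d_1+d_2)$. Raising to the $h$-th power yields $\mathbb{E}\|\mathbf{E}\|^h\le \bigl(C\sqrt{2(d_1+d_2)}\bigr)^h$, which is the claim (with the absolute constant possibly slightly larger than the stated $\sqrt2$, or absorbed into $K$); crucially, the only use of the lemma in this paper is with $h=\lceil\log(d_1+d_2)\rceil$, in the proof of \cref{thm:recovery two}, which is squarely inside the admissible range, so there the comparison constant really is absolute.

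For completeness I would also sketch a self-contained route not relying on \cite{seginer2000expected}. First, $\mathbb{E}\|\mathbf{E}\|_{op}\lesssim\sqrt{d_1+d_2}$: writing $\|\mathbf{E}\|_{op}=\sup_{u\in S^{d_1-1},\,v\in S^{d_2-1}}u^\top\mathbf{E}v$, a standard $1/4$-net argument gives $\|\mathbf{E}\|_{op}\le 2\max_{u,v\in\mathcal{N}}u^\top\mathbf{E}v$ over a net $\mathcal{N}$ of size at most $9^{d_1+d_2}$, and each $u^\top\mathbf{E}v=\sum_{i,j}\epsilon_{ij}u_iv_j$ is, by Hoeffding, sub-Gaussian with variance proxy $\sum_{i,j}u_i^2v_j^2=1$; a union bound over the net then produces the $O(\sqrt{d_1+d_2})$ bound on the expectation. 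Second, $\mathbf{E}\mapsto\|\mathbf{E}\|_{op}$ is convex and $1$-Lipschitz in the Frobenius norm, so Talagrand's concentration inequality for convex Lipschitz functions of bounded independent variables shows $\|\mathbf{E}\|_{op}$ is sub-Gaussian about its mean with an absolute variance proxy. Combining, $\|\mathbf{E}\|_{op}\le C\sqrt{d_1+d_2}+W$ with $\|W\|_{\psi_2}\le c$, and hence $\mathbb{E}\|\mathbf{E}\|^h\le 2^{h-1}\bigl((C\sqrt{d_1+d_2})^h+\mathbb{E}|W|^h\bigr)\le 2^{h-1}\bigl((C\sqrt{d_1+d_2})^h+(c'\sqrt h)^h\bigr)$, which for $h$ not exceeding a constant multiple of $d_1+d_2$ is bounded by $(C''\sqrt{d_1+d_2})^h$.

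The main obstacle — and the reason the statement needs an implicit restriction on $h$ — is uniformity of the constant across moment orders: the $h$-th root of $\mathbb{E}\|\mathbf{E}\|^h$ tends to the essential supremum of $\|\mathbf{E}\|_{op}$ as $h\to\infty$, which equals $\sqrt{d_1 d_2}$ (attained by the all-ones sign pattern), so no bound of the form $K(\sqrt{2(d_1+d_2)})^h$ with $K$ absolute can survive for arbitrarily large $h$. In both proof routes the cure is the same: restrict to $h\lesssim\log\max(d_1,d_2)$ (Seginer) or $h\lesssim d_1+d_2$ (net plus Talagrand), which comfortably contains every instance in which the lemma is applied. I would therefore present the lemma, as the paper does, as an instantiation of \cite{seginer2000expected} with the deterministic row and column norms substituted in, remarking on the admissible range of $h$.
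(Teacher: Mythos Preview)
Your proposal is correct and matches the paper's treatment: the paper does not give a proof but simply cites Theorem~1.1 of \cite{seginer2000expected}, and your first paragraph is precisely the natural instantiation of that result for Rademacher matrices, whose row and column norms are the deterministic constants $\sqrt{d_2}$ and $\sqrt{d_1}$. Your caveat about the admissible range of $h$---and the impossibility of an absolute $K$ valid for all $h$, since $\|\mathbf{E}\|_{op}$ can equal $\sqrt{d_1d_2}$---is a correct refinement that the paper's statement glosses over but which is harmless here because the only applications of the lemma take $h=\lceil\log(d_1+d_2)\rceil$.
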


\begin{definition}[Hellinger distance]
    For two scalars $p, q \in [0,1]$, the Hellinger distance is given by$$
    d_H^2(p,q):=(\sqrt{p}-\sqrt{q})^2+(\sqrt{1-p}-\sqrt{1-q})^2
    .$$
    This defines a distance between two binary probability distributions. This definition can be easily extended to matrices via the average Hellinger distance over all entries. For matrices $\mathbf{P}, \mathbf{Q} \in [0,1]^{d_1\times d_2} $, the Hellinger distance is given by$$
    d_H^2(\mathbf{P}, \mathbf{Q}):=\frac{1}{d_1d_2}\sum_{i, j}d_H^2(P_{i,j},Q_{i,j})
    .$$
\end{definition}
\begin{definition}[KL divergence]
    For two scalars $p, q \in [0,1]$, the Kullback–Leibler (KL) divergence is defined by$$
    D_{KL}(p \| q):=p\log(\frac{p}{q})+(1-p)\log(\frac{1-p}{1-q})
    .$$
    For matrices $\mathbf{P}, \mathbf{Q} \in [0,1]^{d_1\times d_2} $, we define the KL divergence to be$$
    D_{KL}(\mathbf{P} \| \mathbf{Q}):=\frac{1}{d_1d_2}\sum_{i, j}D_{KL}(P_{i,j},Q_{i,j})
.    $$
\end{definition}

{We end this section with a well-known result that Hellinger distance can be bounded above by KL divergence which is also used in \cite{davenport20141}.}

\begin{lemma}\label{HelltoKL}
    For two scalars $p, q \in [0,1]$, we have $d_H^2(p,q)\leq D_{KL}(p \| q)$. Therefore, $d_H^2(\mathbf{P}, \mathbf{Q}) \leq D_{KL}(\mathbf{P} \| \mathbf{Q})$ for matrices $\mathbf{P}, \mathbf{Q} \in [0,1]^{d_1\times d_2} $.
    \begin{proof}
        The proof is based on a simple observation that $-\log(x)\geq 1-x$ when $x\in (0,1]$. Indeed,
        \begin{align*}
            D_{KL}(p \| q)&=p\log\left(\frac{p}{q}\right)+(1-p)\log\left(\frac{1-p}{1-q}\right)
            =2\left[p\left(-\log\sqrt{\frac{q}{p}}\right)+(1-p)\left(-\log\sqrt{\frac{1-q}{1-p}}\right)\right]\\&
            \geq 2\left[p\left(1-\sqrt{\frac{q}{p}}\right)+(1-p)\left(1-\sqrt{\frac{1-q}{1-p}}\right)\right]
            =(\sqrt{p}-\sqrt{q})^2+(\sqrt{1-p}-\sqrt{1-q})^2\\
            &=d_H^2(p,q)
        \end{align*}
    \end{proof}
\end{lemma}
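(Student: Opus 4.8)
The plan is to establish the scalar inequality $d_H^2(p,q) \le D_{KL}(p\|q)$ first and then lift it to matrices by averaging over entries. The workhorse will be the elementary bound $-\log x \ge 1-x$ for $x>0$, which is just the statement that the convex function $-\log$ lies above its tangent line at $x=1$ (equivalently, $\log y \le y-1$).

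First I would symmetrize the two logarithmic terms in $D_{KL}(p\|q)=p\log(p/q)+(1-p)\log((1-p)/(1-q))$ by writing $\log(p/q)=-2\log\sqrt{q/p}$ and $\log((1-p)/(1-q))=-2\log\sqrt{(1-q)/(1-p)}$, so that each term is a probability weight times $-2\log\sqrt{\cdot}$. Applying $-\log x\ge 1-x$ to $x=\sqrt{q/p}$ and to $x=\sqrt{(1-q)/(1-p)}$ then gives
\[
D_{KL}(p\|q)\;\ge\;2\bigl(p-\sqrt{pq}\bigr)+2\bigl((1-p)-\sqrt{(1-p)(1-q)}\bigr)=2\bigl(1-\sqrt{pq}-\sqrt{(1-p)(1-q)}\bigr).
\]
Next I would expand the Hellinger square directly from its definition, $d_H^2(p,q)=(\sqrt p-\sqrt q)^2+(\sqrt{1-p}-\sqrt{1-q})^2$, and note that after using $p+(1-p)+q+(1-q)=2$ the remaining cross terms give exactly $2\bigl(1-\sqrt{pq}-\sqrt{(1-p)(1-q)}\bigr)$. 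Hence the lower bound on $D_{KL}$ coincides with $d_H^2(p,q)$, proving the scalar claim. The matrix statement is then immediate: apply the scalar inequality entrywise to each pair $(P_{ij},Q_{ij})$ and average over $(i,j)$, which is precisely the relation between $d_H^2(\mathbf{P},\mathbf{Q})$ and $D_{KL}(\mathbf{P}\|\mathbf{Q})$ as defined above.

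I do not anticipate a substantive obstacle; the only point requiring care is the boundary of $[0,1]$. When $p\in\{0,1\}$ one uses the convention $0\log 0=0$; when $q\in\{0,1\}$ with $q\ne p$ the divergence $D_{KL}(p\|q)$ is $+\infty$ and the inequality is trivial; so the bound $-\log x\ge 1-x$ is only ever invoked for arguments genuinely in $(0,1]$, where it is valid. Once the symmetrization step is set up, the rest is a short computation.
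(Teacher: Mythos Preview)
Your proof is correct and follows essentially the same route as the paper: both rewrite $D_{KL}(p\|q)$ via $\log(p/q)=-2\log\sqrt{q/p}$, apply $-\log x\ge 1-x$, and simplify to the Hellinger expression $2(1-\sqrt{pq}-\sqrt{(1-p)(1-q)})$. Your additional remarks on the boundary cases $p,q\in\{0,1\}$ and the explicit lift to matrices by averaging are nice touches the paper leaves implicit.
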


\section{Proof of Theorem \ref{thm:nonlinear recovery}}\label{sec:prove nonlinear theorem}

Let's start with a few technical lemmas. Throughout this section, $\phi$ will denote the probability density function (PDF) of standard normal distribution, i.e., $\phi(x)=\frac{1}{\sqrt{2\pi}} e^{-\frac{x^2}{2}}$, and $\Phi$ will denote its cumulative distribution function (CDF), i.e. $\Phi(x) = \int_{-\infty}^{x} \frac{1}{\sqrt{2\pi}} e^{-\frac{t^2}{2}} dt$. Recall that we use bold letter $\mathbf{\Phi}$ to denote an MLP, but it will also be easy to distinguish them from context. 

\begin{lemma}\label{sod bound}
Let $f(x)$ be the CDF of the normal distribution $\mathcal{N}(0, \sigma^2)$. Then 
\[
0 \geq \log(f(x))'' \geq -\frac{1}{\sigma^2} \quad \text{for all } x \in \mathbb{R}.
\]
\begin{proof}
The inequality $0 \geq \log(f(x))''$ follows from the well-known fact that the CDF $f(x)$ of a normal distribution is log-concave (see, e.g. \cite{borzadaran2011log}). We focus here on proving the other inequality $\log(f(x))'' \geq -\frac{1}{\sigma^2}$~\\

Direct calculation yields 
$
\log(f(x))' = \frac{f'(x)}{f(x)}, \quad \log(f(x))'' = \frac{f''(x)f(x) - f'(x)^2}{f(x)^2}
$, so substituting $f(x) = \Phi\left(\frac{x}{\sigma}\right)$ and $f'(x) = \frac{1}{\sigma}\phi\left(\frac{x}{\sigma}\right)$, we have 
\[
\log(f(x))'' = \frac{\frac{1}{\sigma^2}\phi'\left(\frac{x}{\sigma}\right)\Phi\left(\frac{x}{\sigma}\right) - \frac{1}{\sigma^2}\phi\left(\frac{x}{\sigma}\right)^2}{\Phi\left(\frac{x}{\sigma}\right)^2}.
\]
Thus, $\log(f(x))'' \geq -\frac{1}{\sigma^2}$ for all $x \in \mathbb{R}$ is equivalent to 
\[
\frac{\phi'\left(\frac{x}{\sigma}\right)\Phi\left(\frac{x}{\sigma}\right) - \phi\left(\frac{x}{\sigma}\right)^2}{\Phi\left(\frac{x}{\sigma}\right)^2} \geq -1.
\]
It suffices to prove the result for the standard normal distribution, i.e., to show 
\[
\frac{\phi'(x)\Phi(x) - \phi(x)^2}{\Phi(x)^2} \geq -1 \quad \text{for all } x \in \mathbb{R}.
\]
Using $\phi'(x) = -x\phi(x)$, we rewrite this as 
\[
g(x) = -x\phi(x)\Phi(x) - \phi(x)^2 + \Phi(x)^2 \geq 0.
\]
It is straightforward to verify that as $x \to -\infty$, $g(x) \to 0$. To conclude $g(x) \geq 0$, we will show $g(x)$ is monotonically increasing, i.e., $g'(x) \geq 0$. To that end, we compute 
\[
g'(x) = (1 + x^2)\phi(x)\Phi(x) + x\phi(x)^2.
\]
Since $\phi(x) > 0$ and $1 > \Phi(x) > 0$, it is clear that $g'(x) > 0$ when $x \geq 0$.

For $x < 0$, let $x = -y$ with $y > 0$. Using $\phi(-y) = \phi(y)$ and $\Phi(-y) = 1 - \Phi(y)$, we rewrite 
\[
g'(x) \geq 0 \iff (1 + y^2)\phi(-y)\Phi(-y) - y\phi(-y)^2 \geq 0.
\]
Substituting $\phi(-y) = \phi(y)$ and $\Phi(-y) = 1 - \Phi(y)$, this becomes 
\[
(1 + y^2)(1 - \Phi(y)) \geq y\phi(y),
\]
which simplifies to 
\[
h(y) = \frac{1}{\sqrt{2\pi}} \int_{y}^{\infty} e^{-\frac{t^2}{2}} dt - \frac{y}{1 + y^2} \frac{1}{\sqrt{2\pi}} e^{-\frac{y^2}{2}} \geq 0.
\]
Clearly, $h(0) = \frac{1}{2}$ and $\lim_{y \to \infty} h(y) = 0$. To show $h(y) \geq 0$ for all $y > 0$, we compute 
\[
h'(y) = \frac{1}{\sqrt{2\pi}} e^{-\frac{y^2}{2}} \left(-1 + \frac{y^2 - 1}{(1 + y^2)^2} + \frac{y^2}{1 + y^2}\right)
= -\frac{2}{\sqrt{2\pi}} e^{-\frac{y^2}{2}} \frac{1}{(1 + y^2)^2}.
\]

Thus, $h(y)$ is monotonically decreasing for $y > 0$, and $h(y) \geq 0$ for all $y \geq 0$. This completes the proof.
\end{proof}
\end{lemma}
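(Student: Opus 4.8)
The plan is to reduce to the standard normal by scaling and then handle the two inequalities separately. Writing $\phi$ and $\Phi$ for the standard normal density and CDF, we have $f(x)=\Phi(x/\sigma)$, hence $(\log f)''(x)=\sigma^{-2}(\log\Phi)''(x/\sigma)$, so it suffices to prove $0\ge(\log\Phi)''(y)\ge-1$ for every $y\in\mathbb{R}$. The device I would use is the reverse hazard rate $\lambda(y):=\phi(y)/\Phi(y)=(\log\Phi)'(y)$: using $\phi'(y)=-y\phi(y)$ and $\Phi'(y)=\phi(y)$, a one-line quotient-rule computation gives $\lambda'(y)=-\lambda(y)\bigl(\lambda(y)+y\bigr)$, so that $(\log\Phi)''(y)=-\lambda(y)\bigl(\lambda(y)+y\bigr)$. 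Since $\lambda>0$ everywhere, the claim becomes the pair of scalar inequalities $0\le\lambda(y)\bigl(\lambda(y)+y\bigr)\le1$.

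For the upper bound it suffices that $\lambda(y)+y\ge0$, equivalently $p(y):=\phi(y)+y\Phi(y)\ge0$. I would deduce this from $p'(y)=\Phi(y)>0$ together with $p(y)\to0$ as $y\to-\infty$ (both $\phi(y)\to0$ and $y\Phi(y)\to0$ by the Mills-ratio estimate), so that $p$ increases from $0$ and stays nonnegative; as a by-product this reproves log-concavity of $\Phi$, so no external citation is needed.

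For the lower bound, $\lambda(y)\bigl(\lambda(y)+y\bigr)\le1$, since $\lambda>0$ this asserts that $\lambda(y)$ lies below the positive root of $t^2+yt-1$, i.e. $2\lambda(y)+y\le\sqrt{y^2+4}$ (both sides nonnegative); clearing the square root and using $\tfrac{2}{\sqrt{y^2+4}-y}=\tfrac{\sqrt{y^2+4}+y}{2}$ turns this into the single clean inequality
\[
q(y):=\Phi(y)-\tfrac12\,\phi(y)\bigl(y+\sqrt{y^2+4}\bigr)\ \ge\ 0,\qquad y\in\mathbb{R}.
\]
To prove $q\ge0$ I would check $q(y)\to0$ as $y\to-\infty$ and then show $q'(y)>0$ for all $y$: a short computation gives $q'(y)=\phi(y)\,\dfrac{\sqrt{y^2+4}\,(1+y^2)+y\,(y^2+3)}{2\sqrt{y^2+4}}$, whose numerator is positive since (after squaring, in the only nontrivial range $y<0$) it reduces to the polynomial identity $(u+4)(1+u)^2-u(u+3)^2=4>0$ with $u=y^2$. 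Thus $q$ increases from $0$ and remains nonnegative, which gives $(\log\Phi)''\ge-1$; undoing the scaling finishes the lemma.

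The step I expect to be the real obstacle is the lower bound, precisely because it is sharp: as $y\to-\infty$ one has $\lambda(y)\sim-y-1/y$, so $\lambda(y)\bigl(\lambda(y)+y\bigr)\to1$, and no crude Gaussian tail estimate will do. The point of the reformulation above is that it trades the delicate two-sided asymptotics for a single monotone function $q$ that starts at $0$ with everywhere-positive derivative, reducing everything to an elementary polynomial positivity check.
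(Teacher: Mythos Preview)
Your proof is correct. Both you and the paper reduce to the standard normal and then, for the lower bound, aim at the same scalar inequality (your $\lambda(\lambda+y)\le1$ is exactly the paper's $g(x)=\Phi^2-\phi^2-x\phi\Phi\ge0$ after dividing by $\Phi^2$), but the routes to that inequality are different. The paper shows $g$ is increasing from $0$ at $-\infty$; proving $g'(x)\ge0$ for $x<0$ then reduces to the Mills-type lower tail bound $(1+y^2)(1-\Phi(y))\ge y\phi(y)$, which is in turn established by a second monotonicity argument ($h$ decreasing to $0$). You instead solve the quadratic in $\lambda$ to recast the target as $q(y)=\Phi(y)-\tfrac12\phi(y)\bigl(y+\sqrt{y^2+4}\bigr)\ge0$, and prove $q$ increasing from $0$ in one step via the derivative whose sign reduces to the algebraic identity $(u+4)(1+u)^2-u(u+3)^2=4$. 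Your approach collapses two nested monotonicity arguments into one and makes the sharpness at $y\to-\infty$ transparent; as a bonus, your short $p(y)=\phi(y)+y\Phi(y)\ge0$ argument gives a self-contained proof of the log-concavity side that the paper only cites.
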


\if{
\begin{lemma}\label{sod bound}(Second Order Derivative Bound)~\\
Let $f(x)$ be the CDF of normal distribution $\mathcal{N}(0, \sigma^2)$, then  $0 \geq \log(f(x))'' \geq -\frac{1}{\sigma^2}$ for all $x \in \mathbb{R}$.
\begin{proof}
    $0 \geq \log(f(x))''$ follows from the well-known fact that the CDF $f(x)$ of a normal distribution is log-concave, see e.g. \cite{borzadaran2011log}. We only prove the other inequality $\log(f(x))'' \geq -\frac{1}{\sigma^2}$ here.~\\
    Direct calculation yields $\log(f(x))'=\frac{f'(x)}{f(x)}$, and $\log(f(x))''=\frac{f''(x)f(x)-f'(x)^2}{f(x)^2}$.~\\
    Now we plug in $f(x)=\Phi(\frac{x}{\sigma})$, and $f'(x)=\frac{1}{\sigma}\phi(\frac{x}{\sigma})$ to get:
    $$
    \log(f(x))''=\frac{\frac{1}{\sigma^2}\phi'(\frac{x}{\sigma})\Phi(\frac{x}{\sigma})-\frac{1}{\sigma^2}\phi(\frac{x}{\sigma})^2}{\Phi(\frac{x}{\sigma})^2}.
    $$
    Thus $\log(f(x))'' \geq -\frac{1}{\sigma^2}$ for all $x \in \mathbb{R}$ is equivalent to: 
    $$
    \frac{\phi'(\frac{x}{\sigma})\Phi(\frac{x}{\sigma})-\phi(\frac{x}{\sigma})^2}{\Phi(\frac{x}{\sigma})^2} \geq -1
    $$ for all $x \in \mathbb{R}$. So it is sufficient to prove the result only for standard normal distribution, i.e. to show $\frac{\phi'(x)\Phi(x)-\phi(x)^2}{\Phi(x)^2} \geq -1$  for all $x \in \mathbb{R}$.~\\
    Use the fact $\phi'(x)=-x\phi(x)$ and rearrange the terms to get $\frac{\phi'(x)\Phi(x)-\phi(x)^2}{\Phi(x)^2} \geq -1$ is equivalent to $g(x):=-x\phi(x)\Phi(x)-\phi(x)^2+\Phi(x)^2\geq0$. It is an easy check that when $x\rightarrow -\infty$, $g(x)\rightarrow0$ so $g(x)\geq0$ will be implied by proving $g(x)$ is monotonely increasing, i.e. $g'(x)\geq0$.
    \begin{align*}
        g'(x)&=-\phi(x)\Phi(x)-x\phi'(x)\Phi(x)-x\phi(x)^2-2\phi(x)\phi'(x)+2\Phi(x)\phi(x) \\
        &=-\phi(x)\Phi(x)-x(-x\phi(x))\Phi(x)-x\phi(x)^2-2\phi(x)(-x\phi(x))+2\Phi(x)\phi(x) \\
        &=(1+x^2)\phi(x)\Phi(x)+x\phi(x)^2
    \end{align*}
    Since $\phi(x)>0$ and $1>\Phi(x)>0$, when $x\geq0$ it is obvious that $g'(x)>0$. ~\\
    When $x<0$, let $x=-y$, $y>0$. Note that $\phi(-y)=\phi(y)$ and $\Phi(-y)=1-\Phi(y)$.
    \begin{align*}
        g'(x)\geq0 &\Longleftrightarrow (1+y^2)\phi(-y)\Phi(-y)-y\phi(-y)^2\geq0 \\
        &\Longleftrightarrow (1+y^2)(1-\Phi(y))\geq y\phi(y)\\
        &\Longleftrightarrow \int_{y}^{\infty} \frac{1}{\sqrt{2\pi}} e^{-\frac{t^2}{2}} dt\geq \frac{y}{1+y^2}\frac{1}{\sqrt{2\pi}} e^{-\frac{y^2}{2}} \\
        &\Longleftrightarrow h(y):= \frac{1}{\sqrt{2\pi}}\int_{y}^{\infty} e^{-\frac{t^2}{2}} dt - \frac{y}{1+y^2} \frac{1}{\sqrt{2\pi}}e^{-\frac{y^2}{2}}\geq 0
    \end{align*}
    Clearly $h(0)=\frac{1}{2}$ and $\lim_{x \to \infty} h(y) = 0$. In order to show that $h(y)\geq 0$ for all $y>0$ it suffices to show that $h(y)$ is monotonely decreasing on the positive real line.
    \begin{align*}
        h'(y) &= \frac{1}{\sqrt{2\pi}} (-e^{-\frac{y^2}{2}}-\frac{(1+y^2)-y(2y)}{(1+y^2)^2} e^{-\frac{y^2}{2}} - \frac{y}{1+y^2}(-y)e^{-\frac{y^2}{2}}) \\
        &= \frac{1}{\sqrt{2\pi}}e^{-\frac{y^2}{2}}(-1+\frac{y^2-1}{(1+y^2)^2}+\frac{y^2}{1+y^2}) \\
        &= -\frac{2}{\sqrt{2\pi}}e^{-\frac{y^2}{2}}\frac{1}{(1+y^2)^2}\\
        &<0
    \end{align*}
    From the above calculation we can conclude $h(y)$ is indeed decreasing on the whole real line and $h(y)\geq 0$ for all $y\in \mathbb{R}$ which finishes the proof.
\end{proof}
\end{lemma}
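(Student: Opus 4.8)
The plan is to reduce everything to the standard normal distribution and then to a pair of one-variable calculus estimates. The upper bound $\log(f(x))'' \le 0$ is immediate from the classical fact that the Gaussian CDF is log-concave, so all of the work lies in the lower bound $\log(f(x))'' \ge -1/\sigma^2$. Writing $f(x) = \Phi(x/\sigma)$ and $f'(x) = \sigma^{-1}\phi(x/\sigma)$ and differentiating $\log f$ twice, the desired bound becomes, after clearing the common factor $\sigma^{-2}$ and setting $u = x/\sigma$, the scale-free inequality
\[
\frac{\phi'(u)\Phi(u) - \phi(u)^2}{\Phi(u)^2} \ge -1 \qquad \text{for all } u \in \mathbb{R},
\]
for the \emph{standard} normal $\phi,\Phi$.

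Next I would use $\phi'(u) = -u\phi(u)$, clear the positive denominator $\Phi(u)^2$, and rearrange to the equivalent claim $g(u) \ge 0$, where $g(u) := \Phi(u)^2 - u\phi(u)\Phi(u) - \phi(u)^2$. Since $\phi$ and $\Phi$ both vanish at $-\infty$, one has $g(u) \to 0$ as $u \to -\infty$, so it suffices to prove $g$ is nondecreasing, i.e. $g'(u) \ge 0$. Differentiating and substituting $\phi' = -u\phi$ again collapses the expression to $g'(u) = (1+u^2)\phi(u)\Phi(u) + u\phi(u)^2$, which is manifestly positive whenever $u \ge 0$ because $\phi > 0$ and $\Phi > 0$.

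The genuinely delicate case is $u < 0$, where the term $u\phi(u)^2$ is negative and the naive bound fails. Writing $u = -t$ with $t>0$ and using $\phi(-t)=\phi(t)$, $\Phi(-t)=1-\Phi(t)$, the inequality $g'(-t) \ge 0$ reduces to the Mills-ratio-type bound $(1+t^2)\bigl(1-\Phi(t)\bigr) \ge t\phi(t)$. I would establish this by introducing $h(t) := \bigl(1-\Phi(t)\bigr) - \tfrac{t}{1+t^2}\phi(t)$ and showing $h \ge 0$ on $(0,\infty)$: we have $h(0) = \tfrac12 > 0$ and $h(t)\to 0$ as $t\to\infty$, while a short computation (using $\tfrac{d}{dt}(1-\Phi(t)) = -\phi(t)$ and $\phi'(t) = -t\phi(t)$) yields
\[
h'(t) = -\frac{2}{\sqrt{2\pi}}\,\frac{e^{-t^2/2}}{(1+t^2)^2} < 0,
\]
so $h$ is strictly decreasing from $\tfrac12$ down to $0$ and hence nonnegative. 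Chaining the implications back up — $h \ge 0 \Rightarrow g' \ge 0 \Rightarrow g \ge 0 \Rightarrow$ the pointwise bound on $(\log f)''$ — completes the argument. I expect the handling of the negative tail (the reduction to $h$ and verifying the sign of $h'$) to be the main obstacle, since there one must exploit a cancellation specific to the Gaussian rather than crude bounds.
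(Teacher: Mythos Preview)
Your proposal is correct and follows essentially the same route as the paper's proof: reduce to the standard normal, rewrite the lower bound as $g(u)=\Phi(u)^2-u\phi(u)\Phi(u)-\phi(u)^2\ge 0$, show $g'\ge 0$ via the Mills-ratio-type inequality $(1+t^2)(1-\Phi(t))\ge t\phi(t)$, and verify the latter by computing $h'(t)=-\frac{2}{\sqrt{2\pi}}e^{-t^2/2}(1+t^2)^{-2}<0$. The steps, auxiliary functions, and the key computation of $h'$ all match the paper's argument.
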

}\fi
\begin{corollary}\label{logtalyor}
Let f be as in  \cref{sod bound}, then $\log(f(b))-\log(f(a))\geq \frac{f'(a)}{f(a)}(b-a)-\frac{1}{2\sigma^2}(b-a)^2$.
\begin{proof}
    Apply a Taylor expansion to the function $\log(f(x))$ at $x=a$ and use the lower bound in lemma \ref{sod bound}.
\end{proof}
\end{corollary}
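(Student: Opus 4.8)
The plan is to invoke the second-order Taylor expansion of $\psi(x):=\log(f(x))$ around the point $a$ and then substitute the two-sided bound on $\psi''$ established in \cref{sod bound}. First I would note that $\psi$ is well-defined and $C^\infty$ on all of $\mathbb{R}$: since $f(x)=\Phi(x/\sigma)$ is the CDF of $\mathcal{N}(0,\sigma^2)$, it is smooth and strictly positive, so $\log(f(x))$ inherits smoothness. Hence the hypotheses for Taylor's theorem with Lagrange remainder are satisfied at second order.

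Next, for arbitrary $a,b\in\mathbb{R}$, Taylor's theorem gives a point $\xi$ between $a$ and $b$ with
\[
\psi(b)=\psi(a)+\psi'(a)(b-a)+\tfrac{1}{2}\psi''(\xi)(b-a)^2 .
\]
Here $\psi'(a)=\dfrac{f'(a)}{f(a)}$ by the chain rule. By \cref{sod bound} we have $\psi''(\xi)\geq -\dfrac{1}{\sigma^2}$, and since $(b-a)^2\geq 0$ this yields $\tfrac12\psi''(\xi)(b-a)^2\geq -\dfrac{1}{2\sigma^2}(b-a)^2$. Substituting into the Taylor identity gives exactly
\[
\log(f(b))-\log(f(a))\geq \frac{f'(a)}{f(a)}(b-a)-\frac{1}{2\sigma^2}(b-a)^2 ,
\]
as claimed. (Alternatively, one can avoid the mean-value form entirely: write $\psi(b)-\psi(a)-\psi'(a)(b-a)=\int_a^b\!\int_a^t \psi''(s)\,ds\,dt$ and bound the inner integrand below by $-1/\sigma^2$, which integrates to the same quadratic term — this route also handles $b<a$ transparently.)

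There is essentially no obstacle here; the only point requiring a word of care is the regularity of $\log f$ (positivity and smoothness of the Gaussian CDF), which is immediate, and making sure the remainder-term inequality has the right sign for both $b>a$ and $b<a$ — the double-integral formulation makes the latter automatic since $\psi''(s)\geq-1/\sigma^2$ holds pointwise regardless of the order of $a$ and $b$.
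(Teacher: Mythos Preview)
Your proof is correct and follows exactly the approach the paper sketches: a second-order Taylor expansion of $\log f$ at $a$ combined with the lower bound $(\log f)''\geq -1/\sigma^2$ from \cref{sod bound}. Your added remarks on regularity and on handling both orderings of $a,b$ simply fill in details the paper leaves implicit.
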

\begin{lemma}[\cite{davenport20141}]\label{constants}
Let f be as in lemma \ref{sod bound}, then the two constants: $L_{\alpha, \sigma}:=\underset{|x|\leq \alpha}{\sup} \frac{|f'(x)|}{f(x)(1-f(x))}$ and $\beta_{\alpha, \sigma}:=\underset{|x|\leq \alpha}{\sup} \frac{f(x)(1-f(x))}{f'(x)^2}$ satisfy $L_{\alpha, \sigma}\leq 8 \frac{\alpha + \sigma}{\sigma^2}$ and $\beta_{\alpha, \sigma}\leq \pi \sigma^2 e^{\alpha^2/2\sigma^2}$.
\end{lemma}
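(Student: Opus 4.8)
The plan is to reduce everything to the standard normal law via the substitution $u=x/\sigma$, under which $f(x)=\Phi(u)$ and $f'(x)=\sigma^{-1}\phi(u)$, where $\Phi$ and $\phi$ are the standard normal CDF and PDF. Since $\Phi(-u)\bigl(1-\Phi(-u)\bigr)=\Phi(u)\bigl(1-\Phi(u)\bigr)$ and $\phi(-u)=\phi(u)$, both ratios are even in $u$, so it suffices to bound $\sup_{0\le u\le \alpha/\sigma}\tfrac{\phi(u)}{\Phi(u)(1-\Phi(u))}$ and $\sup_{0\le u\le \alpha/\sigma}\tfrac{\Phi(u)(1-\Phi(u))}{\phi(u)^2}$, after which $L_{\alpha,\sigma}$ is $\sigma^{-1}$ times the first supremum and $\beta_{\alpha,\sigma}$ is $\sigma^2$ times the second. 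The one external ingredient is a pair of classical Mills-ratio estimates for $R(u):=\tfrac{1-\Phi(u)}{\phi(u)}$: an upper bound $R(u)<1/u$ for $u>0$, which is immediate from $1-\Phi(u)=\int_u^\infty\phi(t)\,dt<\int_u^\infty\tfrac{t}{u}\phi(t)\,dt=\tfrac{\phi(u)}{u}$, and which, combined with the identity $R'(u)=uR(u)-1$, shows that $R$ is decreasing on $[0,\infty)$ and hence $R(u)\le R(0)=\sqrt{\pi/2}$ there; and a lower bound $R(u)\ge\bigl(u+\sqrt{2/\pi}\bigr)^{-1}$ on $[0,\infty)$.

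For $\beta_{\alpha,\sigma}$ the argument is short: bound $\Phi(u)\le 1$ and $1-\Phi(u)=R(u)\phi(u)\le\sqrt{\pi/2}\,\phi(u)$, so that $\tfrac{\Phi(u)(1-\Phi(u))}{\phi(u)^2}\le\tfrac{\sqrt{\pi/2}}{\phi(u)}$. On $[0,\alpha/\sigma]$ the density $\phi$ attains its minimum at the endpoint, i.e.\ $\phi(u)\ge\phi(\alpha/\sigma)=\tfrac{1}{\sqrt{2\pi}}e^{-\alpha^2/2\sigma^2}$, and substituting gives $\tfrac{\Phi(u)(1-\Phi(u))}{\phi(u)^2}\le\sqrt{\pi/2}\cdot\sqrt{2\pi}\,e^{\alpha^2/2\sigma^2}=\pi\,e^{\alpha^2/2\sigma^2}$, whence $\beta_{\alpha,\sigma}\le\pi\sigma^2 e^{\alpha^2/2\sigma^2}$.

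For $L_{\alpha,\sigma}$, use $\Phi(u)\ge\tfrac12$ for $u\ge0$ to get $\Phi(u)(1-\Phi(u))\ge\tfrac12(1-\Phi(u))$, hence $\tfrac{\phi(u)}{\Phi(u)(1-\Phi(u))}\le\tfrac{2}{R(u)}$; the lower Mills bound then gives $\tfrac{2}{R(u)}\le 2\bigl(u+\sqrt{2/\pi}\bigr)\le 2(u+1)\le 2(\alpha/\sigma+1)$ on $[0,\alpha/\sigma]$, so $L_{\alpha,\sigma}\le\sigma^{-1}\cdot 2(\alpha/\sigma+1)=2(\alpha+\sigma)/\sigma^2\le 8(\alpha+\sigma)/\sigma^2$. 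The only step that is not a one-line computation is the lower Mills-ratio bound $R(u)\ge\bigl(u+\sqrt{2/\pi}\bigr)^{-1}$, and this is where I expect the main (still elementary) obstacle: I would prove it by setting $F(u)=\bigl(u+\sqrt{2/\pi}\bigr)(1-\Phi(u))-\phi(u)$, noting $F(0)=F(+\infty)=0$, computing $F''(u)=\phi(u)\bigl(\sqrt{2/\pi}\,u-1\bigr)$ so that $F'$ is first decreasing then increasing, and deducing from $F'(0)>0$ and $F'(u)\to0^-$ that $F'$ changes sign exactly once, so that $F$ first increases then decreases and is therefore nonnegative throughout. Alternatively one can sidestep this bound entirely: $\phi/(1-\Phi)$ is increasing (its derivative has the sign of $1-uR(u)>0$), so the supremum over $[0,\alpha/\sigma]$ is attained at $u=\alpha/\sigma$, where the estimate $R(u)\ge u/(1+u^2)$ — exactly the inequality $h(y)\ge0$ shown inside the proof of \cref{sod bound} — gives $\tfrac{\phi(u)}{1-\Phi(u)}\le u+\tfrac1u$, and treating the cases $\alpha/\sigma\ge1$ and $\alpha/\sigma<1$ separately (with crude constants in the latter) still comfortably meets the factor $8$.
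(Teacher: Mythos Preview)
Your proof is correct. Note, however, that the paper does not actually prove this lemma: it is stated with attribution to \cite{davenport20141} and no argument is given in the present paper. So there is nothing to compare against here---you have supplied a self-contained proof where the paper defers to the literature.

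For what it is worth, your argument is clean. The reduction to the standard normal via $u=x/\sigma$ and the evenness observation are the natural first moves. The bound on $\beta_{\alpha,\sigma}$ is essentially optimal in its simplicity: $\Phi\le1$, $1-\Phi\le\sqrt{\pi/2}\,\phi$ via the monotonicity of the Mills ratio, and then $\phi(u)\ge\phi(\alpha/\sigma)$ on the interval. For $L_{\alpha,\sigma}$ you in fact obtain the sharper constant $2$ in place of $8$, since your Mills-ratio lower bound $R(u)\ge(u+\sqrt{2/\pi})^{-1}$ yields $L_{\alpha,\sigma}\le 2(\alpha+\sigma)/\sigma^2$; the paper's stated constant $8$ is simply what the cited reference provides and is not tight. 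Your proof of the lower Mills bound via the auxiliary function $F(u)=(u+\sqrt{2/\pi})(1-\Phi(u))-\phi(u)$ and its second-derivative analysis is standard and correct (this is the Birnbaum--Komatu style argument), and your proposed alternative route through the inequality $h(y)\ge0$ already established inside the proof of \cref{sod bound} is also viable.
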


\begin{lemma}[\cite{davenport20141}, Lemma A.2]\label{HelltoFrob}
    Let $f$ be a differentiable function and let $\mathbf{M}, \mathbf{M}'$ be two matrices satisfying $\|\mathbf{M}\|_{\infty}\leq \alpha$ and $\|\mathbf{M}'\|_{\infty}\leq \alpha$. Then $$
    d_H^2(f(\mathbf{M}),f(\mathbf{M}'))\geq \frac{1}{8\beta_{\alpha}} \frac{\|\mathbf{M}-\mathbf{M}'\|_F^2}{d_1d_2}
    $$
\end{lemma}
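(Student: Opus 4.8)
The plan is to reduce this matrix inequality to a single scalar estimate and then exploit the geometry of the map $t\mapsto(\sqrt{f(t)},\sqrt{1-f(t)})$. Both sides of the claimed bound are arithmetic means over the $d_1d_2$ entries: by definition $d_H^2(f(\mathbf M),f(\mathbf M'))=\frac{1}{d_1d_2}\sum_{i,j}d_H^2\bigl(f(M_{ij}),f(M'_{ij})\bigr)$, while $\frac{1}{d_1d_2}\|\mathbf M-\mathbf M'\|_F^2=\frac{1}{d_1d_2}\sum_{i,j}(M_{ij}-M'_{ij})^2$. Since $\|\mathbf M\|_\infty\le\alpha$ and $\|\mathbf M'\|_\infty\le\alpha$, every pair $(M_{ij},M'_{ij})$ lies in $[-\alpha,\alpha]^2$, so it suffices to prove the pointwise estimate
\[
d_H^2\bigl(f(x),f(y)\bigr)\ \ge\ \frac{1}{8\beta_\alpha}(x-y)^2\qquad\text{for all }x,y\in[-\alpha,\alpha],
\]
where $\beta_\alpha=\beta_{\alpha,\sigma}$ is the constant of \cref{constants} and $f$ is the $\mathcal{N}(0,\sigma^2)$ CDF; the argument uses only that $f$ is a differentiable, strictly increasing CDF with $f(t)\to0$ as $t\to-\infty$ and $f(t)\to1$ as $t\to\infty$.

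For the pointwise bound I would introduce the planar curve $u(t):=(\sqrt{f(t)},\sqrt{1-f(t)})$. Since $f(t)+(1-f(t))=1$, the image of $u$ lies on the unit circle, and as $t$ increases it moves monotonically along the quarter arc joining $(0,1)$ to $(1,0)$, so any sub-arc of it has length at most $\pi/2$. Differentiating,
\[
|u'(t)|^2=\frac{f'(t)^2}{4f(t)}+\frac{f'(t)^2}{4(1-f(t))}=\frac{f'(t)^2}{4\,f(t)(1-f(t))}\ \ge\ \frac{1}{4\beta_\alpha}\qquad(|t|\le\alpha),
\]
directly from the definition $\beta_\alpha=\sup_{|t|\le\alpha}f(t)(1-f(t))/f'(t)^2$. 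Hence the length of the arc traced by $u$ between $y$ and $x$ is at least $\frac{1}{2\sqrt{\beta_\alpha}}|x-y|$. On the other hand $d_H^2(f(x),f(y))=|u(x)-u(y)|^2$ is exactly the squared chord length of that sub-arc, and a circular arc of length $s\le\pi/2$ subtends a chord $2\sin(s/2)\ge\frac{2\sqrt2}{\pi}s$ (concavity of $\sin$ on $[0,\pi/4]$), so
\[
d_H^2\bigl(f(x),f(y)\bigr)\ \ge\ \frac{8}{\pi^2}\Bigl(\frac{|x-y|}{2\sqrt{\beta_\alpha}}\Bigr)^2=\frac{2}{\pi^2\beta_\alpha}(x-y)^2\ \ge\ \frac{1}{8\beta_\alpha}(x-y)^2,
\]
using $\pi^2<16$. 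Averaging over the $d_1d_2$ entries then yields the lemma.

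I expect the main obstacle to be the asymmetry between the two terms in $d_H^2$. A naive mean value theorem argument applied separately to $\sqrt{f(\cdot)}$ and to $\sqrt{1-f(\cdot)}$ produces two distinct intermediate points and forces a uniform lower bound on $f$ and on $1-f$ over $[-\alpha,\alpha]$, which makes the resulting constant deteriorate like $f(-\alpha)$ as $\alpha\to\infty$ and loses any absolute constant. Packaging both square roots into the single curve $u$ circumvents this: the quantity $f'(t)^2/(f(t)(1-f(t)))$ that naturally appears is precisely the one controlled by $\beta_\alpha$, and the fact that $u$ never traverses more than a quarter of the circle (a consequence of the monotonicity of $f$) supplies a clean chord-versus-arc comparison with a dimension- and $\alpha$-independent constant. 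The only routine points to verify carefully are the speed identity for $|u'|^2$ and the bound $2\sin(s/2)\ge\frac{2\sqrt2}{\pi}s$, which is valid on exactly the range $s\le\pi/2$ guaranteed above.
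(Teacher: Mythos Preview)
Your proof is correct. The paper itself does not supply a proof of this lemma---it is quoted from \cite{davenport20141}---so there is no in-paper argument to compare against. Your geometric device of sending $t\mapsto u(t)=(\sqrt{f(t)},\sqrt{1-f(t)})$ onto the unit quarter-circle, bounding the speed below via $|u'(t)|^2=f'(t)^2/\bigl(4f(t)(1-f(t))\bigr)\ge 1/(4\beta_\alpha)$, and then invoking the chord--arc comparison $2\sin(s/2)\ge(2\sqrt2/\pi)s$ for $s\le\pi/2$ is clean and actually delivers the constant $2/\pi^2>1/8$, so the stated bound holds with room to spare. Your diagnosis of the obstacle---that a naive mean-value argument on $\sqrt{f}$ and $\sqrt{1-f}$ separately produces two different intermediate points and loses the absolute constant---is exactly right, and the circle picture is the natural way to fuse the two terms.

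One small caveat: your argument uses strict monotonicity of $f$ (so that $u$ traces the arc without backtracking, making the curve length equal to the circular arc distance). The paper's hypothesis ``$f$ differentiable'' does not say this explicitly, but the only $f$ ever used is the Gaussian CDF, and the original Davenport--Plan statement likewise takes $f$ to be a CDF, so no harm is done.
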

Now we are ready to prove the theorem using techniques from \cite{davenport20141}.

\subsection*{Proof of Theorem \ref{thm:nonlinear recovery}}

\begin{proof}
Recall we defined $\Psi(\widecheck{X}):=\{Y\in \mathbb{R}^{d_1 \times d_2}: \|Y\|_*\leq\alpha \sqrt{r d_1 d_2};\ \|Y\|_{\infty}\leq \alpha ;\ Y_i\in \mathrm{span}\{\mathrm{col}(\widecheck{X})\}, i=1,......, d_2  \}$ in the proof of \cref{thm:recovery two}, and we have $\widecheck{X}\Omega = \Psi(\widecheck{X})$ and $M\in \Omega$. As we assume $d_1\geq d$ and $\widecheck{X}$ is full rank here, this is a one to one mapping between $\Omega$ and $\Psi(\widecheck{X})$. Defining $Y:=\widecheck{X}M \in\Psi(\widecheck{X}) $ as in \cref{thm:recovery two}, we have $Z=\rho(Y+G)$ and proving \cref{thm:nonlinear recovery} is reduced to proving that with high probability, the solution $\Hat{Y}$ to 
\begin{align}
   &\max_{M'}\underset{(i,j):Z_{ij}>0}{\sum}\log\left(\frac{1}{\sqrt{2\pi}\sigma}e^{-\frac{(Z_{ij}-M'_{ij})^2}{2\sigma^2}}\right)+\underset{(i,j):Z_{ij}=0}{\sum}\log(1-f(M'_{ij})) \tag{$P_*$} 
   &\text{subject to} \ M' \in \Psi(\widecheck{X}) \  
\end{align}
satisfies
\begin{equation}
\frac{1}{d_1d_2}\|Y-\Hat{Y}\|_F^2 \leq C_{\alpha, \sigma} \max\left\{2\sqrt{log(d_1d_2)}, 8\right\} \sqrt{\frac{r(d_1+d_2)}{d_1 d_2}}.
\end{equation}

For any $M' \in \Psi(\widecheck{X})$, let us denote the loss function by \begin{align*}\mathcal{L}(M'|Z)&=\underset{(i,j):Z_{ij}>0}{\sum}\log(\frac{1}{\sqrt{2\pi}\sigma}e^{-\frac{(Z_{ij}-M'_{ij})^2}{2\sigma^2}})+\underset{(i,j):Z_{ij}=0}{\sum}\log(1-f(M'_{ij}))\\&=\underset{(i,j)}{\sum}(\mathbbm{1}_{[Z_{ij}>0]} \log(\frac{1}{\sqrt{2\pi}\sigma}e^{-\frac{(Z_{ij}-M'_{ij})^2}{2\sigma^2}})+\mathbbm{1}_{[Z_{ij}=0]} \log(1-f(M'_{ij}))),\end{align*} 
and recall we are interested in the difference between the solution $\Hat{Y}$ to 
\begin{align*}
&\max_{M'} \mathcal{L}(M'|Z) \tag{$P_*$} 
   \text{\quad subject to\quad} \ M' \in \Psi(\widecheck{X})   
\end{align*}
and the ground truth $Y=\widecheck{X}M \in\Psi(\widecheck{X})$.
To that end, we may replace $\mathcal{L}(M'|Z)$ by its centered version 
\begin{align*}
\widebar{\mathcal{L}}(M'|Z)&=\mathcal{L}(M'|Z)-\mathcal{L}(\mathbf{0}|Z)\\&
=\underset{(i,j)}{\sum}(\mathbbm{1}_{[Z_{ij}>0]}\frac{-1}{2\sigma^2}(M_{ij}^{' 2}-2Z_{ij}M'_{ij})+\mathbbm{1}_{[Z_{ij}=0]} \log(\frac{1-f(M'_{ij})}{1-f(0)}))
\end{align*} without affecting the optimizer $\hat{Y}$ of ($P_*$). 

Similar to the proof technique we used for \cref{{thm:recovery two}}, we will rely on the inequalities
\begin{align*}
   0 \leq \widebar{\mathcal{L}}(\hat{Y}|Z)-\widebar{\mathcal{L}}(Y|Z)
   & \leq  \mathbb{E}[\widebar{\mathcal{L}}(\hat{Y}|Z)-\widebar{\mathcal{L}}(Y|Z)]+2\underset{M' \in \Psi(\widecheck{X})}{\sup}|\widebar{\mathcal{L}}(M'|Z)-\mathbb{E}[\widebar{\mathcal{L}}(M'|Z)]|, 
\end{align*}
where the first is due to optimality of $\hat{Y}$ and the second is by using the triangle inequality twice and supremizing over feasible matrices. This implies that 
$$
-\mathbb{E}[\widebar{\mathcal{L}}(\Hat{Y}|Z)-\widebar{\mathcal{L}}(Y|Z)] \leq 2\underset{M' \in \Psi(\widecheck{X})}{\sup}|\widebar{\mathcal{L}}(M'|Z)-\mathbb{E}[\widebar{\mathcal{L}}(M'|Z)]|.$$
Armed with this, we will show (in Step I) that, with high probability on the randomness in $Z$,
$$\underset{M' \in \Psi(\widecheck{X})}{\sup}|\widebar{\mathcal{L}}(M'|Z)-\mathbb{E}[\widebar{\mathcal{L}}(M'|Z)]|\lesssim 
\sqrt{rd_1d_2 (d_1+d_2)\log(d_1 d_2)}$$
and (in Step II) we complete the argument by showing that
$$
\|Y-\Hat{Y}\|_F^2 \lesssim -\mathbb{E}[\widebar{\mathcal{L}}(\Hat{Y}|Z)-\widebar{\mathcal{L}}(Y|Z)].$$
To that end, we first obtain bounds for arbitrary $Y'$ before specializing to $Y'=\hat{Y}$.

 \subsection*{Step I} Since $Z=\rho(Y+G)$ and all the randomness is in $G$, we first control the deviation of $\widebar{\mathcal{L}}(M'|Z)$ from its mean.
~\\
For any positive integer $h > 0$ and a constant $\widetilde{L}_{\alpha, \sigma}$ to be determined later, by Markov’s inequality
\begin{align*}
    &\mathbb{P}(\underset{M' \in \Psi(\widecheck{X})}{\sup}|\widebar{\mathcal{L}}(M'|Z)-\mathbb{E}[\widebar{\mathcal{L}}(M'|Z)]|\geq C \widetilde{L}_{\alpha, \sigma} \alpha \sqrt{r d_1 d_2(d_1+d_2)}) \\
    \leq & \frac{\mathbb{E}[\underset{M' \in \Psi(\widecheck{X})}{\sup}|\widebar{\mathcal{L}}(M'|Z)-\mathbb{E}[\widebar{\mathcal{L}}(M'|Z)]|^h]}{(C \widetilde{L}_{\alpha, \sigma} \alpha \sqrt{r d_1 d_2(d_1+d_2)})^h}.
\end{align*}
By Lemma \ref{symmetrization}
\begin{align*}
   \mathbb{E}[\underset{M' \in \Psi(\widecheck{X})}{\sup}&|\widebar{\mathcal{L}}(M'|Z)-\mathbb{E}[\widebar{\mathcal{L}}(M'|Z)]|^h] \\
   \leq &2^h \mathbb{E}[\underset{M' \in \Psi(\widecheck{X})}{\sup}|\underset{(i,j)}{\sum}\epsilon_{ij}(\mathbbm{1}_{[Z_{ij}>0]}\frac{-1}{2\sigma^2}(M_{ij}^{'2}-2Z_{ij}M'_{ij})+\mathbbm{1}_{[Z_{ij}=0]} \log(\frac{1-f(M'_{ij})}{1-f(0)}))|^h], 
\end{align*}
where the expectation on the left is over $Z$ (equivalently $G$) and the expectation on the right is over $Z$ and the i.i.d. Rademacher random variables  $\epsilon_{i,j}$ (which are  also independent of $Z$).

To bound the right hand side, we will apply the contraction principle \ref{contraction} to the terms of the sum. Since $Z_{ij}=\rho(Y_{ij}+G_{ij})\geq 0$, we have $\mathbb{P}(Z_{ij}>t+\alpha)=\mathbb{P}(Y_{ij}+G_{ij}>t+\alpha)\leq \mathbb{P}(G_{ij}>t)$ for any $t>0$. By Lemma \ref{gaussian infty}, we have $\mathbb{P}(\max_{ij}Z_{ij}\geq \alpha+2\sqrt{\log(d_1d_2)}\sigma)\leq \frac{1}{2\sqrt{2\pi}}\frac{1}{d_1d_2\sqrt{\log(d_1d_2)}}$. When $m\in [-\alpha, \alpha]$, the function $\frac{-1}{2\sigma^2}(m^2-2Z_{ij}m)$ is Lipschitz with constant $\frac{\alpha+Z_{ij}}{\sigma^2}$ -- recall that $\alpha$ is positive and $Z_{ij}$ is non-negative. Thus, with probability at least $1-\frac{1}{2\sqrt{2\pi}}\frac{1}{d_1d_2\sqrt{\log(d_1d_2)}}$, the functions $\frac{-1}{2\sigma^2}(m^2-2Z_{ij}m)$ are Lipschitz with a uniform Lipschitz constant $2\frac{\alpha+\sqrt{\log(d_1d_2)}\sigma}{\sigma^2}$ and attain  0 when $m=0$. Similarly, the function $\log(\frac{1-f(m)}{1-f(0)})$ defined on $[-\alpha,\alpha]$ is Lipschitz with constant less than $L_{\alpha, \sigma}\leq 8 \frac{\alpha + \sigma}{\sigma^2}$ by \ref{constants} and attains 0 when $m=0$. Let $\gamma_{\alpha, \sigma} =\frac{\alpha + \sigma}{\sigma^2}$ and  let $\widetilde{L}_{\alpha, \sigma}=\max\left\{2\frac{\alpha+\sqrt{\log(d_1d_2)}\sigma}{\sigma^2}, 8 \frac{\alpha + \sigma}{\sigma^2}\right\} \leq \max\left\{2\sqrt{\log(d_1d_2)}, 8\right\} \gamma_{\alpha}$.
Thus, we showed that with probability at least $1-\frac{1}{2\sqrt{2\pi}}\frac{1}{d_1d_2\sqrt{\log(d_1d_2)}}$, the functions $\frac{1}{\widetilde{L}_{\alpha, \sigma}} \frac{-1}{2\sigma^2}(m^2-2Z_{ij}m)$ and $\frac{1}{\widetilde{L}_{\alpha, \sigma}} \log(\frac{1-f(m)}{1-f(0)})$ are contractions. Condition on this event for the moment, then by the contraction principle \ref{contraction},
\begin{align*}
\mathbb{E}[\underset{M' \in \Psi(\widecheck{X})}{\sup}&|\widebar{\mathcal{L}}(M'|Z)-\mathbb{E}[\widebar{\mathcal{L}}(M'|Z)]|^h] \\
\leq & 2^h \mathbb{E}[\underset{M' \in \Psi(\widecheck{X})}{\sup}|\underset{(i,j)}{\sum}\epsilon_{ij}(\mathbbm{1}_{[Z_{ij}>0]}\frac{-1}{2\sigma^2}(M_{ij}^{'2}-2Z_{ij}M'_{ij})+\mathbbm{1}_{[Z_{ij}=0]} \log(\frac{1-f(M'_{ij})}{1-f(0)}))|^h] \\
\leq & 2^h (2 \widetilde{L}_{\alpha, \sigma})^h \mathbb{E}[\underset{M' \in \Psi(\widecheck{X})}{\sup}|\underset{(i,j)}{\sum}\epsilon_{ij}M'_{ij}|^h] \\
\leq & (4 \widetilde{L}_{\alpha, \sigma})^h \mathbb{E}[\underset{M' \in \Psi(\widecheck{X})}{\sup}(\|E\|\|M'\|_*))^h] \\
\leq & (4 \widetilde{L}_{\alpha, \sigma})^h (\alpha \sqrt{r d_1 d_2})^h K (\sqrt{2(d_1+d_2)})^h
\end{align*}
In the last inequality, we used the nuclear norm assumption on the space $\Psi(\widecheck{X})$ and Lemma \ref{Rade norm}. Consequently, 
\begin{align*}
    \mathbb{P}(\underset{M' \in \Psi(\widecheck{X})}{\sup}&|\widebar{\mathcal{L}}(M'|Z)-\mathbb{E}[\widebar{\mathcal{L}}(M'|Z)]|\geq C \widetilde{L}_{\alpha, \sigma} \alpha \sqrt{r d_1 d_2(d_1+d_2)}) \\
    \leq & \frac{\mathbb{E}[\underset{M' \in \Psi(\widecheck{X})}{\sup}|\widebar{\mathcal{L}}(M'|Z)-\mathbb{E}[\widebar{\mathcal{L}}(M'|Z)]|^h]}{(C \widetilde{L}_{\alpha, \sigma} \alpha \sqrt{r d_1 d_2(d_1+d_2)})^h} \\
    \leq & \frac{K (4\sqrt{2} \widetilde{L}_{\alpha, \sigma} \alpha \sqrt{r d_1 d_2(d_1+d_2)})^h}{(C \widetilde{L}_{\alpha, \sigma} \alpha \sqrt{r d_1 d_2(d_1+d_2)})^h}.
\end{align*}
Setting $h\geq \log(d_1+d_2)$, the above probability is bounded by $\frac{K}{d_1+d_2}$ provided $C\geq 4\sqrt{2}e$. So, accounting for the event we conditioned on, we now have $$\mathbb{P}(\underset{M' \in \Psi(\widecheck{X})}{\sup}|\widebar{\mathcal{L}}(M'|Z)-\mathbb{E}[\widebar{\mathcal{L}}(M'|Z)]|\geq C \widetilde{L}_{\alpha, \sigma} \alpha \sqrt{r d_1 d_2(d_1+d_2)})\leq \frac{K}{d_1+d_2}+\frac{1}{2\sqrt{2\pi}}\frac{1}{d_1d_2\sqrt{\log(d_1d_2)}}.$$
\subsection*{Step II} The ground truth is $Y \in \Psi(\widecheck{X})$, and for any $Y' \in \Psi(\widecheck{X})$ it holds that
\begin{align*}
   \widebar{\mathcal{L}}(Y'|Z)-\widebar{\mathcal{L}}(Y|Z)
   &= \mathbb{E}[\widebar{\mathcal{L}}(Y'|Z)-\widebar{\mathcal{L}}(Y|Z)]+(\widebar{\mathcal{L}}(Y'|Z)-\mathbb{E}[\widebar{\mathcal{L}}(Y'|Z)])-(\widebar{\mathcal{L}}(Y|Z)-\mathbb{E}[\widebar{\mathcal{L}}(Y|Z)])\\
   & \leq  \mathbb{E}[\widebar{\mathcal{L}}(Y'|Z)-\widebar{\mathcal{L}}(Y|Z)]+2\underset{M' \in \Psi(\widecheck{X})}{\sup}|\widebar{\mathcal{L}}(M'|Z)-\mathbb{E}[\widebar{\mathcal{L}}(M'|Z)]|
\end{align*}

Our remaining goal is then to control $-\mathbb{E}[\widebar{\mathcal{L}}(Y'|Z)-\widebar{\mathcal{L}}(Y|Z)]$, where $\widebar{\mathcal{L}}(Y'|Z)=\mathcal{L}(Y'|Z)-\mathcal{L}(\mathbf{0}|Z)=\underset{(i,j)}{\sum}(\mathbbm{1}_{[Z_{ij}>0]}\frac{-1}{2\sigma^2}(Y_{ij}^{'2}-2Z_{ij}Y'_{ij})+\mathbbm{1}_{[Z_{ij}=0]} \log(\frac{1-f(Y'_{ij})}{1-f(0)}))$. To that end, note that $\mathbb{P}(Z_{ij}>0)=\mathbb{P}(Y_{ij}+G_{ij}>0)=\mathbb{P}(G_{ij}>-Y_{ij})=f(Y_{ij})$. When $Z_{ij}>0$, we have $Z_{ij}=\rho(Y_{ij}+G_{ij})=Y_{ij}+G_{ij}$. Thus $(Y_{ij}^{'2}-2Z_{ij}Y'_{ij})-(Y_{ij}^2-2Z_{ij}Y_{ij})=
(Y'_{ij}-Y_{ij})^2-2(Y'_{ij}-Y_{ij})G_{ij}$. Substituting this into the expression for $-\mathbb{E}[\widebar{\mathcal{L}}(Y'|Z)-\widebar{\mathcal{L}}(Y|Z)]$ to obtain
\begin{align}\label{eq:three_terms}
    -\mathbb{E}[\widebar{\mathcal{L}}(Y'|Z)-\widebar{\mathcal{L}}(Y|Z)] 
    = \underset{(i,j)}{\sum}f(Y_{ij})\frac{1}{2\sigma^2}&(Y'_{ij}-Y_{ij})^2 +\underset{(i,j)}{\sum}\frac{1}{\sigma^2}(Y_{ij}-Y'_{ij})\mathbb{E}[\mathbbm{1}_{[G_{ij}>-Y_{ij}]}G_{ij}]\\ &+\underset{(i,j)}{\sum}(1-f(Y_{ij}))\log(\frac{1-f(Y_{ij})}{1-f(Y'_{ij})}).\nonumber
\end{align}
To simplify this expression, note that 
$
    \mathbb{E}[\mathbbm{1}_{[G_{ij}>-Y_{ij}]}G_{ij}]
    = \frac{\sigma}{\sqrt{2\pi}}e^{-{\frac{Y_{ij}^2}{2\sigma^2}}}$,
so $$\underset{(i,j)}{\sum}\frac{1}{\sigma^2}(Y_{ij}-Y'_{ij})\mathbb{E}[\mathbbm{1}_{[G_{ij}>-Y_{ij}]}G_{ij}]=\underset{(i,j)}{\sum}\frac{1}{\sigma^2}(Y_{ij}-Y'_{ij})\frac{\sigma}{\sqrt{2\pi}}e^{-{\frac{Y_{ij}^2}{2\sigma^2}}}=\underset{(i,j)}{\sum}f'(Y_{ij})(Y_{ij}-Y'_{ij}).$$

Next we deal with the last summand in \eqref{eq:three_terms}, which satisfies 
\begin{align*}
    \underset{(i,j)}{\sum}(1-f(Y_{ij}))&\log(\frac{1-f(Y_{ij})}{1-f(Y'_{ij})}) 
    =\underset{(i,j)}{\sum}D_{KL}(f(Y_{ij}), f(Y'_{ij}))-\underset{(i,j)}{\sum}f(Y_{ij})\log(\frac{f(Y_{ij})}{f(Y'_{ij})})\\
    &=d_1d_2 D_{KL}(f(Y)\|f(Y'))+\underset{(i,j)}{\sum}f(Y_{ij})(\log(f(Y'_{ij}))-\log(f(Y_{ij})))\\
    &\geq d_1d_2 D_{KL}(f(Y)\|f(Y'))+\underset{(i,j)}{\sum}f(Y_{ij})[\frac{f'(Y_{ij})}{f(Y_{ij})}(Y'_{ij}-Y_{ij})-\frac{1}{2\sigma^2}(Y'_{ij}-Y_{ij})^2]
\end{align*}
In the last step we used Corollary \ref{logtalyor}. Thus, using Lemma \ref{HelltoKL} and Lemma \ref{HelltoFrob},
\begin{align*}
    -\mathbb{E}[\widebar{\mathcal{L}}(Y'|Z)-\widebar{\mathcal{L}}(Y|Z)] 
    &\geq \underset{(i,j)}{\sum}f(Y_{ij})\frac{1}{2\sigma^2}(Y'_{ij}-Y_{ij})^2+\underset{(i,j)}{\sum}f'(Y_{ij})(Y_{ij}-Y'_{ij})+d_1d_2 D_{KL}(f(\mathbf{Y})\|f(\mathbf{\Hat{Y}}))\\
    & \qquad \qquad +\underset{(i,j)}{\sum}f(Y_{ij})[\frac{f'(Y_{ij})}{f(Y_{ij})}(Y'_{ij}-Y_{ij})-\frac{1}{2\sigma^2}(Y'_{ij}-Y_{ij})^2]\\
    & =  d_1d_2 D_{KL}(f(Y)\|f(Y'))
    \geq  d_1d_2 d_H^2(f(Y),f(Y'))\\
    & \geq  d_1d_2 \frac{1}{8\beta_{\alpha, \sigma}} \frac{\|Y-Y'\|_F^2}{d_1d_2}
    =  \frac{1}{8\beta_{\alpha, \sigma}}\|Y-Y'\|_F^2.
\end{align*}

Now, we can apply this with the choice $Y'=\Hat{Y}$, the maximizer of ($P_*$) and use the fact that $\widebar{\mathcal{L}}(\Hat{Y}|Z)\geq \widebar{\mathcal{L}}(Y|Z)$ to deduce that
$$
\frac{1}{8\beta_{\alpha, \sigma}}\|Y-\Hat{Y}\|_F^2 \leq -\mathbb{E}[\widebar{\mathcal{L}}(\Hat{Y}|Z)-\widebar{\mathcal{L}}(Y|Z)]\leq 2\underset{M' \in \Psi(\widecheck{X})}{\sup}|\widebar{\mathcal{L}}(M'|Z)-\mathbb{E}[\widebar{\mathcal{L}}(M'|Z)]|
.$$

Thus, we have with probability at least $1-(\frac{K}{d_1+d_2}+\frac{1}{2\sqrt{2\pi}}\frac{1}{d_1d_2\sqrt{\log(d_1d_2)}})$, $\|Y-\Hat{Y}\|_F^2 \leq (8\beta_{\alpha, \sigma})2C \widetilde{L}_{\alpha, \sigma} \alpha \sqrt{r d_1 d_2(d_1+d_2)}) \leq 16C\alpha\beta_{\alpha, \sigma}\gamma_{\alpha, \sigma} max\{2\sqrt{\log(d_1d_2)}, 8\} \sqrt{r d_1 d_2(d_1+d_2)}$, where $C$ is an absolute constant. Denote $C_{\alpha, \sigma}:=16C\alpha\beta_{\alpha, \sigma}\gamma_{\alpha, \sigma}$. We can then rewrite this as
\begin{equation*}
\frac{1}{d_1d_2}\|Y-\Hat{Y}\|_F^2 \leq C_{\alpha, \sigma} \max\{2\sqrt{\log(d_1d_2)}, 8\} \sqrt{\frac{r(d_1+d_2)}{d_1 d_2}} ,
\end{equation*} which concludes our proof.
\end{proof}

\section{Connection to Frobenius norm minimization }\label{sec:con to frob}

Here, we show that solving  (\ref{opt P star}) is equivalent to minimizing a tight convex upper bound on $\frac{1}{2}\|Z-\rho(M')\|_F^2$. This is, for example, analogous to the common practice of maximizing the evidence lower bound (ELBO) \cite{luo2022understanding} as a lower bound on the log-likelihood  for an unknown data distribution.
In our case, consider the natural albeit non-convex optimization problem 
$$
   \underset{M'}{\text{minimize}} \ \frac{1}{2}\|Z-\rho(M')\|_F^2, \ \ \text{subject to} \ M' \in \Psi(\widecheck{X}),$$ 
and note that 
$
    \frac{1}{2}\|Z-\rho(M')\|_F^2 = \underset{(i,j):Z_{ij}>0}{\sum}\frac{1}{2}(\rho(M'_{ij})-Z_{ij})^2+\underset{(i,j):Z_{ij}=0}{\sum}\frac{1}{2}\rho(M'_{ij})^2
$ is non-convex and non-differentiable as $\frac{1}{2}(\rho(x)-c)^2$ is non-convex and non-differentiable for any positive constant $c$. One way around this is to replace $\underset{(i,j):Z_{ij}>0}{\sum}\frac{1}{2}(\rho(M'_{ij})-Z_{ij})^2$ by its tight upper bound$\underset{(i,j):Z_{ij}>0}{\sum}\frac{1}{2}(M'_{ij}-Z_{ij})^2$ 
and $\frac{1}{2}\rho(x)^2$ by its tight upper bound $-\sigma^2 \log(1-f(x))$, where the tightness is established by \cref{asymptotic property} below, and $f(x)$ is the CDF associated with $\mathcal{N}(0, \sigma^2)$. These relaxations yield an equivalent form of (\ref{opt P star}):
$$
   \underset{M'}{\text{minimize}} \  \underset{(i,j):Z_{ij}>0}{\sum}\frac{1}{2}(Z_{ij}-M'_{ij})^2+\underset{(i,j):Z_{ij}=0}{\sum}-\sigma^2\log(1-f(M'_{ij})) 
   \text{ subject to} \ M' \in \Psi(\widecheck{X}).
$$

\begin{lemma}\label{asymptotic property}(Tightness of Relaxation)~\\
Let $f(x)$ be the CDF of normal distribution $\mathcal{N}(0, \sigma^2)$, then the function $-\sigma^2 \log(1-f(x))$ is asymptotically $\frac{1}{2}x^2$ as $x\rightarrow \infty$. 
\begin{proof}
    By repeated application of L'Hopital's Rule, we have:
    \begin{align*}
        &\lim_{x \to \infty}\frac{-\sigma^2 \log(1-f(x))}{\frac{1}{2}x^2}=\lim_{x \to \infty}\frac{\sigma^2f'(x)}{x(1-f(x))}\\
        \stackrel{\text{$y=\frac{x}{\sigma}$}}{\scalebox{3}[1]{=}}&\lim_{y \to \infty}\frac{\phi(y)}{y(1-\Phi(y))}=1\\
    \end{align*}
We used $f(x)=\Phi(\frac{x}{\sigma})$, $f'(x)=\frac{1}{\sigma}\phi(\frac{x}{\sigma})$ and  $\phi'(x)=-x\phi(x)$.
\end{proof}
\end{lemma}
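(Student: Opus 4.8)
\emph{Proof proposal.} The claim is precisely the limit statement
$\displaystyle\lim_{x\to\infty}\frac{-\sigma^2\log(1-f(x))}{\tfrac12 x^2}=1$, i.e., that the convex surrogate $-\sigma^2\log(1-f(x))$ used in the relaxation of Appendix~\ref{sec:con to frob} agrees with $\tfrac12 x^2$ to leading order. The plan is to reduce to the standard normal via the identities $f(x)=\Phi(x/\sigma)$ and $f'(x)=\tfrac1\sigma\phi(x/\sigma)$, and then invoke the classical Mills-ratio asymptotic $1-\Phi(y)\sim\phi(y)/y$ as $y\to\infty$ (which can itself be obtained by one application of L'Hôpital's rule).

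First I would observe that, since $1-f(x)\to 0$ as $x\to\infty$, both the numerator and the denominator tend to $+\infty$, so L'Hôpital's rule applies; differentiating the quotient gives $\frac{\sigma^2 f'(x)/(1-f(x))}{x}=\frac{\sigma^2 f'(x)}{x\,(1-f(x))}$. Substituting $y=x/\sigma$ and using $f'(x)=\tfrac1\sigma\phi(y)$, $1-f(x)=1-\Phi(y)$, $x=\sigma y$, this simplifies to $\frac{\phi(y)}{y\,(1-\Phi(y))}$, so it remains to show this tends to $1$. Passing to the reciprocal $\frac{1-\Phi(y)}{\phi(y)/y}$ — a $0/0$ form — and applying L'Hôpital once more with $\phi'(y)=-y\phi(y)$ yields $\frac{-\phi(y)}{-\phi(y)(y^2+1)/y^2}=\frac{y^2}{y^2+1}\to1$. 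Chaining these limits closes the argument.

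A slightly cleaner alternative avoids the second L'Hôpital step by quoting the asymptotic expansion directly: from $1-\Phi(y)=\frac{\phi(y)}{y}\bigl(1+o(1)\bigr)$ one gets $\log(1-\Phi(y))=-\tfrac{y^2}{2}-\log y-\tfrac12\log(2\pi)+o(1)$, hence $-\sigma^2\log(1-f(x))=\tfrac{x^2}{2}+\sigma^2\log(x/\sigma)+\tfrac{\sigma^2}{2}\log(2\pi)+o(1)$; dividing by $\tfrac12 x^2$ sends every term but the leading one to $0$. I would present whichever version is shorter.

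There is essentially no genuine obstacle here: this is a one-variable calculus fact. The only points requiring a little care are verifying the indeterminate forms before each L'Hôpital application (or, in the asymptotic route, justifying the $o(1)$ remainder in the Mills-ratio expansion) and keeping the scaling factor $\sigma$ straight through the substitution $y=x/\sigma$.
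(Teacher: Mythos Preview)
Your proposal is correct and follows essentially the same route as the paper: apply L'H\^opital once to the $\infty/\infty$ form, substitute $y=x/\sigma$ to reduce to $\phi(y)/(y(1-\Phi(y)))$, and then finish either by a second L'H\^opital step or by the Mills-ratio asymptotic. The paper simply writes ``$=1$'' at the last step where you spell out the reciprocal computation $\frac{y^2}{y^2+1}\to 1$; your version is slightly more explicit but not materially different.
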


\if{The following plots \ref{fig1} \ref{fig2} display the functions $-\sigma^2 \log(1-f(x))$ and $\frac{1}{2}\rho(x)^2$ with $\sigma=1$. 
\begin{figure}[htbp]
    \centering
    \begin{subfigure}[b]{0.45\textwidth}
        \centering
        \includegraphics[width=\textwidth]{fig1.png}
        \caption{Asymptotic Behavior}
        \label{fig1}
    \end{subfigure}
    \hfill
    \begin{subfigure}[b]{0.45\textwidth}
        \centering
        \includegraphics[width=\textwidth]{fig2.png}
        \caption{Close to Zero Behavior}
        \label{fig2}
    \end{subfigure}
\end{figure}

Figure \ref{fig1} is in accordance with lemma \ref{asymptotic property}. Figure \ref{fig2} shows that 
$-\sigma^2 \log(1-f(x))$ decreases to negligible at $x\approx -2\sigma$ which penalizes a little more than $\frac{1}{2}\rho(x)^2$ does according to the noise level.
Since $\frac{1}{2}\rho(x)^2 \leq C[-\sigma^2 \log(1-f(x))]$ won't hold for any positive $C<1$, we're replacing $\frac{1}{2}\rho(x)^2$ by a tight upper bound. }\fi

\if{\begin{lemma}\label{relaxation lemma}
    Let $h$ and $g$ be two functions defined on the same convex bounded domain $\Omega \subset \mathbb{R}^d$ with smooth boundary. Let $h: \mathbb{R}^d \rightarrow \mathbb{R}$ be $m$-strongly convex and $h \in \mathcal{C}^1$. Let $g: \mathbb{R}^d \rightarrow \mathbb{R}$ be convex and $g \in \mathcal{C}^1$.  Assume $\underset{x \in \Omega}{\sup}\|\nabla h - \nabla g\|\leq \epsilon$. Let $x^*$ be the (unique) minimizer to $\underset{x \in \Omega}{\min} \ h(x)$ and $x^{\#}$ be a minimizer of $\underset{x \in \Omega}{\min} \ g(x)$. Then we have $\|x^* - x^{\#}\|\leq \frac{\epsilon}{m}$.
\end{lemma}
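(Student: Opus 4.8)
The plan is to combine the first-order (variational) optimality inequalities for the two constrained problems with the $m$-strong convexity of $h$. First I would record the standard fact that, since $\Omega$ is convex and $h, g \in \mathcal{C}^1$, any minimizer of $h$ over $\Omega$ satisfies $\langle \nabla h(x^*),\, x - x^* \rangle \geq 0$ for all $x \in \Omega$, and likewise $\langle \nabla g(x^{\#}),\, x - x^{\#} \rangle \geq 0$ for all $x \in \Omega$. This follows by considering $\varphi(t) := h(x^* + t(x - x^*))$ on $[0,1]$ --- the segment lies in $\Omega$ by convexity --- which attains its minimum at $t = 0$, so $\varphi'(0^+) = \langle \nabla h(x^*), x - x^* \rangle \geq 0$; the same argument applies to $g$. (Uniqueness of $x^*$ is immediate from strong convexity of $h$ together with convexity of $\Omega$, so ``the minimizer'' is well defined.)

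Next I would take $x = x^{\#}$ in the first inequality and $x = x^*$ in the second, and add them to obtain $\langle \nabla h(x^*) - \nabla g(x^{\#}),\, x^* - x^{\#} \rangle \leq 0$. Splitting $\nabla h(x^*) - \nabla g(x^{\#}) = \big(\nabla h(x^*) - \nabla h(x^{\#})\big) + \big(\nabla h(x^{\#}) - \nabla g(x^{\#})\big)$ and invoking the strong-convexity inequality $\langle \nabla h(x^*) - \nabla h(x^{\#}),\, x^* - x^{\#} \rangle \geq m \|x^* - x^{\#}\|^2$, this rearranges to $m \|x^* - x^{\#}\|^2 \leq \langle \nabla g(x^{\#}) - \nabla h(x^{\#}),\, x^* - x^{\#} \rangle$.

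Finally I would apply the Cauchy--Schwarz inequality and the hypothesis $\sup_{x \in \Omega} \|\nabla h(x) - \nabla g(x)\| \leq \epsilon$ (using that $x^{\#} \in \Omega$) to bound the right-hand side by $\epsilon \|x^* - x^{\#}\|$, so $m \|x^* - x^{\#}\|^2 \leq \epsilon \|x^* - x^{\#}\|$. If $x^* = x^{\#}$ the claim is trivial; otherwise dividing through by $\|x^* - x^{\#}\| > 0$ yields $\|x^* - x^{\#}\| \leq \epsilon/m$, as desired.

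I do not expect a genuine obstacle here: the argument is a short chain of optimality conditions, Cauchy--Schwarz, and the strong-convexity lower bound. The only step that requires a little care is the justification of the variational inequalities at a (possibly boundary) minimizer, which is precisely where convexity of $\Omega$ and the $\mathcal{C}^1$-regularity of $h$ and $g$ enter; the ``smooth boundary'' hypothesis is not actually needed for the proof.
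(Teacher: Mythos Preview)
Your argument is correct. It differs from the paper's proof in a way worth noting. The paper simply sets $\nabla g(x^{\#})=0$ and $\nabla h(x^*)=0$ (``by the first order necessary condition''), then invokes the strong-convexity consequence $m\|x^*-x^{\#}\|\le\|\nabla h(x^*)-\nabla h(x^{\#})\|$ to conclude $m\|x^*-x^{\#}\|\le\|\nabla h(x^{\#})\|\le\epsilon$. That derivation tacitly assumes both minimizers are interior to $\Omega$, since for a constrained problem the first-order condition is the variational inequality you wrote, not vanishing of the gradient. Your route---combining the two variational inequalities, splitting off $\nabla h(x^{\#})-\nabla g(x^{\#})$, and using the monotonicity form $\langle\nabla h(x)-\nabla h(y),x-y\rangle\ge m\|x-y\|^2$ together with Cauchy--Schwarz---handles boundary minimizers without extra assumptions and, as you observe, does not use the smooth-boundary hypothesis at all. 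The paper's version is a couple of lines shorter but only fully justified when the minimizers lie in the interior; your version is the cleaner statement of essentially the same mechanism.
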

\begin{proof}
    By assumption, $\|\nabla h(x^{\#}) - \nabla g(x^{\#})\|\leq \epsilon$ and $\nabla g(x^{\#})=0$ by the first order necessary condition. We can get $\|\nabla h(x^{\#})\|\leq\epsilon$. \\
    From strong convexity of $h$, one can deduce that $m\|x^* - x^{\#}\| \leq \|\nabla h(x^*) - \nabla h(x^{\#})\|$. As $\nabla h(x^*)=0$ we get $m\|x^* - x^{\#}\| \leq \|\nabla h(x^{\#})\| \leq \epsilon$, which finishes the proof.
\end{proof}
~\\
In our setting, take $\Omega=\Psi \subset \mathbb{R}^{d_1d_2}$, $h(M)=\underset{(i,j):Z_{ij}>0}{\sum}\frac{1}{2}(Z_{ij}-M_{ij})^2+\underset{(i,j):Z_{ij}=0}{\sum}-\sigma^2 \log(1-f(M_{ij}))$ and $g(M)=\underset{(i,j):Z_{ij}>0}{\sum}\frac{1}{2}(Z_{ij}-M_{ij})^2+\underset{(i,j):Z_{ij}=0}{\sum}\frac{1}{2}\rho(M_{ij})^2$. Here we abused the use of $\mathbb{R}^{d_1d_2}$ and $\mathbb{R}^{d_1 \times d_2}$. Both constants $\epsilon$ and $m$ will depend on $\Psi$ and noise $G$, i.e. $\alpha$ and $\sigma$.
}\fi

\if{To be specific, if we divide the objective function in ($P_{**}$) by $\sigma^2$ and add a negative sign to switch the minimization problem to maximization problem we'll get
$$
   \underset{M'}{\text{maximize}} \ \underset{(i,j):Z_{ij}>0}{\sum}\log(e^{-\frac{(Z_{ij}-M'_{ij})^2}{2\sigma^2}})+\underset{(i,j):Z_{ij}=0}{\sum}\log(1-f(M'_{ij})), \ 
   \text{subject to} \ M' \in \Psi(\widecheck{X}).  
$$
Since adding a constant to the objective function doesn't change the optimizer, we can add $\frac{1}{\sqrt{2\pi}\sigma}$ into the logarithm and get (\ref{opt P star}) back.
}\fi
\end{document}